\newtheorem{theorem}{Theorem}
\newtheorem*{theorem*}{Theorem}
\newtheorem{corollary}[theorem]{Corollary}
\newtheorem*{corollary*}{Corollary}
\newtheorem{proposition}[theorem]{Proposition}
\newtheorem*{proposition*}{Proposition}
\newtheorem{lemma}[theorem]{Lemma}
\newtheorem*{lemma*}{Lemma}
\newtheorem{definition}[theorem]{Definition}
\newtheorem*{definition*}{Definition}
\crefname{algocf}{alg.}{algs.} 
\Crefname{algocf}{Algorithm}{Algorithms}
\definecolor{bluecite}{HTML}{0875b7}
\newcommand{\multisetopen}[0]{\{\!\!\{} 
\newcommand{\multisetclose}[0]{\}\!\!\}} 
\newcommand{\col}{\text{Col}} 
\newcommand{\changemarker}[1]{#1} 
\title{Logical Expressivity and Explanations for\\ Monotonic GNNs with Scoring Functions}
\author{%
Matthew Morris$^1$\and
David J. Tena Cucala$^2$\and
Bernardo Cuenca Grau$^1$
\affiliations
$^1$Department of Compute Science, University of Oxford\\
$^2$Department of Computer Science, Royal Holloway, University of London
\emails
\{matthew.morris, bernardo.cuenca.grau\}@cs.ox.ac.uk,
david.tenacucala@rhul.ac.uk
}
\begin{document}

\maketitle

\begin{abstract}
Graph neural networks (GNNs) are often used for the task of link prediction: predicting missing binary facts in knowledge graphs (KGs).
To address the lack of explainability of GNNs on KGs, recent works extract Datalog rules from GNNs with provable correspondence guarantees. The extracted rules can be used to explain the GNN's predictions; furthermore, they can help characterise the expressive power of various GNN models. However, these works address only a form of link prediction based on a restricted, low-expressivity graph encoding/decoding method.
In this paper, we consider a more general and popular approach for link prediction where a scoring function is used to decode the GNN output into fact predictions.
We show how GNNs and scoring functions can be adapted to be monotonic, use the monotonicity to extract sound rules for explaining predictions, and leverage existing results about the kind of rules that scoring functions can capture.
We also define procedures for obtaining equivalent Datalog programs for certain classes of monotonic GNNs with scoring functions.
Our experiments show that, on link prediction benchmarks, monotonic GNNs and scoring functions perform well in practice and yield many sound rules.
\end{abstract}
\section{Introduction}

Knowledge graphs (KGs) \cite{DBLP:journals/csur/HoganBCdMGKGNNN21} find use in a variety of applications \cite{yang2017leveraging,hamilton2017inductive,wang2018ripplenet} and are often represented as sets of unary and binary facts,
where vertices correspond to constants, edges to binary facts, and vertex labels to unary facts.
However, KGs are often incomplete -- the field of KG completion aims to solve this by predicting missing facts that hold in its (unknown a-priori) complete version.
A number of solutions have been proposed for KG completion, including embedding-based approaches with distance-based scoring functions \cite{abboud2020boxe}, tensor product scoring functions \cite{yang2015embedding}, 
recurrent neural networks \cite{sadeghian2019drum}, differentiable reasoning \cite{rocktaschel2017end,evans2018learning}, and language models \cite{xie2022discrimination}.

KG completion methods based on graph neural networks (GNNs)~\cite{ioannidis2019recurrent,pflueger2022gnnq,morris2024relational}, including R-GCN \cite{schlichtkrull2018modeling} and its extensions \cite{tian2020ra,cai2019transgcn,vashishth2019composition,yu2021knowledge,shang2019end,liu2021indigo}, are widely popular as GNNs can take advantage of the structural information in the graph.
As a result, GNN models have highly desirable properties such as invariance under graph isomorphisms and the ability to be applied to graphs of any size.

A limitation of GNN models is that applying them to a
KG produces real vector embeddings for the graph's vertices but leaves the edges untouched.
Therefore, to predict new edges (the task known as link prediction), many approaches use a decoder based on a \emph{scoring function}~\cite{wang2017knowledge}---a learnable mapping of binary predicates and pairs of embeddings to a real-valued score---which can be easily applied to the embeddings of each pair of vertices in the graph.
The score can then be compared against a threshold to decide whether a link should be predicted. This is a very popular approach for link prediction with GNNs \cite{zhang2022graph}, including the pioneering R-GCN \cite{schlichtkrull2018modeling}, which uses DistMult \cite{yang2015embedding} as a decoder.

Despite their effectiveness, the predictions of models using GNNs and scoring functions cannot easily be explained, verified, and interpreted \cite{garnelo2019reconciling}.
In contrast, logic-based and neuro-symbolic approaches to KG completion often yield rules which can be used to explain the model's predictions.
Such methods include RuleN \cite{meilicke2018fine}, AnyBURL \cite{meilicke2018fine}, RNNLogic \cite{qu2020rnnlogic}, Neural-LP \cite{yang2017differentiable}, and DRUM \cite{sadeghian2019drum}.
To ensure that the rules truly capture the reasons why the model makes a particular
prediction, it is important to ensure that they are \emph{faithful} to the model,
in the sense that applying the rules to an arbitrary dataset produces the same result as the model.
Thus, there is growing interest in models whose predictions can be captured by faithful rules \cite{cucala2022faithful,wang2023faithful}.

For GNNs specifically, \citeauthor{cucala2021explainable} (\citeyear{cucala2021explainable}) consider GNNs with max aggregation and non-negative weights (along with other restrictions) and show that they can be faithfully characterized by programs of tree-like Datalog rules.
Similarly, monotonic max-sum GNNs \cite{cucala2023correspondence}, which encompass both max and sum aggregation, can be faithfully characterised by means of tree-like Datalog programs with inequalities.
However, these models do not use scoring functions; instead, they rely on an encoding-decoding strategy that can predict new edges only between constants already linked in the input graph.

On the other hand, \cite{huang2023theory} characterize the expressivity of various GNN and scoring function models using a relational Weisfeiler-Leman algorithm.
They also extend the results of \cite{barcelo2020logical} to GNNs that perform link prediction relative to a fixed query relation and source vertex, showing that they can only express logical classifiers from a specific fragment of first-order logic.
However, their work does not provide a means for obtaining faithful rules from the GNNs.
Thus, given the advantages and popularity of GNNs with scoring function decoders, there is an important open question as to how faithful rules may be obtained from them.

\paragraph{Our Contribution}
In this paper, we bound the logical expressivity of monotonic max-sum GNNs with monotonic scoring functions from above using Datalog rules with inequality, and we provide a method to extract faithful rules from these models to explain their predictions.
In particular, we show how many popular scoring functions can be made \emph{monotonically increasing}, provide a means for checking the soundness of Datalog rules that can be used to explain predictions of such models, and prove how existing results about which rule patterns are captured by different scoring functions affect the logical expressivity of the models.
We also define procedures for obtaining finite equivalent programs of \emph{tree-like} Datalog rules for (1) any monotonic GNN with max aggregation and monotonically increasing scoring function, and (2) any monotonic GNN with max-sum aggregation and a \emph{non-negative bilinear} scoring function.
We conduct experiments on the benchmark link prediction datasets of \cite{teru2020inductive} and also use the rule-based LogInfer evaluation framework described in \cite{liu2023revisiting} and \cite{morris2024relational}.
We find that performance either drops slightly or increases when restricting the model to be monotonic, and many sound rules can be extracted.
Therefore, our models maintain performance comparable to standard models while offering the added benefit of explainability.
\section{Background} \label{sec:background}
\paragraph{Datalog}

We fix a signature of $\delta \in \mathbb{N}$ unary predicates and a finite but arbitrary set of binary predicates (also referred to as relations).
We also consider countably infinite sets of variables and constants, disjoint with one another and the set of predicates.
A term is a variable or a constant. An atom is an expression of the form $R(t_1, ..., t_n)$, where each $t_i$ is a term and $R$ is a predicate with nonnegative integer arity $n$. 
A literal is an atom or any inequality $t_1 \not\approx t_2$. A literal is ground if it contains no variables. A fact is a ground atom, and a dataset $D$ is a finite set of facts.
We denote the set of constants of a dataset $D$ with $\texttt{con}(D)$.
A (Datalog) rule is a constant-free expression of the form
\begin{equation}
    B_1 \land ... \land B_n \rightarrow H, 
\label{eq:ruleform}
\end{equation} where  $B_1, ..., B_n$ are its body literals and $H$ is its head atom.
We do not require rules to be safe: there may be variables in the head that do not occur in a body atom.
Furthermore, to avoid redundant rules, we require that each inequality in the body of a rule mentions two different terms.
A (Datalog) program is a finite set of rules.
A substitution $\nu$ maps finitely many variables to constants.
For literal $\alpha$ and a substitution $\nu$ defined on each variable in $\alpha$, $\alpha \nu$ is obtained by replacing each occurrence of a variable $x$ in $\alpha$ with $\nu(x)$. For a dataset $D$ and a fact $B$, we write $D \models B$ if $B \in D$; furthermore, given constants $a_1$ and $a_2$, we write $D \models a_1 \not\approx a_2$ if $a_1 \neq a_2$, for uniformity.
We write $D \models A$ for a set or conjunction of facts $A$ if $D$ contains each fact in $A$.
The immediate consequence operator $T_r$ for a rule $r$ of form  
\eqref{eq:ruleform} maps a dataset $D$ to dataset $T_r(D)$ containing $H \nu$ for each substitution $\nu$ such that $D \models B_i \nu$ for each $i \in \{1, \dots, n\}$.
For a program $\mathcal{P}$, $T_{\mathcal{P}}(D) = \bigcup_{r \in \mathcal{P}} T_r(D)$.
Notice that $T_r$ and $T_{\mathcal{P}}$ are transformations from datasets to datasets.

\paragraph{Graphs}
We consider real-valued vectors and matrices. For $\mathbf{v}$ a vector and ${i > 0}$, $\mathbf{v}[i]$ denotes the $i$-th element of $\mathbf{v}$. For $\mathbf{A}$ a
matrix and ${i ,j> 0}$, $\mathbf{A}[i,j]$ denotes the element in row $i$ and column
$j$ of $\mathbf{A}$ (this notation generalises to higher-order tensors).
A function ${\sigma : \mathbb{R} \to \mathbb{R}}$ is \emph{monotonically increasing} if
${x \leq y}$ implies ${\sigma(x) \leq \sigma(y)}$.
We apply univariate functions to vectors element-wise.

We fix a finite set $\col$ of colours, with $|\col|$ equal to the number of binary predicates in the signature.
A graph $G$ is a tuple $\langle V, \{ E^c \}_{c \in \text{$\col$}}, \lambda \rangle$ where $V$ is a finite vertex set, each $E^c \subseteq V \times V$ is a set of directed edges, and $\lambda$ assigns to each $v \in V$ a vector of dimension $\delta$ (the number of unary predicates in the signature).
We assume that there is a one-to-one correspondence between \changemarker{potential} graph vertices and constants in the signature; we typically represent the vertex corresponding to constant $a$ by $v^a$.
When $\lambda$ is clear from the context, we abbreviate $\lambda(v)$ as $\mathbf{v}$ for each $v \in V$.
Graph $G$ is undirected if $E^c$ is symmetric for each $c \in$ $\col$ and is Boolean if $\mathbf{v}[i] \in \{0, 1\}$ for each $v \in V$ and  $i \in \{ 1, ..., \delta \}$.

\paragraph{Graph Neural Networks}
A graph neural network (GNN) $\mathcal{N}$
with ${L \geq 1}$ layers is a tuple
\begin{align}
\langle \; \{ \mathbf{A}_\ell \}_{\ell}, \; \{ \mathbf{B}^c_\ell\}_{c, \ell}, \; \{ \mathbf{b}_\ell \}_{\ell}, \; \{ \sigma_\ell \}_{\ell}, \; \{ \text{agg}_\ell \}_{\ell} \; \rangle,
\end{align}
for each ${\ell \in \{ 1, \dots, L \}}$ and ${c \in \col}$, matrices
$\mathbf{A}_\ell$ and $\mathbf{B}^c_\ell$ are of dimension
$\delta_\ell \times \delta_{\ell-1}$ with ${\delta_0 = \delta}$,
$\mathbf{b}_\ell$ is a vector of dimension $\delta_\ell$, $\sigma_\ell : \mathbb{R} \to \mathbb{R}$ is an activation function, and $\text{agg}_\ell$ is a function mapping real-valued multisets to real value\changemarker{s}.

Applying $\mathcal{N}$ to a graph induces a sequence of labels $\mathbf{v}_0, \mathbf{v}_1, ..., \mathbf{v}_L$ (sometimes denoted as  $\mathbf{v}_{\lambda_0}, \mathbf{v}_{\lambda_1}, ..., \mathbf{v}_{\lambda_L}$) for each vertex $v$ in the graph as follows. First,
$\mathbf{v}_0$ is the initial labelling of \changemarker{node $v$ in} the input graph; then, for each $1 \leq \ell \leq L$, $\mathbf{v}_\ell$ is defined by the following expression:
\begin{align*} 
\sigma_\ell ( \mathbf{b}_\ell + \mathbf{A}_\ell \mathbf{v}_{\ell - 1} + \sum_{c \in \text{$\col$}} \mathbf{B}_\ell^c ~\text{agg}_\ell (\multisetopen \mathbf{u}_{\ell - 1} ~|~ (v,u) \in E^c \multisetclose) )
\end{align*}
The output of $\mathcal{N}$ is a graph with the same vertices and edges as the input graph, but where each vertex is labelled by $\mathbf{v}_L$. 
Note that this definition is the same as that of \cite{cucala2023correspondence}, except without the classification function.

In this paper, we consider GNNs with max-sum aggregation functions,
which generalise both max and sum aggregation.
For $k \in \mathbb{N}_0 \cup \infty$, a finite real multiset $S$, and $\ell := \text{min}(k, |S|)$, we define $\text{max-$k$-sum}(S) = \sum_{i=1}^\ell s_i$, where $s_1, ..., s_\ell$ are the $\ell$ largest numbers of $S$, including repeated values.
If $\ell = 0$, the sum is defined as $0$.
Note that max-$1$-sum is equivalent to max and max-$\infty$-sum is equivalent to sum.
If a GNN has $\text{agg}_\ell = \text{max-$k_\ell$-sum}$ for all $\ell \in \{ 1, ..., L \}$, where $k_\ell \in \mathbb{N}_0 \cup \infty$, we refer to it as a max-sum GNN (likewise for max GNNs, when every $k_\ell = 1$).

Finally, we say that a max-sum GNN is \emph{monotonic} if for all $\ell \in \{ 1, ..., L \}$ and $c \in \col$, all elements of $\mathbf{A}_\ell, \mathbf{B}_\ell^c$ are non-negative and $\sigma_\ell$ is monotonically increasing with non-negative range.

\paragraph{Scoring Functions}
A \emph{scoring function} $f$ with dimension $d_f \in \mathbb{N}$ and threshold $t_f \in \mathbb{R}$
defines $f(R, \mathbf{h}, \mathbf{t}) \mapsto \mathbb{R}$ for all vectors $\mathbf{h}, \mathbf{t} \in \mathbb{R}^{d_f}$ and relations $R$ in our fixed signature.
For a set of constants $C$ and function $e: C \to \mathbb{R}^{d_f}$ that assigns an embedding to each constant, we define $\mathcal{F}_f(C, e) = \{ R(a,b) ~|~ a,b \in C, R \text{ is a relation}, f(R, e(a), e(b)) \geq t_f \}$, i.e.\ the set of facts over constants of $C$ scored as true by $f$.
GNNs, as well as other methods, can be used to learn embedding functions $e$.

Throughout this paper, we will consider \emph{families} of scoring functions---sets of functions sharing a common form and parameterized by one or more values---where each family comprises all functions generated by varying the parameter values.
For example, RESCAL is the family of scoring functions of the form $f(R, \mathbf{h}, \mathbf{t}) = \mathbf{h}^\top \mathbf{M}_R \mathbf{t}$, where $\mathbf{M}_R$ is a real matrix of dimension $d_f \times d_f$.

\paragraph{Dataset Transformations Through GNNs and Scoring Functions}
A GNN $\mathcal{N}$ with $L$ layers and a scoring function $f$ with dimension $d_f = \delta_L$ can be used to realise a transformation
$T_{\mathcal{N}, f}$ from datasets to datasets --- going forward, when we mention a GNN and scoring function together, we assume $d_f = \delta_L$.
To perform the transformation, the input dataset must be 
first encoded into a graph that can be directly processed by the GNN.
To this end, we adopt the so-called \emph{canonical scheme} \cite{cucala2023correspondence}, which introduces a vertex for each constant in the dataset, and no other vertices;
then, colours in graphs correspond to binary predicates and indices of feature vectors labelling each vertex to unary predicates in the signature.
More precisely, $U_p$ denotes the predicate associated to index $p$, for each $p \in \{1, \dots, \delta\}$,
and $R^c$ denotes the predicate associated to colour $c$.
Then, the canonical encoding $\texttt{enc}(D)$ of a dataset $D$
is the Boolean graph with vertex $v^a$ for each constant $a$ in $D$, a $c$-coloured edge $(v^a,v^b)$ for each
fact $R^c(a,b) \in D$, and a vector $\mathbf{v}^a$ labelling each vertex $v^a$ such that vector component $\mathbf{v}^a[p]$ is set to $1$ if and only if $U_p(a) \in D$, for each $p \in \{1, \dots, \delta\}$.

Given a scoring function $f$, we define the application of the decoder $\texttt{dec}_f$ to a labelled graph $G$ as producing the dataset containing $R^c(a,b)$ for each relation $R^c$ and pair of constants $a,b$ corresponding to vertices in $G$ such that $f(R^c, \mathbf{v}^a, \mathbf{v}^b) \geq t_f$.
Thus, we have $\texttt{dec}_f(\mathcal{N}(\texttt{enc}(D)))$

\begin{align*}
&= \{ R(a, b) ~|~ a,b \in \texttt{con}(D), f(R, \mathbf{v^a_L}, \mathbf{v^b_L}) \geq t_f  \}
\end{align*}

The \emph{canonical} dataset transformation induced by a GNN $\mathcal{N}$ and a scoring function $f$ is then defined as: 
$T_{\mathcal{N}, f}(D) = \texttt{dec}_f(\mathcal{N}(\texttt{enc}(D))) .$
We often abbreviate $\mathcal{N}(\texttt{enc}(D))$ by $\mathcal{N}(D)$.
The GNN and scoring function combination can be trained end-to-end.

\paragraph{Soundness and Completeness}
A Datalog program or rule $\alpha$ is sound for a combination $(\mathcal{N}, f)$ of a GNN $\mathcal{N}$ and a scoring function $f$ if $T_\alpha(D) \subseteq T_{\mathcal{N}, f}(D)$ for each dataset $D$.
Conversely, $\alpha$ is complete for $(\mathcal{N}, f)$ if $T_{\mathcal{N}, f}(D) \subseteq T_\alpha(D)$ for each dataset $D$.
Finally, we say that $\alpha$ is equivalent to $(\mathcal{N}, f)$ if it is both sound and complete for $(\mathcal{N}, f)$.
\section{Sound Rules} \label{sec:sound_rules}
In this section, we propose adaptions to several existing families of scoring functions to make them amenable to rule extraction when used with a monotonic max-sum GNN---specifically, we make them \emph{monotonically increasing}.
Furthermore, we provide a means for checking whether a Datalog rule is sound for a monotonic max-sum GNN and monotonically increasing scoring function.
Finally, we show how existing results about which rule patterns are captured by various scoring function families affect the soundness of rules when they are used with a GNN.

\paragraph{Monotonically Increasing Scoring Functions}
\cite{cucala2023correspondence} show that the ability to extract Datalog rules from plain monotonic max-sum GNNs crucially depends on the fact that these models are \emph{monotonic under injective homomorphisms}, meaning that they are agnostic to the particular constants used, and if the model is applied to an arbitrary dataset, then the output values will never decrease as new facts are added to the input.
We identify a property of scoring functions which ensures that combining such a function with a monotonic max-sum GNN produces a transformation that is still monotonic in the same way.

\begin{definition}
A scoring function $f$ is \emph{monotonically increasing} if for all relations $R$ and vectors $\mathbf{h}, \mathbf{h}', \mathbf{t}, \mathbf{t}' \in \mathbb{R}^{d_f}$ such that $0 \leq \mathbf{h}[i] \leq \mathbf{h}'[i]$ and $0 \leq \mathbf{t}[i] \leq \mathbf{t}'[i]$ $\forall i \in \{ 1, ..., d_f \}$, we have that $f(R, \mathbf{h}, \mathbf{t}) \leq f(R, \mathbf{h}', \mathbf{t}')$.
\end{definition}

This property guarantees that a scoring function will never decrease its output for a pair of embeddings $\mathbf{h}$ and $\mathbf{t}$ if the values of those embeddings increase.
We now consider various scoring function families from the literature and show that some are never monotonically increasing, whereas  others can be restricted to be monotonically increasing.

There are a variety of ``distance-based models'' used for scoring binary facts.
For example, TransE \cite{bordes2013translating} has the scoring function $f(R, \mathbf{h}, \mathbf{t}) = - || \mathbf{h} + \mathbf{r}_R - \mathbf{t} ||_2$, where each $\mathbf{r}_R$ is a vector of size $d_f$ representing a fixed embedding for relation $R$.
Since this involves a sum in which $\mathbf{h}$ and $\mathbf{t}$ have different signs, there is no collection of vectors $\{ \mathbf{r}_R, \ldots \}$ such that the corresponding scoring function is monotonically increasing.
This problem occurs in all the distance-based models, including UM \cite{bordes2012joint}, TransH \cite{wang2014knowledge}, TransR \cite{lin2015learning}, TransD \cite{ji2015knowledge}, TransSparse \cite{ji2016knowledge}, TransM \cite{fan2014transition}, ManifoldE \cite{xiao2016one}, TransF \cite{feng2016knowledge}, TransA \cite{xiao2015transa}, RotatE \cite{sun2018rotate}, and HAKE \cite{zhang2020learning}.
A similar issue occurs with some non-distance-based models including ComplEx \cite{trouillon2016complex}, QuatE \cite{zhang2019quaternion}, and DualE \cite{cao2021dual}, when the computation is considered in terms of real vectors, and is explained in \Cref{app:non_monotonic_increasing_scoring_functions}.

In contrast, the following scoring function families rely on combinations of multiplication, addition, and non-linear functions; they can be modified by restricting their main parameters---usually expressed as vectors, matrices, tensors, or neural network weights---to contain only non-negative values, and by making all non-linear functions in them monotonically increasing and non-negative (e.g., ReLU).

For example, when restricting RESCAL~\cite{nickel2011three} such that each $\mathbf{M}_R$ contains only non-negative numbers, $f$ is monotonically increasing.
This approach yields monotonically increasing scoring functions for the following other families: DistMult \cite{yang2015embedding}, ANALOGY \cite{liu2017analogical}, HolE \cite{nickel2016holographic}, TATEC \cite{garcia2014effective}, TuckER \cite{balazevic2019tucker}, SimplE \cite{kazemi2018simple}, SME \cite{bordes2014semantic}, NTN \cite{socher2013reasoning}, SLM \cite{socher2013reasoning}, MLP \cite{dong2014knowledge}, NAM \cite{liu2016probabilistic}, and ConvE \cite{dettmers2018convolutional}.
These families are fully described in \Cref{app:monotonic_increasing_semantic_matching} and \Cref{app:monotonic_increasing_neural_network}.
This list is non-exhaustive -- there may be other scoring function families that can also be modified in this way or others to yield monotonically increasing scoring functions.

\paragraph{Checking Soundness of Rules}
We provide a means for checking whether a given Datalog rule is sound for a monotonic max-sum GNN and monotonically increasing scoring function.
First, we establish that using a monotonic max-sum GNN and monotonically increasing scoring function yields a monotonic dataset transformation.

\begin{lemma} \label{lemma:monotonic_extension}
Let $\mathcal{N}$ be a monotonic max-sum GNN and $f$ a monotonically increasing scoring function.
Then for all datasets $D, D'$ such that $D \subseteq D'$, $T_{\mathcal{N}, f}(D) \subseteq T_{\mathcal{N}, f}(D')$.
\end{lemma}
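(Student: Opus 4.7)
The plan is to separate the two sources of monotonicity---the GNN and the scoring function---and chain them together. The main work lies in showing that when $D \subseteq D'$, the vertex embeddings produced by $\mathcal{N}$ on $\texttt{enc}(D)$ are dominated component-wise (and are non-negative) by those on $\texttt{enc}(D')$ at every layer. Once this is established, monotonicity of $f$ immediately transfers the subset relation to $T_{\mathcal{N},f}$.

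First, I would observe that $D \subseteq D'$ implies $\texttt{con}(D) \subseteq \texttt{con}(D')$, so every vertex $v^a$ of $\texttt{enc}(D)$ also occurs in $\texttt{enc}(D')$. Moreover, at layer $0$ the inclusion of unary facts gives $0 \le \mathbf{v}^a_0 \le \mathbf{u}^a_0$ component-wise (writing $\mathbf{v}$ for values on $D$ and $\mathbf{u}$ for values on $D'$), and for each colour $c$ the set of $c$-edges out of $v^a$ in $\texttt{enc}(D)$ is contained in the corresponding set in $\texttt{enc}(D')$.

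Next, I would prove by induction on $\ell \in \{0,1,\dots,L\}$ that $0 \le \mathbf{v}^a_\ell \le \mathbf{u}^a_\ell$ component-wise for every $a \in \texttt{con}(D)$. The base case is above. For the step, fix a component $i$, and note that the multiset $S_D$ of values $\mathbf{v}^b_{\ell-1}[i]$ over $c$-neighbours of $v^a$ in $\texttt{enc}(D)$ can be paired with a sub-multiset of $S_{D'}$ where each paired value is no smaller (by the inductive hypothesis), and $S_{D'}$ contains possibly additional non-negative values (from the extra edges and non-negativity of previous-layer embeddings). A short case analysis on whether the truncation parameter $k_\ell$ exceeds the multiset sizes shows that $\text{max-}k_\ell\text{-sum}(S_D) \le \text{max-}k_\ell\text{-sum}(S_{D'})$; this is the one genuine calculation in the proof, and is where I expect the only subtlety, since one must be careful that additional non-negative terms and larger paired terms cannot decrease the sum of the top $k_\ell$ entries. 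Multiplication by the non-negative matrices $\mathbf{A}_\ell$ and $\mathbf{B}_\ell^c$, addition of $\mathbf{b}_\ell$ summed with the inductively dominated $\mathbf{A}_\ell \mathbf{v}^a_{\ell-1}$, and component-wise application of the monotonically increasing $\sigma_\ell$ with non-negative range all preserve both the inequality $\mathbf{v}^a_\ell \le \mathbf{u}^a_\ell$ and non-negativity, closing the induction.

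Finally, taking $\ell = L$ gives $0 \le \mathbf{v}^a_L \le \mathbf{u}^a_L$ and $0 \le \mathbf{v}^b_L \le \mathbf{u}^b_L$ for every $a,b \in \texttt{con}(D)$. Fix any $R(a,b) \in T_{\mathcal{N},f}(D)$: by definition of the decoder, $f(R,\mathbf{v}^a_L,\mathbf{v}^b_L) \ge t_f$, and by monotonicity of $f$ we get $f(R,\mathbf{u}^a_L,\mathbf{u}^b_L) \ge f(R,\mathbf{v}^a_L,\mathbf{v}^b_L) \ge t_f$, so $R(a,b) \in T_{\mathcal{N},f}(D')$. This yields the desired inclusion.
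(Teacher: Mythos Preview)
Your proposal is correct and follows essentially the same approach as the paper: an induction over layers $\ell$ to establish component-wise dominance of vertex embeddings between $\texttt{enc}(D)$ and $\texttt{enc}(D')$, then invoking monotonicity of $f$ at layer $L$ to conclude. You are in fact slightly more careful than the paper in two respects---you explicitly track non-negativity of embeddings throughout (needed to apply the definition of a monotonically increasing scoring function) and you spell out the case analysis for why $\text{max-}k_\ell\text{-sum}$ is monotone under the pairing of multisets---whereas the paper simply asserts the max-$k$-sum inequality from the edge-set inclusion and the inductive hypothesis.
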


The lemma is proved in \Cref{app:proof:lemma:monotonic_extension} and relies on vertex labels yielded by the GNN not decreasing when facts are added to the input dataset, and outputs of $f$ not decreasing as its input vectors increase.
In addition to this, max-sum GNNs and the canonical encoding are agnostic to the particular constants used in the input dataset: they depend only on the structure.
These two results show that monotonic max-sum GNNs with monotonically increasing scoring functions are monotonic under injective homomorphisms.
We now provide the proposition that allows one to check for sound rules, the proof of which is given in \Cref{app:proof:prop:sound_rule_checking} and relies on the above results, as well the fact that any dataset satisfying the body of $r$ contains one of the following datasets $D_\mu$ (up to constant renaming).

\begin{proposition} \label{prop:sound_rule_checking}
Let $\mathcal{N}$ be a monotonic max-sum GNN and $f$ a monotonically increasing scoring function.
Let $r$ be a Datalog rule with a binary head atom $H$, a (possibly empty) set $A$ of body atoms, and a (possibly empty) set $I$ of body inequalities.
For each variable $x$ in $r$, let $a_x, b_x$ be distinct constants uniquely associated with $x$.
Then, $r$ is sound for $(\mathcal{N}, f)$ if and only \changemarker{if} $H \mu \in T_{\mathcal{N}, f}(D_\mu)$ for each substitution $\mu$ mapping the variables of $r$ to constants in the set $\{ a_x ~|~ \text{$x$ is a variable of $r$} \}$ such that $\mu(x) \not= \mu(y)$ for each inequality $x \not\approx y \in I$, and
$D_\mu = A\mu \cup \{ R(b_x, \mu(x)) ~|~ \text{$x$ is a variable occurring in $H$ but not in $A$} \}$ where $R$ is an arbitrary but fixed binary predicate.
\end{proposition}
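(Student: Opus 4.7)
The statement is a biconditional, so the proof naturally splits into two directions, and in both the essential tools are the isomorphism invariance of GNNs on the canonical encoding and the monotonicity of $T_{\mathcal{N},f}$ under dataset inclusion established in Lemma \ref{lemma:monotonic_extension}.

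For the forward direction (soundness implies the test), I would fix any valid $\mu$ from the statement and argue directly that $H\mu \in T_r(D_\mu)$, after which soundness concludes. Three facts suffice: $D_\mu \models A\mu$ because $A\mu \subseteq D_\mu$ by construction; $\mu$ satisfies every inequality in $I$ by hypothesis; and every variable of $H$ is mapped by $\mu$ into $\mathrm{con}(D_\mu)$ — for variables that occur in $A$ this is automatic, while for head-only variables $x$ the auxiliary fact $R(b_x,\mu(x))$ places $\mu(x)$ into $\mathrm{con}(D_\mu)$. Hence $H\mu \in T_r(D_\mu)$, and soundness immediately gives $H\mu \in T_{\mathcal{N},f}(D_\mu)$.

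For the backward direction, fix any dataset $D$ and any $H\nu \in T_r(D)$; the aim is $H\nu \in T_{\mathcal{N},f}(D)$. The substitution $\nu$ induces an equivalence relation $\sim$ on the variables of $r$ by $x\sim y$ iff $\nu(x)=\nu(y)$, and this relation is compatible with $I$ since $\nu$ is. Choosing one representative $x_0$ per $\sim$-class and setting $\mu(x) = a_{x_0}$ yields a substitution matching one of those covered by the hypothesis, so $H\mu \in T_{\mathcal{N},f}(D_\mu)$. The remaining work is to exhibit an injective map $\pi$ on $\mathrm{con}(D_\mu)$ with $\pi(a_{x_0}) = \nu(x_0)$ for each representative and $\pi(b_x) = c_x$ for each head-only variable $x$, chosen so that $\pi(D_\mu) \subseteq D$; once this is in place, isomorphism invariance gives $\pi(H\mu) = H\nu \in T_{\mathcal{N},f}(\pi(D_\mu))$, and Lemma \ref{lemma:monotonic_extension} applied to $\pi(D_\mu) \subseteq D$ promotes this to $H\nu \in T_{\mathcal{N},f}(D)$.

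The main obstacle is the construction of $\pi$ on the auxiliary constants $b_x$ for head-only variables, since one must show that a witness for each edge $R(b_x,\mu(x))$ can be located in $D$; this step is exactly the formal content of the informal claim in the paper that every dataset satisfying the body of $r$ contains one of the $D_\mu$ up to constant renaming. Here one would exploit that $\nu$ sends every head variable into $\mathrm{con}(D)$, combined with the way the auxiliary $R$-edges were attached to head-only variables in $D_\mu$, to identify suitable source constants in $D$. The remaining manipulations — the $\sim$-class construction, the transfer along $\pi$ via isomorphism invariance, and the final monotonicity step — are essentially bookkeeping on top of Lemma \ref{lemma:monotonic_extension}, so I expect the technical weight of the argument to concentrate on establishing this containment cleanly.
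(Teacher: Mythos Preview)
Your forward direction and your construction of $\mu$ via equivalence classes match the paper's argument. The gap is in the backward direction, specifically in your plan to build an injective $\pi$ with $\pi(D_\mu)\subseteq D$.

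For each head-only variable $x$, the set $D_\mu$ contains the auxiliary fact $R(b_x,\mu(x))$, so you need some $c_x\in\mathrm{con}(D)$ with $R(c_x,\nu(x))\in D$. Knowing that $\nu(x)\in\mathrm{con}(D)$ does not supply such a witness: $\nu(x)$ may occur in $D$ only in a unary fact, or only as a first argument of some binary fact, or only under predicates different from the fixed $R$. In any of these cases no $c_x$ exists, $\pi(D_\mu)\not\subseteq D$, and the monotonicity step cannot be applied. Your own final paragraph flags exactly this point as the ``main obstacle'' but the justification you sketch (``exploit that $\nu$ sends every head variable into $\mathrm{con}(D)$'') is not enough to produce the required $R$-edge.

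The paper avoids this by \emph{not} embedding $D_\mu$ into $D$. It keeps the fresh constants $b_x$ and defines $D_\nu = A\nu \cup \{R(b_x,\nu(x))\}$ and $D' = D \cup \{R(b_x,\nu(x))\}$. The obvious map (range of $\nu$ to the $a$-constants, each $b_x$ to itself) is then an isomorphism $D_\nu \to D_\mu$, so isomorphism invariance gives $H\nu\in T_{\mathcal{N},f}(D_\nu)$; since $D_\nu\subseteq D'$, Lemma~\ref{lemma:monotonic_extension} yields $H\nu\in T_{\mathcal{N},f}(D')$. The last step is a separate observation you are missing entirely: the extra facts $R(b_x,\nu(x))$ have the fresh $b_x$ as \emph{source}, and because the GNN aggregates along the reverse edge direction, adding them cannot change the final embedding of any vertex corresponding to a constant of $D$. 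Hence $T_{\mathcal{N},f}(D')$ and $T_{\mathcal{N},f}(D)$ agree on facts over $\mathrm{con}(D)$, which gives $H\nu\in T_{\mathcal{N},f}(D)$. This direction-of-aggregation argument, not an embedding into $D$, is what closes the proof.
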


The inclusion of each $R(b_x, \mu(x))$ in $D_\mu$ accounts for unsafe rules.
A rule can be checked for soundness by enumerating each above-defined substitution $\mu$ and computing $T_{\mathcal{N}, f}(D_\mu)$; then the rule is sound if and only if $H \mu \in T_{\mathcal{N}, f}(D_\mu)$ for each such substitution.
For example, to check the soundness of rule $r: P_1(x,y) \land U_1(x) \rightarrow P_2(x,y)$, we have four possible substitutions. The first two are:
\begin{enumerate}[leftmargin=0.8cm]
    \item $\mu_1 = \{ x \mapsto a_x, y \mapsto a_x \}$ \\
    $D_{\mu_1} = \{ U_1(a_x), P_1(a_x, a_x) \}$
    \item $\mu_2 = \{ x \mapsto a_x, y \mapsto a_y \}$ \\
    $D_{\mu_2} = \{ U_1(a_x), P_1(a_x, a_y)
    \}$
\end{enumerate}

and the others are symmetric to them.
If $H \mu_i \in T_{\mathcal{N}, f}(D_{\mu_i})$ for each substitution, then the soundness of $r$ follows, since any dataset satisfying the body of $r$ contains one of the above datasets, up to constant renaming.

\paragraph{Leveraging Scoring Function Results}
In this section, we consider GNNs (not necessarily monotonic, or even max-sum) and exploit known results \cite{sun2018rotate,abboud2020boxe} about the ability of a scoring function family to model different types of rules (referred to as ``patterns''), to bound the logical expressivity of the GNN and scoring function.
\cite{pavlovic2022expressive} formally define the capturing of rule patterns.
Intuitively, a scoring function family captures a pattern \emph{exactly} if there exists a scoring function in the family such that, for all constant embeddings, if the body of the pattern is satisfied then the head of the pattern is satisfied.
To formalise this using our definition of scoring functions, we first define a rule pattern.

\begin{definition} \label{def:rule_pattern}
A \emph{rule pattern} is a safe Datalog rule containing only binary predicates from a set of meta predicates $\mathcal{M} = \{ M_1, M_2, ... \}$, disjoint from the signature.
A rule $r$ \emph{conforms} to a rule pattern $\rho$ if it can be obtained from $\rho$ by replacing each distinct meta binary predicate with a different binary predicate from the signature.
\end{definition}

We can now formalise the capturing of rules.
In addition to defining ``capturing exactly'', we define ``capturing universally'', which generalises cases such as DistMult being symmetric.

\begin{definition} \label{def:scoring_func_capture}
Let $f$ be a scoring function, $C$ a set of constants, and $e: C \to \mathbb{R}^{d_f}$ an embedding function.
Then the tuple $(f, C, e)$ \emph{captures} a rule $B \rightarrow H$ if for every substitution $\mu$ such that $\mathcal{F}_f(C,e) \models B\mu$, we have $\mathcal{F}_f(C,e) \models H\mu$.

A family of scoring functions $F$ \emph{universally captures} a rule pattern $\rho$ if for all $f \in F$, constants $C$, $e: C \to \mathbb{R}^{d_f}$, and any rule $r$ conforming to $\rho$, we have $(f, C, e)$ captures $r$.

A family of scoring functions $F$ \emph{exactly captures} a rule pattern $\rho$ if for any rule $r$ conforming to $\rho$, there exists $f \in F$ such that for all constants $C$ and $e: C \to \mathbb{R}^{d_f}$, $(f, C, e)$ captures $r$.
\end{definition}

We give a sample of some of the pattern capturing results from the literature.
Consider the following rule patterns:
\begin{enumerate}[leftmargin=0.8cm]
    \item Hierarchy: $M_1(x,y) \rightarrow M_2(x,y)$
    \item Symmetry: $M_1(x,y) \rightarrow M_1(y,x)$
    \item Intersection: $M_1(x,y) \land M_2(x,y) \rightarrow M_3(x,y)$
    \item Inversion: $M_1(x,y) \leftrightarrow M_2(y,x)$
    \item Composition: $M_1(x,y) \land M_2(y,z) \rightarrow M_3(x,z)$
\end{enumerate}

For example, DistMult universally captures symmetry and exactly captures hierarchy  \cite{abboud2020boxe}.
TransE exactly captures inversion, composition, and intersection.

We now analyse the effect that a scoring function tuple capturing a rule has when used with a GNN.
We find that when a rule is captured across all constants and embeddings, certain rules being sound for the GNN and scoring function implies that there are other rules (not necessarily logically entailed by them) which must also be sound.
This limits the class of equivalent programs for the model, and thus its logical expressivity.
This is formalised in the following proposition, which is proved in \Cref{app:proof:prop:rule_captured_effect}.

\begin{proposition} \label{prop:rule_captured_effect}
Let $f$ be a scoring function and $B \rightarrow H$ a safe rule such that for all constants $C$ and $e: C \to \mathbb{R}^{d_f}$, $(f, C, e)$ captures $B \rightarrow H$.
Let $\mathcal{N}$ be a GNN.
Assume that for each atom $B_i$ of $B$, there is an inequality-free rule $A_i \rightarrow B_i$ that is sound for $(\mathcal{N}, f)$.
Then the rule $\bigwedge_i A_i \rightarrow H$ is sound for $(\mathcal{N}, f)$.
\end{proposition}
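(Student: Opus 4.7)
The plan is to chase through the definitions, using nothing beyond the soundness of each sub-rule and the capture hypothesis; no monotonicity of $\mathcal{N}$ or $f$ is needed.

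I would start by fixing an arbitrary dataset $D$ and an arbitrary substitution $\mu$ on the variables of $\bigwedge_i A_i \rightarrow H$ such that $D \models A_i\mu$ for each $i$, with the goal of showing $H\mu \in T_{\mathcal{N}, f}(D)$. Assuming the standard variable-renaming convention so that the head $B_i$ of each sub-rule $A_i \rightarrow B_i$ uses exactly the variables of the corresponding conjunct $B_i$ of $B$ (and body-only variables of different sub-rules are made disjoint), I would invoke the soundness of $A_i \rightarrow B_i$ applied to $D$ and the restriction of $\mu$ to its variables: from $D \models A_i\mu$ I obtain $B_i\mu \in T_{A_i \rightarrow B_i}(D) \subseteq T_{\mathcal{N}, f}(D)$ for every $i$, and therefore $T_{\mathcal{N}, f}(D) \models B\mu$.

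Next, I would unfold the identity $T_{\mathcal{N}, f}(D) = \mathcal{F}_f(\texttt{con}(D), e_D)$, where $e_D$ assigns to each $a \in \texttt{con}(D)$ the final GNN label $\mathbf{v}^a_L$; this is immediate from the definitions of $T_{\mathcal{N}, f}$ and $\mathcal{F}_f$ in the background section. Safety of $B \rightarrow H$ together with the assumption $D \models A_i\mu$ guarantees that $\mu$ maps every variable of $H$ into $\texttt{con}(D)$, so $H\mu$ is a well-formed candidate for membership in $\mathcal{F}_f(\texttt{con}(D), e_D)$. Since $(f, \texttt{con}(D), e_D)$ captures $B \rightarrow H$ by hypothesis, the fact that $\mathcal{F}_f(\texttt{con}(D), e_D) \models B\mu$ yields $\mathcal{F}_f(\texttt{con}(D), e_D) \models H\mu$, i.e.\ $H\mu \in T_{\mathcal{N}, f}(D)$, as required.

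I do not anticipate a real obstacle; the argument is a diagrammatic chase through the definitions. The two points that merit a little care are (i) the variable-renaming across sub-rules so that a single substitution $\mu$ coherently witnesses both the $A_i$'s and the shape of $B$, and (ii) using safety of $B \rightarrow H$ to justify that $H\mu$ is a fact over $\texttt{con}(D)$, since otherwise the capture hypothesis, which only constrains behaviour on pairs drawn from $\texttt{con}(D)$, could not be applied to $\mu$ at all.
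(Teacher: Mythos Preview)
Your proposal is correct and follows essentially the same route as the paper's proof: both fix a dataset $D$ and a witnessing substitution, use soundness of each $A_i \rightarrow B_i$ to place every $B_i\mu$ in $T_{\mathcal{N},f}(D)$, identify $T_{\mathcal{N},f}(D)$ with $\mathcal{F}_f(\texttt{con}(D), e_D)$ for $e_D(a)=\mathbf{v}^a_L$, and then invoke the capture hypothesis to obtain $H\mu$. Your write-up is in fact slightly cleaner, since you make the identity $T_{\mathcal{N},f}(D)=\mathcal{F}_f(\texttt{con}(D),e_D)$ explicit and you flag the variable-renaming and safety subtleties that the paper's proof leaves implicit.
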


As a direct consequence of this, notice that when using a scoring function family that universally captures a pattern, the result applies to any rule conforming to the pattern and scoring function in the family.

\begin{corollary} \label{cor:rule_pattern_conjunction}
Let $F$ be a scoring function family that universally captures a rule pattern $\rho$, $B \rightarrow H$ a rule conforming to $\rho$, $\mathcal{N}$ a GNN, and $f \in F$.
Assume that for each atom $B_i$ of $B$, there is an inequality-free rule $A_i \rightarrow B_i$ sound for $(\mathcal{N}, f)$.
Then the rule $\bigwedge_i A_i \rightarrow H$ is sound for $(\mathcal{N}, f)$.
\end{corollary}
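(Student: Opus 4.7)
The plan is to derive the corollary as an essentially immediate specialisation of Proposition~\ref{prop:rule_captured_effect}. The proposition is stated for a single scoring function $f$ and a single safe rule $B \rightarrow H$ captured by $(f, C, e)$ across all $C$ and $e$, whereas the corollary is phrased in terms of a scoring function family $F$ that universally captures a pattern $\rho$. So the main job is to unpack the definitions and observe that the hypotheses of the proposition are already met.

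First, I would note that since $\rho$ is a rule pattern it is by \Cref{def:rule_pattern} a safe Datalog rule, and any rule $B \rightarrow H$ conforming to $\rho$ is obtained by substituting distinct meta predicates with binary predicates from the signature. Such a substitution preserves safety: every variable appearing in $H$ also appears in $B$, because it did so in $\rho$. Hence $B \rightarrow H$ is safe, which matches the safety requirement in \Cref{prop:rule_captured_effect}.

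Next, I would invoke the universal capturing assumption: because $F$ universally captures $\rho$ and $f \in F$, \Cref{def:scoring_func_capture} gives that for every set of constants $C$ and every embedding function $e : C \to \mathbb{R}^{d_f}$, the tuple $(f, C, e)$ captures any rule conforming to $\rho$, and in particular $B \rightarrow H$. This is precisely the capturing hypothesis required by \Cref{prop:rule_captured_effect}.

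Finally, with the soundness assumption on each $A_i \rightarrow B_i$ carried over unchanged, all hypotheses of \Cref{prop:rule_captured_effect} are satisfied for $\mathcal{N}$, $f$, and $B \rightarrow H$. Applying the proposition directly yields that $\bigwedge_i A_i \rightarrow H$ is sound for $(\mathcal{N}, f)$, as required. I do not anticipate any real obstacle here; the only care needed is to verify safety of $B \rightarrow H$ from its conformance to $\rho$, and to translate the quantification ``for all $f \in F$ and all $(C, e)$'' in the definition of universal capture into the per-$f$, per-$(C,e)$ capture hypothesis used by the proposition.
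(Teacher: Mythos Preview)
Your proposal is correct and follows essentially the same approach as the paper: unpack the definition of universal capture to verify the hypotheses of \Cref{prop:rule_captured_effect}, then apply the proposition directly. You are in fact slightly more careful than the paper in explicitly checking that $B \rightarrow H$ is safe (the paper's proof leaves this implicit), but otherwise the arguments coincide.
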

For example, let $F$ be a scoring function family that universally captures symmetry. Then for any GNN $\mathcal{N}$ and scoring function $f \in F$, if a rule $A \rightarrow P_1(x,y)$ is sound for $(\mathcal{N}, f)$, so is $A \rightarrow P_1(y,x)$.

A similar result applies to patterns captured exactly, except that it shows that there \emph{exists} a scoring function from the family such that when certain rules are sound, others must also be sound, regardless of the parameters of the GNN.

\begin{corollary} \label{cor:rule_pattern_exactly}
Let $F$ be a scoring function family that exactly captures a rule pattern $\rho$, and $B \rightarrow H$ a rule conforming to $\rho$.
Then there exists $f \in F$ such that: for each GNN $\mathcal{N}$ and inequality-free rules $A_i \rightarrow B_i$ sound for $(\mathcal{N}, f)$ for   each $B_i$ in $B$, the rule $\bigwedge_i A_i \rightarrow H$ is sound for $(\mathcal{N}, f)$.
\end{corollary}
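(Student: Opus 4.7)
The plan is to obtain the witness $f \in F$ directly from the definition of exact capture, and then derive the soundness of $\bigwedge_i A_i \rightarrow H$ as an immediate application of Proposition~\ref{prop:rule_captured_effect}. Since $B \rightarrow H$ conforms to $\rho$ by assumption and $F$ exactly captures $\rho$, Definition~\ref{def:scoring_func_capture} supplies some $f \in F$ such that for every set of constants $C$ and every embedding function $e: C \to \mathbb{R}^{d_f}$, the tuple $(f, C, e)$ captures $B \rightarrow H$. I would fix this $f$ once and for all as the witness promised by the corollary.

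With $f$ fixed, I would then take an arbitrary GNN $\mathcal{N}$ and inequality-free rules $A_i \rightarrow B_i$ sound for $(\mathcal{N}, f)$, one for each atom $B_i$ in the body $B$. At this point the two hypotheses of Proposition~\ref{prop:rule_captured_effect} are exactly satisfied: (i) $(f, C, e)$ captures $B \rightarrow H$ for all $C$ and $e$, which was secured in the previous paragraph, and (ii) each $A_i \rightarrow B_i$ is inequality-free and sound for $(\mathcal{N}, f)$. Applying Proposition~\ref{prop:rule_captured_effect} then yields soundness of $\bigwedge_i A_i \rightarrow H$ for $(\mathcal{N}, f)$, completing the argument.

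No real obstacle is expected; the statement is essentially a rewording of Proposition~\ref{prop:rule_captured_effect} under exact capture. The only point requiring care — and the reason the claim cannot be stated as strongly as Corollary~\ref{cor:rule_pattern_conjunction} — is the quantifier order inherent in Definition~\ref{def:scoring_func_capture}: exact capture is existential over $F$, so the conclusion cannot be asserted for every $f \in F$ as in the universal case, but only for the specific $f$ extracted from the definition. Crucially, this $f$ is chosen based solely on the rule pattern and the rule $B \rightarrow H$, independently of $\mathcal{N}$, which is what allows the inner universal quantification over GNNs in the statement.
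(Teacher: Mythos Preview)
Your proposal is correct and matches the paper's proof essentially line for line: extract the witness $f$ from the definition of exact capture applied to the conforming rule $B \rightarrow H$, then invoke Proposition~\ref{prop:rule_captured_effect} for an arbitrary $\mathcal{N}$ and the given sound rules $A_i \rightarrow B_i$. Your added remark about the quantifier order (existential in $f$, then universal in $\mathcal{N}$) is exactly the point the paper makes in the text surrounding the corollary.
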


Note that the above $f \in F$ is the same that witnesses the capturing of the rule in the definition of capturing exactly. 
For example, let $F$ be a scoring function family that exactly captures intersection.
Then, there exists $f \in F$ such that for any GNN $\mathcal{N}$, if rules $A_1 \rightarrow P_1(x,y)$ and $A_2 \rightarrow P_2(x,y)$ are sound for $(\mathcal{N}, f)$, then so is $A_1 \land A_2 \rightarrow P_3(x,y)$.

As an application of the results in this section, \Cref{cor:rule_pattern_conjunction,cor:rule_pattern_exactly} allow one to use rules already known to be sound to derive new sound rules. This avoids both searching for the new rules and using \Cref{prop:sound_rule_checking} to verify their soundness.
\section{Equivalent Programs}
In this section, we provide a means to obtain finite equivalent Datalog programs from GNNs and scoring functions: we do this for arbitrary monotonic max GNNs with monotonically increasing scoring functions, and for monotonic max-sum GNNs with a particular subclass of monotonically increasing scoring functions.
This provides an upper bound on the expressive power of such models. 
To define the rule space, we first provide the definition of a $(p, o)$-tree-like formula $\varphi_x$ for variable $x$.

\begin{definition}
For each root variable $x$:
\begin{enumerate}[leftmargin=0.8cm]
    \item $\top$ is tree-like for $x$;
    \item for each unary predicate $U$, $U(x)$ is tree-like for $x$;
    \item for all formulas $\varphi_1, \varphi_2$ that share no variables and are tree-like for $x$, $\varphi_1 \land \varphi_2$ is tree-like for $x$;
    \item for each variable $x$, binary predicate $R$, and tree-like formulas $\varphi_1, ..., \varphi_n$ for distinct variables $y_1, ..., y_n$ where no $\varphi_i$ contains $x$ and no $\varphi_i$ and $\varphi_j$ with $i \not= j$ share a variable, the following is tree-like for $x$:
    
\begin{equation}
\bigwedge_{i=1}^n \Big(
R(x, y_i) \land \varphi_i
\Big)
\land \bigwedge_{1 \leq i \leq j \leq n}^n y_i \not\approx y_j
\end{equation}
\end{enumerate}
\smallskip

Let $\varphi$ be a tree-like formula and let $x$ be a variable in $\varphi$.
The \emph{fan-out} of $x$ in $\varphi$ is the number of distinct variables $y_i$ for which $R(x, y_i)$ is a conjunct of $\varphi$.
The \emph{depth} of $x$ is the maximal $n$ for which there exist variables $x_0, ..., x_n$ and predicates $R_1, ..., R_n$ such that $x_n = x$ and $R(x_{i-1}, x_i)$ is a conjunct of $\varphi$ for each $1 \leq i \leq n$.
The \emph{depth} of $\varphi$ is the maximum depth of a variable in $\varphi$.

For $p$ and $o$ natural numbers, a tree-like formula $\varphi$ is $(p, o)$-tree-like if, for each variable $x$ in $\varphi$, the depth $i$ of $x$ is at most $p$ and the fan-out of $x$ is at most $o(p - i)$.

\end{definition}

Using this, we define a \emph{tree-like} rule.

\begin{definition}
A Datalog rule is $(p, o)$\emph{-tree-like} if it is of the form $\varphi_x \land \varphi_y \rightarrow R(x, y)$, where $\varphi_x, \varphi_y$ share no variables, $\varphi_x$ is $(p, o)$-tree-like for $x$, and $\varphi_y$ is $(p, o)$-tree-like for $y$.
\end{definition}

\paragraph{Monotonic Max GNNs}
We now provide a theorem describing a Datalog program equivalent to a given monotonic max GNNs and monotonically increasing scoring function.

\begin{theorem} \label{thm:max_rule_shape}
Let $\mathcal{N}$ be a monotonic max GNN and $f$ a monotonically increasing scoring function.
Let $\mathcal{P}_\mathcal{N}$ be the Datalog program containing, up to variable renaming, each $(L, |\text{Col}| \cdot \delta_\mathcal{N})$-tree-like rule without inequalities that is sound for $(\mathcal{N}, f)$, where $\delta_\mathcal{N} = \text{max}(\delta_0, ..., \delta_L)$. Then $T_{\mathcal{N}, f}$ and $\mathcal{P}_\mathcal{N}$ are equivalent.
\end{theorem}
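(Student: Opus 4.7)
The plan is to split the equivalence into its two directions. Soundness, namely $T_{\mathcal{P}_\mathcal{N}}(D) \subseteq T_{\mathcal{N}, f}(D)$ for every dataset $D$, is immediate from the definition of $\mathcal{P}_\mathcal{N}$, since it contains only rules sound for $(\mathcal{N}, f)$. The real content is completeness: given any dataset $D$ and fact $R(a,b) \in T_{\mathcal{N}, f}(D)$, I must exhibit a rule $r \in \mathcal{P}_\mathcal{N}$ that fires on $D$ and derives $R(a,b)$.

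My strategy is a tree-unravelling of the GNN computation. Because $\mathcal{N}$ has non-negative weight matrices, monotonically increasing non-negative activations, and max aggregation, each coordinate $\mathbf{v}_\ell[p]$ at a vertex $v$ is determined by (i) all $\delta_{\ell-1}$ coordinates of $\mathbf{v}_{\ell-1}$ and (ii) for every colour $c$ and every coordinate $q$, a \emph{single} neighbour $u \in E^c$ realising $\max \mathbf{u}_{\ell-1}[q]$. Hence at most $|\text{Col}|\cdot\delta_{\ell-1}$ distinct neighbours per vertex suffice. Applying this recursively from $a$ (resp.~$b$) for $L$ layers yields two trees of depth at most $L$ and fan-out at most $|\text{Col}|\cdot\delta_\mathcal{N}$ at every node, comfortably within the $(L, |\text{Col}|\cdot\delta_\mathcal{N})$-tree-like bound. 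I then translate each tree into a tree-like formula $\varphi_x, \varphi_y$ (disjoint fresh variables, with $U_p(z)$ whenever the original vertex had $\mathbf{v}[p]=1$ and $R^c(z, z')$ for each selected $c$-edge) and let $r := \varphi_x \land \varphi_y \rightarrow R(x, y)$.

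To verify $r \in \mathcal{P}_\mathcal{N}$, I check its soundness using \Cref{prop:sound_rule_checking}. For each admissible substitution $\mu$, the canonical dataset $D_\mu$ is (up to constant renaming) a subgraph of the unravelled tree, possibly enlarged with the dummy $R(b_x, \mu(x))$ edges that account for the unsafe head variables. By \Cref{lemma:monotonic_extension} and the defining property of the chosen witnesses, applying $\mathcal{N}$ to $D_\mu$ produces embeddings at the root vertices that are componentwise at least $\mathbf{v}^a_L$ and $\mathbf{v}^b_L$ as computed on $D$: dropping neighbours dominated by the witnesses cannot decrease any max, and the self-part $\mathbf{A}_\ell \mathbf{v}_{\ell-1}$ has only non-negative coefficients. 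Since $f$ is monotonically increasing and $f(R, \mathbf{v}^a_L, \mathbf{v}^b_L) \geq t_f$ on $D$, the same inequality holds on $D_\mu$, giving $H\mu \in T_{\mathcal{N}, f}(D_\mu)$. Finally, $r$ fires on $D$ under the substitution that sends each tree variable to the constant it was unravelled from, producing $R(a,b)$ and completing completeness.

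The main subtlety lies in the interaction of three simplifications: (a) the unravelling discards non-witness neighbours; (b) several variables of $r$ may collapse to the same constant under a substitution; and (c) the rule must be \emph{inequality-free}. For max aggregation all three are harmless, because max is insensitive both to the presence of dominated elements and to multiset repetitions, so no inequality is needed to keep witnesses separate. This is precisely why the theorem is restricted to max GNNs: with max-sum aggregation, (b) and (c) would fail, and inequalities (as in the companion result for max-sum GNNs) would be unavoidable.
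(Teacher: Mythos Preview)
Your overall strategy matches the paper's, but the construction has a genuine gap: you overlook the effect of the self-matrix $\mathbf{A}_\ell$ on the witness count. You correctly observe in~(i) that $\mathbf{v}_\ell$ depends on $\mathbf{v}_{\ell-1}$, but you then forget this when you claim ``at most $|\text{Col}|\cdot\delta_{\ell-1}$ distinct neighbours per vertex suffice''. The self-term $\mathbf{A}_\ell\mathbf{v}_{\ell-1}$ forces the root's layer-$(L{-}1)$ embedding to be reproduced too, and \emph{that} requires maxima of neighbours' layer-$(L{-}2)$ values, and so on down to layer~$0$. Hence the root needs, among its children, a witness for $\max_{(v,u)\in E^c}\mathbf{u}_{\ell'}[j]$ for \emph{every} $\ell'\in\{0,\dots,L{-}1\}$, every colour $c$, and every $j$---not just for $\ell'=L{-}1$. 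This is precisely why the $(L,|\text{Col}|\cdot\delta_{\mathcal{N}})$-tree-like bound allows fan-out $|\text{Col}|\cdot\delta_{\mathcal{N}}\cdot(L-i)$ at depth~$i$, a factor of $L$ larger at the root than you use. A concrete failure: take $L=2$, one colour, $\delta_0=\delta_1=\delta_2=1$, $A_1=0$, $B_1=2$, $A_2=B_2=1$, biases~$0$, ReLU; let $a$ have neighbours $b,c$ with $U(b)$, and let $c$ have neighbour $d$ with $U(d)$. Then $\mathbf{b}_1=0$, $\mathbf{c}_1=2$, $\mathbf{a}_1=2$, $\mathbf{a}_2=4$. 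Your unravelling picks only the layer-$1$ witness $c$, giving body $R(x,y_1)\wedge R(y_1,z)\wedge U(z)$; on the injective $D_\mu$ one gets $\mathbf{a_x}_1=0$ (the only child has layer-$0$ value~$0$) and $\mathbf{a_x}_2=2<4$, so with any threshold in $(2,4]$ the rule is unsound. The paper's construction adds $b$ as a second child of the root (the layer-$0$ witness), which repairs this.

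Your soundness verification via \Cref{prop:sound_rule_checking} is also loose. The dataset $D_\mu$ is a \emph{quotient} of the unravelled tree when $\mu$ is non-injective, not a subgraph, and \Cref{lemma:monotonic_extension} does not apply because $D_\mu\not\subseteq D$. The sentence ``dropping neighbours dominated by the witnesses cannot decrease any max'' argues in the direction tree $\leq D$, whereas you need tree $\geq D$; equality holds only after the inductive argument that the retained witnesses themselves have undiminished lower-layer embeddings, which in turn requires the extra witnesses you omitted. The paper avoids this detour: it proves directly, by induction on $\ell$, that for \emph{any} $D'$ and $\nu'$ with $D'\models\Gamma\nu'$, the layer-$\ell$ embedding of $v^{\nu'(x)}$ in $D'$ dominates that of $v^{\nu(x)}$ in $D$ componentwise, for every tree vertex $u^x$ of level $\geq\ell$. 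This subsumes the finitely many checks of \Cref{prop:sound_rule_checking} and makes the role of the full witness set transparent.
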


The full proof is given in \Cref{app:proof:thm:max_rule_shape}.
Although the form of the rules extracted is similar to that for plain monotonic max-sum GNNs, the proof of the theorem
differs substantially from that of \cite[Theorem 13]{cucala2023correspondence} due to (1) the inclusion of two variables $x, y$ (with corresponding vertices and constants) in the base case and (2) the use of the monotonicity of the scoring function.

\begin{proof}[Proof sketch]
We show that $T_{\mathcal{N}, f}(D) = T_{P_\mathcal{N}}(D)$ holds for every dataset $D$.
$T_{P_\mathcal{N}}(D) \subseteq T_{\mathcal{N}, f}(D)$ is trivial.
So let $\alpha$ be an arbitrary binary fact in $T_{\mathcal{N}, f}(D)$.
Observe that $\alpha$ can be of the form $R(s, t)$, for distinct constants $s, t$, or of the form $R(s, s)$, for a constant $s$.
In this sketch, we will consider only the $R(s, t)$ case -- the other is similar.
We construct a $(L, |\text{Col}| \cdot \delta_\mathcal{N})$-tree-like rule without inequalities $r$, such that $\alpha \in T_r(D)$ and $r$ is sound for $(\mathcal{N}, f)$.
Together, these will imply that $\alpha \in T_{P_\mathcal{N}}(D)$, as required for the proof.

Let $G = (V, E, \lambda)$ be the canonical encoding of $D$ and let $\lambda_0, ..., \lambda_L$ the vertex labelling functions arising from the application of $\mathcal{N}$ to $G$.
We inductively construct $\Gamma$, a conjunction of two formulas, one of which is tree-like for $x$ and the other for $y$.
During the inductive construction, we also define a substitution $\nu$ from the variables of $\Gamma$ to the set of constants in $D$, and a graph $U$ (without vertex labels) where each vertex in $U$ is of the form $u^x$ for $x$ a variable and each edge has a colour in Col.
We assign to each vertex in $U$ a \emph{level} between $0$ and $L$.

For the base case, we introduce fresh variables $x, y$, define $\nu(x) = s, \nu(y) = t$, introduce vertices $u^x$ and $u_y$, and extend $\Gamma$ with $U(x)$ for each $U(s) \in D$ (likewise for $y$ for $t$).
For the induction step, consider $1 \leq \ell \leq L$ and assume that all vertices of level greater than or equal to $\ell$ have been
already defined.
We then consider each vertex of the form $u^x$ of level $\ell$. Let $t = \nu(x)$.
For each colour $c \in \text{Col}$, each layer $1 \leq \ell' < \ell$, and each dimension $j \in \{ 1, ..., \delta_{\ell' - 1} \}$ for which there exists some $(v^t, w) \in E^c$, let $w$ be a maximum $c$-coloured neighbour of $v^t$ in feature vector index $j$ at layer $\ell'$ (when applying $\mathcal{N}$ to $G$).
Note that $w$ must be of the form $v^s$ for some constant $s$ in $D$.

We then introduce a fresh variable $y$ and define $\nu(y) = s$.
We introduce a vertex $u^y$ of level $\ell - 1$ and an edge $E^c(u^x, u^y)$ to $U$.
Finally, we append to $\Gamma$ the conjunction
$E^c(x, y) ~\land~ ~\bigwedge_{U(s) \in D} U(y)$,
where $U$ stands in for arbitrary unary predicates.
This completes the inductive construction.
We then let $H = R(x,y)$, and define our tree-like rule to be $\Gamma \rightarrow H$.
The construction of $\nu$ means $D \models \Gamma \nu$.
So we have $H \nu \in T_r(D)$, with $H \nu = \alpha$, so $\alpha \in T_r(D)$, as required.

We now show that $r$ is sound for $(\mathcal{N}, f)$.
Let $D'$ be an arbitrary dataset and $\alpha'$ an arbitrary ground atom such that $\alpha' \in T_r(D')$.
Then there exists a substitution $\nu'$ such that $D' \models \Gamma \nu'$.
Let the graph $G = (V', E', \lambda')$ be the canonical encoding of $D'$ and $\lambda_0', ..., \lambda_L'$ the functions labelling the vertices of $G'$ when $\mathcal{N}$ is applied to it.

The following statement is proved by induction, which relies on the monotonicity of $\mathcal{N}$: for each $0 \leq \ell \leq L$ and each vertex $u^x$ of $U$ whose level is at least $\ell$, we have $\mathbf{v}_{\lambda_\ell}[i] \leq \mathbf{p}_{\lambda_\ell'}[i]$ for each $i \in \{ 1, ..., \delta_\ell \}$, where $v = v^{\nu(x)}$ and $p = v^{\nu'(x)}$.

Note $\alpha'$ has the form $R(s', t')$ with $s' = \nu'(x)$ and $t' = \nu'(y)$.
Now let $v = v^s, w = v^t, p = v^{s'}, q = v^{t'}$.
Then $R(s, t) \in T_{\mathcal{N}, f}(D)$ implies $f(R, \mathbf{v}_{\lambda_L}, \mathbf{w}_{\lambda_L}) \geq t_f$.
The above property ensures that $\mathbf{v}_{\lambda_L}[i] \leq \mathbf{p}_{\lambda_L'}[i]$ and $\mathbf{w}_{\lambda_L}[i] \leq \mathbf{q}_{\lambda_L'}[i]$ for all $i \in \{ 1, ..., \delta_L \}$.
Thus, since $f$ is monotonically increasing, we have $f(R, \mathbf{v}_{\lambda_L}, \mathbf{w}_{\lambda_L}) \leq f(R, \mathbf{p}_{\lambda_L'}, \mathbf{q}_{\lambda_L'})$.
So $f(R, \mathbf{p}_{\lambda_L'}, \mathbf{q}_{\lambda_L'}) \geq t_f$, which implies that $R(s', t') \in T_{\mathcal{N}, f}(D')$.
\end{proof}

The program $\mathcal{P}_\mathcal{N}$ is computable in principle by
enumerating all rules in the finite class of tree-like rules mentioned in the theorem, and then filtering out rules that do not pass the soundness check in \Cref{prop:sound_rule_checking}. 
This brute-force procedure can be optimised in practice: for example, by excluding all rules that subsume a known sound rule, since they are redundant; or by exploiting results about the expressiveness of scoring functions to avoid unnecessary soundness checks, as discussed at the end of \Cref{sec:sound_rules}.

Note that as a consequence of this theorem, rules conforming to common rule patterns (e.g.\ symmetry, inversion, hierarchy, composition, intersection, triangle, diamond \cite{liu2023revisiting}, fork, and cup \cite{morris2024relational}) can only be sound if they are subsumed by a sound tree-like rule.
For example, if a composition rule of the form $R_1(x,z) \land R_2(z,y) \rightarrow R_3(x,y)$ is sound for a monotonic max GNN and a monotonic scoring function, there must exist another sound tree-like rule that subsumes it, such as $R_1(x,z_1) \land R_2(z_2,y) \rightarrow R_3(x,y)$.

\paragraph{Monotonic Max-Sum GNNs}
When seeking to obtain equivalent programs for the larger class of monotonic max-sum GNNs instead of max GNNs, we can no longer consider all monotonically increasing scoring functions.
This is because there are a potentially unbounded number of neighbours that contribute to the computation during the execution of the max-sum GNN, so if we tried to follow the same approach as for monotonic max GNNs, we would have to consider an infinite number of tree-like rules.
To address this, we prove that an argument of the same type as that of \cite{cucala2023correspondence} for max-sum GNNs can also be made when a scoring function is considered. We show that the number of neighbours considered in each max-sum aggregation step can be restricted to a finite number without changing the output of the GNN and scoring function on any dataset.
This requires, however, an additional restriction on the scoring function: namely, 
it must be \emph{non-negative bilinear}. 
This property ensures that there is a 
scalar value $\alpha$ such that once a vertex feature output by a GNN is greater than it, increasing any input to the scoring function will not result in a change of the output of $T_{\mathcal{N},f}$.

\begin{definition}
A \emph{bilinear scoring function} $f$ is one such that for all relations $R$ and vectors $\mathbf{h}, \mathbf{t} \in \mathbb{R}^{d_f}$, we have $f(R, \mathbf{h}, \mathbf{t}) = \mathbf{h}^\top \mathbf{M}_R \mathbf{t}$, where $\mathbf{M}_R \in \mathbb{R}^{d_f \times d_f}$ is a matrix conditioned on $R$.
We say that $f$ is \emph{non-negative} if each $\mathbf{M}_R$ contains only non-negative entries.
\end{definition}

Notice that RESCAL, DistMult, and ANALOGY are bilinear scoring functions.
Also, TuckER can be rewritten as one, as shown in \Cref{app:monotonic_increasing_semantic_matching}.
In addition to restricting the scoring function, we also add the restriction that the activations functions $\{\sigma_\ell\}_{1 \leq\ \ell \leq L}$ of $\mathcal{N}$ must be unbounded.

The essence of our approach is to compute, for each layer $\ell$ of the GNN, a non-negative integer $\mathcal{C}_\ell$, called a \emph{capacity}, such that replacing the number $k_{\ell}$ in the aggregation function by $\mathcal{C}_{\ell}$ does not change the output of the model on any dataset. 
We compute the capacities $\mathcal{C}_\ell$
using a more complex variant of 
the algorithm of \cite[Algorithm 1]{cucala2023correspondence}, where the role
of their GNN classification threshold is
replaced by $\alpha$, defined as follows. In the following, the set $\mathcal{X}_{\ell, i}$ consists of all real numbers that can occur in label vector index $i$ at layer $\ell$, when a max-sum GNN $\mathcal{N}$ is applied to a dataset;
these sets can be computed as shown in
\cite[Definition 7]{cucala2023correspondence}.

\begin{definition} \label{def:bilinear_capacity}
Consider a monotonic max-sum GNN $\mathcal{N}$ and a non-negative bilinear scoring function $f$.
For each binary relation $R$ in the signature, where $\mathbf{M}_R$ is its relation matrix in $f$, we define: $\alpha_R := 1$ if all elements of $\mathbf{M}_R$ are $0$ or $\bigcup_i \mathcal{X}_{L,i} = \{ 0 \}$.
Otherwise, we define $\alpha_R := $ the least natural number such that $\alpha_R \cdot w \cdot \epsilon \geq t_f$, where $w$ is the least non-zero element of $\mathbf{M}_R$ and $\epsilon$ is the least non-zero element of $\bigcup_i \mathcal{X}_{L,i}$.
Then, we define $\alpha := \text{max}\{ \alpha_R ~|~ \text{for each binary relation $R$ in the signature} \}$.
\end{definition}

The following theorem then establishes that replacing the aggregation number $k_{\ell}$ by the capacity $\mathcal{C}_{\ell}$ leaves the output of the GNN and scoring function unchanged on any dataset.

\begin{theorem} \label{thm:capacity_maintains_equality}
Let $\mathcal{N}$ be a monotonic max-sum GNN with unbounded activation functions and $f$ a non-negative bilinear scoring function.
Let $\mathcal{N}'$ be the GNN obtained from $\mathcal{N}$ by replacing $k_\ell$ with the capacity $\mathcal{C}_\ell$ for each $\ell \in \{1, ..., L\}$.
Then for each dataset $D$, it holds that $T_{\mathcal{N}, f}(D) = T_{\mathcal{N}', f}(D)$.
\end{theorem}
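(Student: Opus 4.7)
The plan is to adapt the capacity-propagation strategy of \cite[Algorithm 1]{cucala2023correspondence}, where the role of their per-vertex classification threshold is taken over by the bilinear saturation constant $\alpha$ from \Cref{def:bilinear_capacity}. Since the capacities only truncate the aggregation (so $\mathcal{C}_\ell \leq k_\ell$), the non-negativity of $\mathbf{A}_\ell$ and $\mathbf{B}_\ell^c$ together with the monotonicity of each $\sigma_\ell$ give, by a layerwise induction, that every coordinate of every label produced by $\mathcal{N}'$ is bounded above by the corresponding coordinate produced by $\mathcal{N}$; monotonicity of $f$ then dispatches the inclusion $T_{\mathcal{N}', f}(D) \subseteq T_{\mathcal{N}, f}(D)$. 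The substantive direction is the reverse inclusion.

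Its cornerstone is a saturation property for $f$ at $\alpha$: for each relation $R$ outside the two degenerate cases of \Cref{def:bilinear_capacity}, whenever there exist indices $i, j$ with $\mathbf{M}_R[i, j] \geq w$, $\mathbf{h}[i] \geq \alpha$, and $\mathbf{t}[j] > 0$ (equivalently $\mathbf{t}[j] \geq \epsilon$, since label values lie in $\{0\} \cup [\epsilon, \infty)$), we get $f(R, \mathbf{h}, \mathbf{t}) \geq \mathbf{h}[i] \cdot \mathbf{M}_R[i, j] \cdot \mathbf{t}[j] \geq \alpha w \epsilon \geq t_f$, and symmetrically with $\mathbf{h}$ and $\mathbf{t}$ swapped. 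For the remaining relations, $f$ is identically zero or the layer-$L$ labels are identically zero, so the binary decision $f \geq t_f$ is determined independently of the labels and the capacity choice is irrelevant.

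With this in hand, I would propagate saturation thresholds $\tau_L = \alpha, \tau_{L-1}, \dots, \tau_0$ backwards through the layers, essentially rerunning \cite[Algorithm 1]{cucala2023correspondence} but with $\alpha$ in place of their classification threshold. Each $\tau_{\ell-1}$ is computed from $\tau_\ell$ using the non-negativity of the weights, the monotonicity and unboundedness of $\sigma_\ell$, and the fact that intermediate values are drawn from the finite sets $\mathcal{X}_{\ell, i}$; then $\mathcal{C}_\ell$ is chosen as the smallest natural number such that taking just the $\mathcal{C}_\ell$ largest neighbour contributions already guarantees, in any coordinate where the full max-$k_\ell$-sum aggregation would drive $\sigma_\ell$ past $\tau_\ell$, that the truncated aggregation does the same. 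A straightforward layerwise induction then establishes the key invariant: for every vertex $v$, layer $\ell$, and coordinate $i$, either the $\mathcal{N}$-label and the $\mathcal{N}'$-label at $(v, \ell, i)$ coincide exactly, or both are at least $\tau_\ell$.

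Evaluating this invariant at $\ell = L$ and combining it with the saturation property yields $f(R, \mathbf{v}^a_L, \mathbf{v}^b_L) \geq t_f$ if and only if $f(R, (\mathbf{v}')^a_L, (\mathbf{v}')^b_L) \geq t_f$ for every $R$ and constants $a, b$: in coordinates where $\mathcal{N}$ crosses $\alpha$, so does $\mathcal{N}'$, triggering saturation on both sides, and in coordinates where $\mathcal{N}$ stays below $\alpha$, the two models agree exactly. The main obstacle I anticipate is the bilinear coupling in $f$: whereas \cite{cucala2023correspondence} needed saturation only at a single vertex to determine the classification, here the scoring decision couples two vertices through $\mathbf{M}_R$. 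This forces $\alpha$ to be a single global maximum over all relations and forces the capacity propagation to be conservative enough that both endpoints of any candidate pair are simultaneously certified as saturated in the relevant coordinates.
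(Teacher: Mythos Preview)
Your plan is correct and follows essentially the paper's route: the easy inclusion via monotonicity, the backwards-propagated layerwise invariant (labels coincide or both exceed the threshold $\alpha_\ell$, which is exactly what the paper imports as \Cref{prop:max_sum_lower_bound}), and the bilinear saturation property are precisely the paper's ingredients. The one place your sketch is thinner than the paper is the final combination: a coordinate of $(\mathbf{v}')^a_L$ reaching $\alpha$ does not by itself ``trigger saturation''---you also need a partner index $j$ with $\mathbf{M}_R[i,j]>0$ and $(\mathbf{v}')^b_L[j]>0$; when no such partner exists, the paper does a term-by-term comparison showing each summand $\mathbf{v}^a_L[i]\,\mathbf{M}_R[i,j]\,\mathbf{v}^b_L[j]$ equals its primed counterpart (using your invariant once more: $(\mathbf{v}')^b_L[j]=0$ forces $\mathbf{v}^b_L[j]=0$, since otherwise both would be $\ge\alpha>0$). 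This residual case is well within reach of your ingredients and is the bilinear-coupling subtlety you already flagged; it just needs to be spelled out.
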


The proof is given in \Cref{app:proof:thm:capacity_maintains_equality} and depends on the structure of non-negative bilinear scoring functions.
Using this, we prove the main result of this section, which is analogous to \Cref{thm:max_rule_shape}.
The theorem shows that an equivalent program can be constructed for any monotonic max-sum GNN $\mathcal{N}$ and non-negative bilinear scoring function $f$ using rules taken from the finite space of all $(L, |\text{Col}| \cdot \delta_\mathcal{N} \cdot \mathcal{C}_{\mathcal{N}, f})$-tree-like rules, where $\mathcal{C}_{\mathcal{N}, f} = \text{max}\{\mathcal{C}_1, ..., \mathcal{C}_L\}$.
The proof is given in \Cref{app:proof:thm:max_sum_rule_shape}.

\begin{theorem} \label{thm:max_sum_rule_shape}
Let $\mathcal{N}$ be a monotonic max-sum GNN with unbounded activation functions and $f$ a non-negative bilinear scoring function.
Let $\mathcal{P}_\mathcal{N}$ be the Datalog program containing, up to variable renaming, each $(L, |\text{Col}| \cdot \delta_\mathcal{N} \cdot \mathcal{C}_{\mathcal{N}, f})$-tree-like rule that is sound for $(\mathcal{N}, f)$, where $\delta_\mathcal{N} = \text{max}(\delta_0, ..., \delta_L)$. Then $T_{\mathcal{N}, f}$ and $\mathcal{P}_\mathcal{N}$ are equivalent.
\end{theorem}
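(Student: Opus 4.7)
}
The plan is to first reduce to the bounded-capacity setting via Theorem \ref{thm:capacity_maintains_equality} and then adapt the proof of Theorem \ref{thm:max_rule_shape}, inserting an extra $\mathcal{C}_{\mathcal{N},f}$ factor into the fan-out to accommodate the multiple top contributors required by each max-$\mathcal{C}_\ell$-sum aggregation (rather than the single top contributor used in the max case). By Theorem \ref{thm:capacity_maintains_equality}, there is a max-sum GNN $\mathcal{N}'$, obtained from $\mathcal{N}$ by replacing each $k_\ell$ with $\mathcal{C}_\ell$, that satisfies $T_{\mathcal{N}',f} = T_{\mathcal{N},f}$; it therefore suffices to prove equivalence of $T_{\mathcal{N}',f}$ with $\mathcal{P}_\mathcal{N}$. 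The soundness direction $T_{\mathcal{P}_\mathcal{N}}(D) \subseteq T_{\mathcal{N}',f}(D)$ is immediate by the definition of $\mathcal{P}_\mathcal{N}$.

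For completeness, fix an arbitrary binary fact $\alpha \in T_{\mathcal{N}',f}(D)$. As in Theorem \ref{thm:max_rule_shape}, treat the cases $\alpha = R(s,t)$ with $s \neq t$ and $\alpha = R(s,s)$ separately; I describe only the former. I would construct a $(L, |\text{Col}| \cdot \delta_\mathcal{N} \cdot \mathcal{C}_{\mathcal{N},f})$-tree-like rule $r: \Gamma \rightarrow R(x,y)$ together with a substitution $\nu$ such that $D \models \Gamma\nu$ and $r$ is sound for $(\mathcal{N}',f)$, which would yield $\alpha \in T_r(D) \subseteq T_{\mathcal{P}_\mathcal{N}}(D)$. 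The construction proceeds top-down from two root variables $x, y$ at level $L$ with $\nu(x) = s$ and $\nu(y) = t$. At each constructed vertex $u^z$ of level $\ell \geq 1$, writing $t_z = \nu(z)$, and for each triple $(c,\ell',j)$ with $c \in \text{Col}$, $1 \leq \ell' \leq \ell$, and $j \in \{1, \ldots, \delta_{\ell'-1}\}$, pick (up to) $\mathcal{C}_{\ell'}$ distinct $c$-coloured neighbours of $v^{t_z}$ in $D$ that realise the top-$\mathcal{C}_{\ell'}$ sum of entries $\mathbf{w}_{\lambda_{\ell'-1}}[j]$, introduce a fresh child variable per chosen neighbour at level $\ell-1$, extend $\nu$ accordingly, add the corresponding coloured edge atoms, and record the pairwise distinctness of siblings via inequality atoms. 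Since at depth $i$ each vertex has fan-out at most $|\text{Col}| \cdot \delta_\mathcal{N} \cdot \mathcal{C}_{\mathcal{N},f} \cdot (L - i)$, the resulting formula satisfies the $(L, |\text{Col}| \cdot \delta_\mathcal{N} \cdot \mathcal{C}_{\mathcal{N},f})$-tree-likeness bound. Finally extend $\Gamma$ with $U(z)$ for every unary $U$ such that $U(\nu(z)) \in D$, for each vertex $u^z$; by construction $D \models \Gamma\nu$.

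To prove that $r$ is sound for $(\mathcal{N}',f)$, fix any $D'$ and any $\nu'$ with $D' \models \Gamma\nu'$, and let $\lambda_\ell, \lambda'_\ell$ denote the labelling functions produced by $\mathcal{N}'$ on the canonical encodings of $D$ and $D'$ respectively. The central claim, proved by induction on $\ell$ from $0$ upwards, is that $\mathbf{v}_{\lambda_\ell}[i] \leq \mathbf{p}_{\lambda'_\ell}[i]$ for every dimension $i$, every vertex $u^z$ of $U$ whose level is at least $\ell$, with $v = v^{\nu(z)}$ and $p = v^{\nu'(z)}$. The base case follows from $D \models U(z)\nu \Rightarrow D' \models U(z)\nu'$. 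In the inductive step, for each colour $c$ and dimension $j$, the chosen children at the previous level were the top-$\mathcal{C}_{\ell'}$ contributors in $D$; by the inequalities in $\Gamma$, $\nu'$ maps them to $\mathcal{C}_{\ell'}$ distinct $c$-coloured neighbours of $p$ in $D'$; by the inductive hypothesis, each such image's layer-$(\ell'-1)$ value in $D'$ dominates the value of its preimage in $D$; hence the top-$\mathcal{C}_{\ell'}$ sum at $p$ in $D'$ is at least the top-$\mathcal{C}_{\ell'}$ sum at $v$ in $D$. Monotonicity of $\mathbf{A}_\ell, \mathbf{B}_\ell^c \geq 0$ and of $\sigma_\ell$ then propagates the inequality through the layer-$\ell$ update. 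Applying the claim at $\ell = L$ to the two root vertices and invoking monotonicity of $f$ yields $f(R, \mathbf{v}^s_{\lambda_L}, \mathbf{v}^t_{\lambda_L}) \leq f(R, \mathbf{v}^{\nu'(x)}_{\lambda'_L}, \mathbf{v}^{\nu'(y)}_{\lambda'_L})$, so $R(\nu'(x), \nu'(y)) \in T_{\mathcal{N}',f}(D')$, as required.

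The main obstacle is the inductive step above: ensuring that the top-$\mathcal{C}_{\ell'}$ sum in $D$ is dominated by the top-$\mathcal{C}_{\ell'}$ sum in $D'$. This hinges crucially on (i) the bounded-capacity reduction of Theorem \ref{thm:capacity_maintains_equality}, which caps the number of neighbours that need to be materialised as children in the tree, and (ii) the inequality atoms encoded in the tree-like rule, which guarantee that the images of sibling variables under $\nu'$ are genuinely distinct neighbours in $D'$, so that their combined values lower-bound the top-$\mathcal{C}_{\ell'}$ sum over $p$'s neighbourhood. Careful bookkeeping is also needed to verify that the fan-out indexed over the $\ell$ layers, $|\text{Col}|$ colours, $\delta_\mathcal{N}$ dimensions, and up to $\mathcal{C}_{\mathcal{N},f}$ neighbours per triple fits within the $o(p-i)$ budget of the tree-like definition at every depth $i$.
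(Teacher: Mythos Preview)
Your proposal is correct and follows essentially the same approach as the paper: reduce to the bounded-capacity GNN $\mathcal{N}'$ via Theorem~\ref{thm:capacity_maintains_equality}, construct the tree-like rule top-down by materialising up to $\mathcal{C}_{\ell'}$ distinct top contributors per $(c,\ell',j)$ triple together with sibling inequalities, and then prove soundness by the layer-wise monotonicity induction followed by the monotonicity of $f$. The paper's proof is structured identically, with the same two-root base case, the same use of inequality atoms to force distinctness of neighbour images under $\nu'$, and the same concluding appeal to $f$ being monotonically increasing; the only cosmetic difference is that the paper phrases the construction via explicit mappings $M_{c,\ell',j}: U \to 2^V$ rather than describing the children directly.
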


\section{Experiments}








%
%
%
%
%
%
%
%
%
%
%
%
\begin{table*}
\centering
\resizebox{2.11\columnwidth}{!}{
\begin{tabular}{lll|rrrrrr|r|rr}
\toprule
Dataset & Decoder & Model & \%Acc & \%Prec & \%Rec & \%F1 & AUPRC & Loss & \%SO & \#1B & \#2B \\

\midrule
WN-hier
& DistMult
& Standard
& 89.87 & 89.1  & 90.86 & 89.97 & 0.9423 & 0.11 & -     & -     & -    \\
&& Monotonic
& 85.05 & 79.18 & 95.52 & 86.43 & 0.809  & 1.41 & 52    & 171   & 26532 \\
& RESCAL
& Standard
& 92.69 & \textbf{90.79} & 95.02 & 92.86 & \textbf{0.9597} & 0.09 & -     & -     & -    \\
&& Monotonic
& 88.59 & 85.3  & 93.26 & 89.1  & 0.863  & 1.23 & \textbf{60}    & 104   & 17212 \\
& NAM
& Standard
& 92.1  & 90.22 & 94.46 & 92.28 & 0.9481 & 0.09 & -     & -     & -    \\
&& Monotonic
& 67.87 & 68.33 & 65    & 64.77 & 0.6577 & 1.64 & 48    & 141   & 19244 \\
& TuckER
& Standard
& \textbf{93}    & 90.51 & \textbf{96.08} & \textbf{93.21} & 0.9542 & 0.09 & -     & -     & -    \\
&& Monotonic
& 87.5  & 82.91 & 94.88 & 88.39 & 0.8206 & 1.36 & 56    & 89    & 14498 \\

\midrule
WN-sym
& DistMult
& Standard
& \textbf{96.6}  & \textbf{95.79} & 97.5  & \textbf{96.63} & \textbf{0.9929} & 0.09 & -     & -     & -    \\
&& Monotonic
& 92.34 & 86.92 & \textbf{99.68} & 92.86 & 0.9561 & 1.32 & 88    & 81    & 12845 \\
& RESCAL
& Standard
& 96.1  & 94.71 & 97.66 & 96.16 & 0.9889 & 0.07 & -     & -     & -    \\
&& Monotonic
& 94.27 & 89.72 & 100   & 94.58 & 0.9355 & 1.12 & \textbf{92}    & 42    & 7537 \\
& NAM
& Standard
& 95.72 & 94.58 & 97.02 & 95.78 & 0.9897 & 0.06 & -     & -     & -    \\
&& Monotonic
& 72.16 & 76.71 & 67.3  & 68.82 & 0.6775 & 1.79 & 40    & 88    & 12457 \\
& TuckER
& Standard
& 96.24 & 95.03 & 97.58 & 96.29 & 0.9888 & 0.07 & -     & -     & -    \\
&& Monotonic
& 91.31 & 85.97 & 98.84 & 91.95 & 0.8892 & 1.27 & 80    & 81    & 12825 \\

\midrule
WN-cup\_nmhier
& DistMult
& Standard
& 74.98 & 78.44 & 68.88 & 73.35 & 0.8139 & 0.17 & -     & -     & -    \\
&& Monotonic
& 69.69 & 71.21 & 66.45 & 68.61 & 0.6953 & 1.77 & 40    & 143   & 21366 \\
& RESCAL
& Standard
& \textbf{77.98} & 77.38 & 79.1  & \textbf{78.22} & 0.8152 & 0.15 & -     & -     & -    \\
&& Monotonic
& 73.21 & 75.91 & 67.99 & 71.72 & 0.7058 & 1.61 & 40    & 31    & 5006 \\
& NAM
& Standard
& 76.72 & 74.56 & \textbf{81.34} & 77.72 & \textbf{0.833}  & 0.14 & -     & -     & -    \\
&& Monotonic
& 59.74 & 59.04 & 63.1  & 57.47 & 0.5697 & 2.41 & \textbf{45}    & 354   & 50014 \\
& TuckER
& Standard
& 77.46 & 77.61 & 77.17 & 77.38 & 0.8144 & 0.15 & -     & -     & -    \\
&& Monotonic
& 71.51 & \textbf{80.15} & 57.27 & 66.78 & 0.664  & 1.68 & 40    & 22    & 3772 \\

\midrule
fb237v1
& DistMult
& Standard
& 55.55 & 53.08 & 97.1  & 68.62 & 0.9611 & 0.02 & -     & -     & -    \\
&& Monotonic
& 81.85 & 79.48 & 86    & 82.59 & 0.6884 & 0.80 & -     & 35366 & -    \\
& RESCAL
& Standard
& 58.8  & 55.38 & 96.5  & 70.23 & \textbf{0.9625} & 0.01 & -     & -     & -    \\
&& Monotonic
& \textbf{91}    & \textbf{96.72} & 84.9  & \textbf{90.41} & 0.6114 & 0.70 & -     & 4797  & -    \\
& NAM
& Standard
& 60.5  & 56.44 & 96.8  & 71.17 & 0.9552 & 0.01 & -     & -     & -    \\
&& Monotonic
& 64    & 58.66 & 97    & 73.03 & 0.9557 & 0.07 & -     & 155348 & -    \\
& TuckER
& Standard
& 54.35 & 52.31 & \textbf{98.9}  & 68.42 & 0.9597 & 0.01 & -     & -     & -    \\
&& Monotonic
& 90.5  & 92.07 & 88.7  & 90.34 & 0.6305 & -    & -     & 8511  & -    \\

\midrule
WN18RRv1
& DistMult
& Standard
& 89.76 & 83.13 & \textbf{100}   & 90.75 & 0.9337 & 0.02 & -     & -     & -    \\
&& Monotonic
& 91.52 & 85.49 & \textbf{100}   & 92.18 & 0.7922 & 0.79 & -     & 97    & 11995 \\
& RESCAL
& Standard
& 91.52 & 85.61 & 99.88 & 92.19 & 0.9306 & 0.01 & -     & -     & -    \\
&& Monotonic
& \textbf{95.82} & \textbf{92.38} & 99.88 & \textbf{95.98} & 0.7878 & 0.77 & -     & 52    & 6488 \\
& NAM
& Standard
& 78.61 & 70.96 & \textbf{100}   & 82.75 & 0.9287 & 0.01 & -     & -     & -    \\
&& Monotonic
& 74.73 & 66.51 & 99.88 & 79.83 & 0.9251 & 0.10 & -     & 202   & 20874 \\
& TuckER
& Standard
& 91.76 & 85.94 & \textbf{100}   & 92.41 & \textbf{0.9402} & 0.01 & -     & -     & -    \\
&& Monotonic
& 92.73 & 88.3  & 98.91 & 93.23 & 0.7787 & 0.81 & -     & 65    & 8134 \\

\midrule
nellv1
& DistMult
& Standard
& 57.65 & 54.51 & 92.71 & 68.56 & 0.9271 & 0.09 & -     & -     & -    \\
&& Monotonic
& 65.53 & 59.24 & \textbf{100}   & 74.39 & 0.9081 & 1.35 & -     & 631   & 125094 \\
& RESCAL
& Standard
& 58.94 & 55.01 & 99.06 & 70.72 & 0.9076 & 0.04 & -     & -     & -    \\
&& Monotonic
& 70.59 & \textbf{72.23} & 66.59 & 69.05 & 0.5372 & 1.05 & -     & 144   & 30154 \\
& NAM
& Standard
& 65.53 & 61.05 & 86.82 & 71.15 & 0.9127 & 0.03 & -     & -     & -    \\
&& Monotonic
& 53.24 & 51.69 & \textbf{100}   & 68.15 & \textbf{0.929}  & 0.44 & -     & 970   & 176836 \\
& TuckER
& Standard
& 55.53 & 53.04 & 99.06 & 69.06 & 0.9103 & 0.04 & -     & -     & -    \\
&& Monotonic
& \textbf{76.82} & 68.57 & 99.29 & \textbf{81.1}  & 0.6893 & 1.21 & -     & 265   & 53418 \\

\bottomrule
\end{tabular}
} 
\caption{Results for max GNNs. Loss is from the final epoch on the training set. AUPRC is from the validation set. Other metrics are computed on the test set. \%SO is the percentage of LogInfer rules that are sound for the model. \#1B and \#2B are the number of sound rules with one and two body atoms respectively.}
\label{tab:results:max_gnns}
\end{table*}

We train monotonic max-sum GNNs with monotonically increasing scoring functions across several link prediction datasets, showing that sound rules can be extracted in practice and that the restriction to monotonicity does not significantly decrease performance.
For the model architecture, we fix a hidden dimension of $50$, $2$ layers, and ReLU activation functions.
The GNN definition given in \Cref{sec:background}, chosen to correspond to that of \cite{cucala2023correspondence} and for ease of presentation, describes aggregation in the reverse direction of the edges.
In our experiments, we use the standard approach and aggregate in the direction of the edges.

We use GNNs with max aggregation, and GNNs with sum aggregation, as well as experimenting with $4$ different varieties of scoring function families: RESCAL, DistMult, TuckER, and NAM.
We train each model for $8000$ epochs, except for ones that use TuckER, which we train for $4000$, given our computational constraints and how much slower the TuckER models are.
For all trained models,
we compute standard classification metrics, such as precision, recall,
accuracy, F1 score, and 
area under the precision-recall curve (AUPRC).
To select the scoring function threshold, we evaluate the model on the validation set across candidate thresholds and select the one which maximises accuracy.
The candidate thresholds are the set of all scores produced on the validation input set.

Each training epoch, for each positive target fact, $10$ negative facts are generated by randomly corrupting the binary predicate.
These facts are filtered to ensure they do not contain any false negatives (i.e.\ facts that appear in the training set).
We originally corrupted the constants in the positive fact, but found that predicate corruption leads to significantly better performance by the baseline models on standard benchmarks \cite{teru2020inductive}.
We train all our models using binary cross entropy with logits (BCE) loss and the Adam optimizer with a standard learning rate of $0.001$ and weight decay of $5e^{-4}$.

We train models without restrictions as baselines (denoted by ``Standard''), as well as restricting the models to having non-negative weights (denoted by ``Monotonic''), by clamping negative weights to $0$ after each optimizer step, as in the approach of \cite{cucala2021explainable,morris2024relational}.
When clamping weights, we multiply by $50$ the term in the BCE loss function corresponding to the positive examples: without this, we found there to be insufficient positive signal for the training, leading to consistently positive gradients and thus weights that only tended to $0$ as training progressed;
the value of $50$ was arrived at by hyperparameter tuning on standard benchmark datasets.
We run each experiment across $5$ different random seeds and present the aggregated metrics.
Experiments are run using PyTorch Geometric, with a CPU on a Linux server.

\paragraph{Datasets}
We use 3 standard benchmarks: WN18RRv1, fb237v1, and nellv1 \cite{teru2020inductive},
each of which provides datasets for training, validation, and testing, as well as negative examples and positive targets.
Importantly, these benchmarks are also inductive, meaning that the validation and testing sets contain constants not seen during training, so approaches where embeddings are learned for each constant do not work for them.

We also utilize LogInfer \cite{liu2023revisiting}, a framework which augments a dataset by considering Datalog rules conforming to a particular pattern and adding the consequences of the rules to the dataset (we call these \emph{injected} rules).
We use the datasets LogInfer-WN-hier (WN-hier) and LogInfer-WN-sym (WN-sym) \cite{liu2023revisiting}, which are enriched with the hierarchy and symmetry patterns, respectively.
We also use LogInfer-WN-cup\_nmhier \cite{morris2024relational}, which was created using a mixture of monotonic and non-monotonic rules: rules from the ``cup'' ($R(x,y) \land S(y,z) \land T(w,x) \rightarrow P(x,y)$) and ``non-monotonic hierarchy'' ($R(x,y) \land \neg S(y,z) \rightarrow T(x,y)$) patterns, in this case.
We use the dataset to test whether the monotonic models can recover the monotonic rules in the dataset, despite the presence of non-monotonic rules.
For the LogInfer datasets, each training epoch, $10\%$ of the input facts are randomly set aside and used as ground truth positive targets, whilst the rest of the facts are used as input to the model.

Given that the datasets have no unary predicates, we introduce a dummy unary predicate that holds for every constant in the dataset, yielding the same initial embedding for every node in the encoding.
More details on this and other approaches to initial node features in the absence of unary predicates can be found in \Cref{app:node_features_no_unary}.

\paragraph{Rule Extraction}
On all datasets, we iterate over each Datalog rule in the signature with up to two body atoms and a binary head predicate, and count the number of sound rules, using \Cref{prop:sound_rule_checking} to check soundness.
On the fb237v1 dataset, we only check rules with one body atom, since the large number of predicates means that searching the space of rules with two body atoms intractable.
For datasets created with LogInfer \cite{liu2023revisiting}, 
which are obtained by appending the consequences of a known set of Datalog rules to a pre-existing dataset, we also check if these rules are sound.

\paragraph{Results}
The experimental results for max GNNs are provided in \Cref{tab:results:max_gnns}.
For each dataset and metric, we highlight in bold the best value achieved by a model variant for that metric.
Our findings definitively show that restricting GNNs and scoring functions to be monotonic does not adversely affect model performance in most cases (with the exception of models that use NAM).
In fact, in some scenarios, the restrictions cause a significant increase in performance.
Similar conclusions can be drawn from our results for sum GNNs as for max GNNs; for completeness, full results for monotonic sum GNNs are given in \Cref{tab:results:sum_gnns}, of \Cref{app:full_results}.

Some general patterns emerge.
Firstly, loss is always lower for the standard model than its monotonic counterpart.
This is due to the restriction on model weights preventing gradient descent from optimising the parameters to the same extent.
On the other hand, the monotonicity may help to prevent overfitting on the training set.
Likewise, AUPRC is consistently higher on the standard model than the monotonic version.
Used as an indication for model performance on the validation set, this suggests that the standard models may be overfitting on the validation facts and struggling to generalise to the test set, in comparison to the monotonic models.
Finally, we see that models using NAM as a scoring function consistently struggled when restricted to being monotonic: other scoring functions are thus more suitable when monotonicity is desired.

We obtained a number of sound rules for every monotonic model (columns \#1B and \#2B in \Cref{tab:results:max_gnns}), showing the efficacy of our rule-checking methodology.
On the LogInfer-WN datasets, there are $605$ possible rules with one body atom and $90508$ with two.
On fb237v1, there are $162000$ possible rules with one body atom.
On WN18RRv1, there are $405$ possible rules with one body atom and $49572$ with two.
On nellv1, there are $980$ possible rules with one body atom and $186592$ with two.
Notice that, for example, nearly all possible rules are sound when using monotonic NAM on nellv1, which corresponds to the perfect model recall and poor precision.
We show some randomly sampled sound rules in \Cref{tab:results:sample_rules} of \Cref{app:full_results}.

On the LogInfer datasets, we find consistently small drops in performance ($\approx 5\%$ less accuracy) when monotonic models are used.
However, the restriction enables us to check sound rules for the model: we find many sound rules with one or two body atoms, and also that around half (or sometimes more) of the injected LogInfer rules are sound for the model.
On WN-cup\_nmhier, the monotonic models showed their ability to still learn and recover some of the injected monotonic rules, even in the presence of some injected rules that are explicitly non-monotonic.

On fb237v1, we see significantly better performance by monotonic models over their standard counterparts.
The same is seen on nellv1.
On WN18RRv1, the monotonic models again outperform the standard ones, but by smaller margins.
These results are somewhat surprising, but encouraging, since one would expect monotonic models to have a greater advantage on the LogInfer datasets, where the underlying patterns in the data are explicitly monotonic.
Finally, we note that the monotonic model performance on fb237v1 and WN18RRv1 is significantly better than those of MGNNs \cite{cucala2021explainable}, highlighting the benefits of using a scoring function as a decoder instead of their dataset encoding-decoding scheme, since the recall of their models is upper-bounded by the number of positive test pairs of constants that also appear in the input dataset.
\section{Conclusion}
In this paper, we showed how scoring functions can be made monotonically increasing, how sound Datalog rules can be found for monotonic GNNs with monotonic scoring functions, and showed how existing results about scoring functions capturing rule patterns impact the expressivity of GNN models that use those scoring functions.
We also provided ways to obtain an equivalent program for any monotonic GNN with max aggregation and monotonically increasing scoring function, and any monotonic GNN with max-sum aggregation and a scoring function that is non-negative bilinear.
We showed through our experiments that, in practice, the performance of GNNs with scoring functions does not drop substantially when applying the restrictions that make them monotonic, and in some cases increase the model performance.
We also showed that, in practice, many sound rules can be recovered from these monotonic models, which can be used to explain their predictions.

A limitation of this work is that we only consider non-negative bilinear scoring functions when obtaining equivalent programs for max-sum GNNs: the approach we describe may also work for other classes of monotonically increasing scoring functions\changemarker{, such as SimplE}.
For future work, we aim to consider more advanced training paradigms for the monotonic models.

\clearpage
\bibliographystyle{kr}
\bibliography{ref}

\newpage
\changemarker{
For the purpose of Open Access, the authors have applied a CC BY public copyright licence to any Author Accepted Manuscript (AAM) version arising from this submission.
\paragraph{Acknowledgments}
Matthew Morris is funded by an EPSRC scholarship (CS2122\_EPSRC\_1339049).
This work was also supported by Samsung Research UK, the EPSRC projects UKFIRES (EP/S019111/1) and ConCur (EP/V050869/1).
The authors would like to acknowledge the use of the University of Oxford Advanced Research Computing (ARC) facility in carrying out this work \url{http://dx.doi.org/10.5281/zenodo.22558}.
}

\appendix
\clearpage
\section{Theory}
\subsection{Non Monotonically Increasing Scoring Functions} \label{app:non_monotonic_increasing_scoring_functions}
ComplEx is defined as follows:

$$ f(R, \mathbf{h}, \mathbf{t}) = \text{Re}( \mathbf{h}^\top \mathbf{M}_R \overline{\mathbf{t}} ) , $$

where each $\mathbf{M}_R$ is a diagonal $d_f \times d_f$ matrix, $\mathbf{h}, \mathbf{t}, \mathbf{M}_R$ contain complex numbers, $\text{Re}$ denotes the real part of a complex number, and $\overline{\mathbf{t}}$ denotes the complex conjugate.
Splitting these vectors / matrices into their real and imaginary components, we obtains $\mathbf{h} = \mathbf{h}^{\text{im}} \cdot i + \mathbf{h}^{\text{re}},~ \mathbf{M}_R = \mathbf{M}_R^{\text{im}} \cdot i + \mathbf{M}_R^{\text{re}},~ \mathbf{t} = \mathbf{t}^{\text{im}} \cdot i + \mathbf{t}^{\text{re}}$, and thus:

\begin{align*}
f(R, \mathbf{h}, \mathbf{t}) =& \mathbf{h}^{\text{re}} \mathbf{M}_R^{\text{re}} \mathbf{t}^{\text{re}} + \mathbf{h}^{\text{im}} \mathbf{M}_R^{\text{re}} \mathbf{t}^{\text{im}} \\
& + \mathbf{h}^{\text{re}} \mathbf{M}_R^{\text{im}} \mathbf{t}^{\text{im}} - \mathbf{h}^{\text{im}} \mathbf{M}_R^{\text{im}} \mathbf{t}^{\text{re}} .
\end{align*}

To ensure monotonicity, we have to enforce one of $\mathbf{h}^{\text{im}}, \mathbf{M}_R^{\text{im}}, \mathbf{t}^{\text{re}} = 0$, otherwise we will have the same situation as for the distance-based models, where we have vectors with different signs.
Any one of these being zero eliminates two of the terms of the above equation, so for now assume only $\mathbf{M}_R^{\text{im}} = 0$, i.e. that the relation vector has only real numbers.
Then we obtain:

$$ f(R, \mathbf{h}, \mathbf{t}) = \mathbf{h}^{\text{re}} \mathbf{M}_R^{\text{re}} \mathbf{t}^{\text{re}} + \mathbf{h}^{\text{im}} \mathbf{M}_R^{\text{re}} \mathbf{t}^{\text{im}} . $$

This is a severely limited version of ComplEx, so we do not use it.
A similar problem occurs for QuatE and DualE, which use quaternions instead of the complex numbers.

\subsection{Monotonically Increasing Semantic Matching Models} \label{app:monotonic_increasing_semantic_matching}
When restricting the main parameters of the following--- expressed as vectors, matrices, or tensors---to contain only non-negative values, we obtain monotonically increasing scoring functions.

RESCAL is defined by

$$ f(R, \mathbf{h}, \mathbf{t}) = \mathbf{h}^\top \mathbf{M}_R \mathbf{t} , $$

where $\mathbf{M}_R \in \mathbb{R}^{d_f \times d_f}$.
When every $\mathbf{M}_R$ is diagonal, we obtain DistMult.
When we enforce normality ($\textbf{M}_R \textbf{M}_R^\top = \textbf{M}_R^\top \textbf{M}_R$) and commutativity ($\textbf{M}_R \textbf{M}_{R'} = \textbf{M}_{R'} \textbf{M}_R$) for all relations $R, R'$, we obtain ANALOGY.
TATEC is defined by:

$$ f(R, \mathbf{h}, \mathbf{t}) = \mathbf{h}^\top \mathbf{M}_R \mathbf{t} + \mathbf{h}^\top \mathbf{r}_R + \mathbf{t}^\top \mathbf{r}_R + \mathbf{h}^\top \mathbf{D} \mathbf{t} , $$

where $\mathbf{r}_R \in \mathbb{R}^{d_f}, \mathbf{M}_R \in \mathbb{R}^{d_f \times d_f}, \mathbf{D} \in \mathbb{R}^{d_f \times d_f}$.
HolE uses circular correlation operation to first combine the head and tail representations, and is defined as follows:

\begin{align*}
(\mathbf{h} \star \mathbf{t})[i] &= \sum_{j=1}^{d_f} \mathbf{h}[j] \cdot \mathbf{t}[(j + i) \text{ mod } d_f] \\
f(R, \mathbf{h}, \mathbf{t}) &= \mathbf{r}_R^\top (\mathbf{h} \star \mathbf{t}) , \\
\end{align*}

where $\mathbf{h} \star \mathbf{t}$ produces a vector of dimension $d_f$ and $\mathbf{r}_R \in \mathbb{R}^{d_f}$.
TuckER is defined by:
\begin{align*}
f(R, \mathbf{h}, \mathbf{t}) &= \mathbf{W} \times_1 \mathbf{h} \times_2 \mathbf{r}_R \times_3 \mathbf{t} \\
&= \sum_{1 \leq i, k \leq d_f, 1 \leq j \leq d_r} \mathbf{W}[i,j,k] \mathbf{h}[i] \mathbf{r}_R[j] \mathbf{t}[k] ,
\end{align*}

where $\times_n$ denotes the $n$-mode tensor product, $d_r \in \mathbb{N}$, $\mathbf{W} \in \mathbb{R}^{d_f \times d_r \times d_f}$, and $\mathbf{r}_R \in \mathbb{R}^{d_r}$.
Note that TuckER can be rewritten as a bilinear model:
$f(R, \mathbf{h}, \mathbf{t}) = \mathbf{h} \mathbf{M}_R \mathbf{t} = \sum_{i=1}^{d_f} \sum_{k=1}^{d_f} \mathbf{h}[i] \mathbf{M}_R[i,k] \mathbf{t}[k]$.
To do so, we define $\mathbf{M}_R \in \mathbb{R}^{d_f \times d_f}$ by $\mathbf{M}_R[i ,k] = \sum_{j=1}^{d_r} \mathbf{W}[i,j,k] \mathbf{r}_R[j]$.
Then:

\begin{align*}
f(R, \mathbf{h}, \mathbf{t}) &= \sum_{1 \leq i, k \leq d_f, 1 \leq j \leq d_r} \mathbf{W}[i,j,k] \mathbf{h}[i] \mathbf{r}_R[j] \mathbf{t}[k] \\
&= \sum_{i=1}^{d_f} \sum_{k=1}^{d_f} \sum_{j=1}^{d_r} \mathbf{W}[i,j,k] \mathbf{h}[i] \mathbf{r}_R[j] \mathbf{t}[k] \\
&= \sum_{i=1}^{d_f} \sum_{k=1}^{d_f} \mathbf{h}[i] \mathbf{t}[k]  \sum_{j=1}^{d_r} \mathbf{W}[i,j,k] \mathbf{r}_R[j] \\
&= \sum_{i=1}^{d_f} \sum_{k=1}^{d_f} \mathbf{h}[i] \mathbf{t}[k] \mathbf{M}_R[i ,k] \\
&= \sum_{i=1}^{d_f} \sum_{k=1}^{d_f} \mathbf{h}[i] \mathbf{M}_R[i ,k] \mathbf{t}[k] \\
&= \mathbf{h} \mathbf{M}_R \mathbf{t}
\end{align*}

SimplE (Kazemi and Poole, 2018) is defined as follows:

$$ f(R, \mathbf{h}, \mathbf{t}) = \frac{1}{2}( \langle \mathbf{h}, \mathbf{r}_R, \mathbf{t} \rangle + \langle \mathbf{h}, \mathbf{r}^{-1}_R, \mathbf{t} \rangle ) , $$

where $\langle \mathbf{h}, \mathbf{r}_R, \mathbf{t} \rangle = \sum_{i=1}^{d_f} \mathbf{h}[i] \cdot \mathbf{r}_R[i] \cdot \mathbf{t}[i]$ and each relation $R$ has two corresponding vectors, $\mathbf{r}_R, \mathbf{r}^{-1}_R \in \mathbb{R}^{d_f}$.

\subsection{Monotonically Increasing Neural Network Models} \label{app:monotonic_increasing_neural_network}
We now consider some neural network models.
When restricting their main parameters--- expressed as vectors, matrices, tensors, or neural network weights---to contain only non-negative values, and by making all non-linear functions in them monotonically increasing and non-negative (e.g., ReLU), we obtain monotonically increasing scoring functions.

SME is defined by:

$$ f(R, \mathbf{h}, \mathbf{t}) = g_u(\textbf{h}, \textbf{r}_R)^\top g_v(\textbf{t}, \textbf{r}_R)^\top , $$

where $g_u, g_v$ are neural networks, $k \in \mathbb{N}$, and each $\mathbf{r}_R \in \mathbb{R}^{k}$.
NTN is defined by:

$$ f(R, \mathbf{h}, \mathbf{t}) = \mathbf{r}_R^\top ~\sigma (\mathbf{h}^\top \mathbf{T}_R \mathbf{t} + \mathbf{M}_R^1 \mathbf{h} + \mathbf{M}_R^2 \mathbf{t} + \mathbf{b}_R) , $$

where $\sigma$ is a non-linear activation function, $k \in \mathbb{N}$, each $\mathbf{T}_R \in \mathbb{R}^{k \times d_f \times d_f}$, $\mathbf{M}_R^1, \mathbf{M}_R^2 \in \mathbb{R}^{k \times d_f}$, and $\mathbf{r}_R, \mathbf{b}_R \in \mathbb{R}^{k}$.
When every $\mathbf{T}_R$ and $\mathbf{b}_R$ is set to $\mathbf{0}$, it yields the model SLM.
MLP is defined as follows:

$$ f(R, \mathbf{h}, \mathbf{t}) = \mathbf{w}^\top \sigma (\mathbf{M}^1 \textbf{h} + \mathbf{M}^2 \textbf{r}_R + \mathbf{M}^3 \textbf{t}) $$

where $\sigma$ is a non-linear activation function, $k \in \mathbb{N}$, $\mathbf{M}^1, \mathbf{M}^2, \mathbf{M}^3 \in \mathbb{R}^{k \times d_f}$, $\mathbf{w} \in \mathbb{R}^{k}$, and each $\mathbf{r}_R \in \mathbb{R}^{d_f}$.

NAM is defined as follows:

$$ f(R, \mathbf{h}, \mathbf{t}) = \mathbf{t}^\top \cdot N(\mathbf{h}, \mathbf{r}_R) , $$

where $N$ is a deep neural network and each $\mathbf{r}_R \in \mathbb{R}^{d_f}$.
ConvE (Dettmers et al., 2018) is defined as follows:

$$f(R, \mathbf{h}, \mathbf{t}) = \sigma( \text{vec} (f([\bar{\mathbf{h}}; \bar{\mathbf{r}}] \star \omega )) \mathbf{W}) \mathbf{t} ,$$

where $\mathbf{r}_R$ is a vector, $\bar{\mathbf{h}}, \bar{\mathbf{r}}$ denote a 2D reshaping of $\mathbf{h}, \mathbf{r}$ respectively (i.e. a vector with dimension $k_1 \cdot k_2$ is turned into a matrix of dimension $k_1 \times k_2$), $[\_;\_]$ denotes concatenation, $\star~ \omega$ denotes a convolution with filters $\omega$, vec flattens an input tensor into a vector, $\sigma$ is a non-linear activation function, and $\mathbf{W}$ is a matrix.
HypER (Balazevic et al., 2019) is defines as follows, where the notation is the same as above, except additionally $\text{vec}^{-1}$ turns an input vector into a matrix and $\mathbf{H}$ is a matrix:

$$f(R, \mathbf{h}, \mathbf{t}) = \sigma( \text{vec} (\mathbf{h} \star \text{vec}^{-1} (\mathbf{w}_R \mathbf{H})) \mathbf{W}) \mathbf{t} .$$

\subsection{Initial Node Features in the Absence of Unary Predicates} \label{app:node_features_no_unary}
In scenarios where the KG completion task in non-inductive, i.e. where the test set contains only constants that are seen during training, the dominant approach is to learn an embedding for each constant.
However, in inductive settings such as those of \cite{teru2020inductive} and \cite{liu2023revisiting}, this approach does not work.
Furthermore, in these particular datasets (as well as others), there are no unary predicates, only binary predicates.
Thus, since the straightforward canonical encoding method uses unary predicates to obtain initial features for each node, an alternative is needed in such scenarios.
A simple approach, which we adopt since it does not change the model expressivity, is to have a single dummy unary predicate in the signature.
In this way, every node will have the same initial feature.

Other approaches to this problem include that of \cite{teru2020inductive}, where each node $w$ in the sub-graph around nodes $u$ and $v$ is labelled with the tuple $(d(w, u), d(w, v))$, where $d(w, u)$ denotes the shortest distance between $w$ and $u$ in the sub-graph.
\cite{hamaguchi2017knowledge} use a GNN to compute embeddings for constants that were not seen during training, use existing embeddings for neighbouring constants that appeared in training.
\cite{schlichtkrull2018modeling} state that ``the input to the first layer can be chosen as a unique one-hot vector for each node in the graph if no other features are present''.
\cite{berg2017graph} also use unique one-hot vectors for every node in the graph.
Finally, \cite[Section 4.1]{zhang2018link} give an initial feature to each node based on the structure of an enclosing sub-graph.
\section{Full Proofs}

\subsection{Lemma \ref{lemma:isomorphism}} \label{app:proof:lemma:isomorphism}

We introduce a definition and lemma that will aid in the proof of \Cref{prop:sound_rule_checking}.

\begin{definition}
An \emph{isomorphism} from a dataset $D$ to $D'$ is an injective mapping $h$ of constants to constants that is defined on $\texttt{con}(D)$ and satisfies $h(D) = D'$, where $h(D)$ is the dataset obtained by replacing each fact $U(s) \in D$ with $U(h(s))$ and each fact of the form $R(s, t)$ with $R(h(s), h(t))$.
\end{definition}

\begin{lemma} \label{lemma:isomorphism}
Let $\mathcal{N}$ be a max-sum GNN and $f$ a scoring function.
Then for all datasets $D, D'$, each isomorphism from $D$ to $D'$ is an isomorphism from $T_{\mathcal{N}, f}(D)$ to $T_{\mathcal{N}, f}(D')$.
\end{lemma}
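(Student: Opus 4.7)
The plan is to exploit the fact that max-sum GNNs are equivariant under graph isomorphisms, and that the canonical encoding together with the scoring-function decoder depend only on the \emph{relational} structure of a dataset, not on the identities of its constants.

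First I would lift the dataset isomorphism $h$ to a graph isomorphism $\hat h$ between the canonical encodings $\texttt{enc}(D)$ and $\texttt{enc}(D')$: map each vertex $v^a$ to $v^{h(a)}$. By definition of the canonical scheme, $U_p(a) \in D$ iff $U_p(h(a)) \in D'$, so the vertex features are preserved; likewise, $R^c(a,b) \in D$ iff $R^c(h(a),h(b)) \in D'$, so the coloured edge sets are preserved. Thus $\hat h$ is a labelled graph isomorphism.

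The core step is then to show by induction on the layer index $\ell \in \{0,\dots,L\}$ that the vertex labellings $\lambda_\ell$ and $\lambda'_\ell$ produced by applying $\mathcal{N}$ to $\texttt{enc}(D)$ and $\texttt{enc}(D')$ respectively satisfy $\lambda_\ell(v^a) = \lambda'_\ell(v^{h(a)})$ for every $a \in \texttt{con}(D)$. The base case $\ell = 0$ is immediate since $\hat h$ preserves initial features. For the inductive step, the update rule for layer $\ell$ at vertex $v^a$ depends only on $\lambda_{\ell-1}(v^a)$ and, for each colour $c$, on the multiset $\multisetopen \lambda_{\ell-1}(u) \mid (v^a,u) \in E^c \multisetclose$. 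The graph isomorphism $\hat h$ gives a multiset-preserving bijection between the $c$-coloured neighbourhoods of $v^a$ in $\texttt{enc}(D)$ and of $v^{h(a)}$ in $\texttt{enc}(D')$, and the inductive hypothesis identifies the corresponding labels at layer $\ell-1$. Since max-$k$-sum is a symmetric function of a multiset, the aggregated values coincide, and applying the same matrices, biases, and activation $\sigma_\ell$ yields the same output at layer $\ell$.

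Finally, I would apply this equality at layer $L$ together with the decoder. For any binary predicate $R$ and constants $a,b \in \texttt{con}(D)$, we have $f(R,\lambda_L(v^a),\lambda_L(v^b)) = f(R,\lambda'_L(v^{h(a)}),\lambda'_L(v^{h(b)}))$, so $R(a,b) \in T_{\mathcal{N},f}(D)$ iff $R(h(a),h(b)) \in T_{\mathcal{N},f}(D')$. An analogous (even simpler) argument handles unary facts should the signature include derived unary atoms via $f$; in this paper only binary facts are produced by the decoder, so the conclusion $h(T_{\mathcal{N},f}(D)) = T_{\mathcal{N},f}(D')$ follows, witnessing that $h$ is an isomorphism on the outputs. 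The only mildly subtle point, and the one I would write out carefully, is the multiset-preservation step in the induction, since this is where the fact that aggregation is performed over a \emph{multiset} (rather than an ordered sequence) is essential for equivariance.
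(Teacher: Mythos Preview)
Your proposal is correct and follows essentially the same approach as the paper: lift the dataset isomorphism to a graph isomorphism on the canonical encodings, use equivariance of the GNN to conclude that final-layer embeddings match under the bijection, and then apply the scoring function. In fact your write-up is more detailed than the paper's, which simply asserts that ``the result of applying $\mathcal{N}$ depends only on the structure of the graph and not the specific names of the vertices'' without spelling out the layer-by-layer induction you provide.
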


\begin{proof}
Let $h$ be an isomorphism from $D$ to $D'$.
This induces a bijective mapping $\pi$ between the vertices of $\texttt{enc}(D)$ and $\texttt{enc}(D')$.

Let vertices $u, v \in \texttt{enc}(D)$ and $u' := \pi(u), v' := \pi(v) \in \texttt{enc}(D')$. Define $\mathbf{u}_L$ (and similarly $\mathbf{v}_L, \mathbf{u}_L', \mathbf{v}_L'$) to be the final embedding of $v$ after $\mathcal{N}$ has been applied.

Since the result of applying $\mathcal{N}$ depends only on the structure of the graph and not the specific names of the vertices, we have that $\mathbf{u}_L = \mathbf{u}_L'$ and  $\mathbf{v}_L = \mathbf{v}_L'$, and thus that for any binary relation $R$, $f(R, \mathbf{u}_L, \mathbf{v}_L) = f(R, \mathbf{u}_L', \mathbf{v}_L')$.

Thus, for any fact $R(s, t) \in D$, we must also have $R(h(s), h(t)) \in D'$.
\end{proof}

\subsection{Lemma \ref{lemma:monotonic_extension}} \label{app:proof:lemma:monotonic_extension}
\begin{lemma*}
Let $\mathcal{N}$ be a monotonic max-sum GNN and $f$ a monotonically increasing scoring function.
Then for all datasets $D, D'$ such that $D \subseteq D'$, $T_{\mathcal{N}, f}(D) \subseteq T_{\mathcal{N}, f}(D')$.
\end{lemma*}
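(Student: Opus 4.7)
The plan is to prove the lemma in two stages. First, I would show that when $\mathcal{N}$ is applied to $\texttt{enc}(D)$ and $\texttt{enc}(D')$, every vertex $v^a$ common to both encodings (i.e., with $a \in \texttt{con}(D)$) receives labels $\mathbf{v}^a_\ell$ and $\mathbf{v}'^a_\ell$ satisfying $0 \le \mathbf{v}^a_\ell[i] \le \mathbf{v}'^a_\ell[i]$ componentwise at every layer $\ell \in \{0, \dots, L\}$. Second, I would use the monotonically increasing property of $f$ to transfer this inequality of embeddings into an inequality of scores, so that any binary fact decoded from the output on $D$ is also decoded from the output on $D'$.

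For the first stage, the base case $\ell = 0$ is immediate: since $D \subseteq D'$, every constant of $D$ is a constant of $D'$, and the definition $\mathbf{v}^a_0[p] = 1 \iff U_p(a) \in D$ together with the analogous equality for $D'$ gives $\mathbf{v}^a_0 \le \mathbf{v}'^a_0$ componentwise, with both values Boolean and hence non-negative. For the inductive step, I would unfold the update rule $\mathbf{v}^a_\ell = \sigma_\ell(\mathbf{b}_\ell + \mathbf{A}_\ell \mathbf{v}^a_{\ell-1} + \sum_{c \in \text{Col}} \mathbf{B}^c_\ell \cdot \text{agg}_\ell(\multisetopen \mathbf{u}_{\ell-1} \mid (v^a, u) \in E^c \multisetclose))$ and verify that each source of monotonicity built into a monotonic max-sum GNN --- non-negativity of $\mathbf{A}_\ell$ and each $\mathbf{B}^c_\ell$, and monotonic increase with non-negative range of $\sigma_\ell$ --- combines with the monotonicity of the aggregation to propagate the componentwise inequality from layer $\ell-1$ to layer $\ell$, while preserving non-negativity of every coordinate.

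The main obstacle is the max-$k$-sum aggregation step, where I must argue that the multiset produced for $v^a$ in $\texttt{enc}(D')$ dominates the one produced in $\texttt{enc}(D)$ in a way that max-$k$-sum respects. Two effects combine: since $D \subseteq D'$, for each colour $c$ the set of $c$-coloured neighbours of $v^a$ in $\texttt{enc}(D')$ is a superset of those in $\texttt{enc}(D)$; and by the induction hypothesis each common neighbour's label component is no smaller in $\texttt{enc}(D')$ than in $\texttt{enc}(D)$. I would therefore prove a short helper claim: for any $k \in \mathbb{N}_0 \cup \{\infty\}$, max-$k$-sum restricted to non-negative real multisets is monotonic both under pointwise increases to existing elements and under adjoining new non-negative elements. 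The first part is immediate from the sorted-sum definition; the second holds because inserting a non-negative value either leaves the top-$k$ unchanged or swaps in an element at least as large as the one it displaces. Non-negativity of the aggregated values is guaranteed by the non-negative range of $\sigma_{\ell-1}$ (and by the Boolean initial labels when $\ell = 1$).

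For the second stage, suppose $R(a, b) \in T_{\mathcal{N}, f}(D)$. Then $a, b \in \texttt{con}(D) \subseteq \texttt{con}(D')$ and $f(R, \mathbf{v}^a_L, \mathbf{v}^b_L) \ge t_f$. The first stage yields $0 \le \mathbf{v}^a_L[i] \le \mathbf{v}'^a_L[i]$ and $0 \le \mathbf{v}^b_L[i] \le \mathbf{v}'^b_L[i]$ for every $i$, so the definition of a monotonically increasing scoring function gives $f(R, \mathbf{v}'^a_L, \mathbf{v}'^b_L) \ge f(R, \mathbf{v}^a_L, \mathbf{v}^b_L) \ge t_f$, and hence $R(a, b) \in T_{\mathcal{N}, f}(D')$, completing the proof.
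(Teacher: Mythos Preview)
Your proposal is correct and follows essentially the same approach as the paper: an induction on layers $\ell$ establishing the componentwise inequality $\mathbf{v}^a_\ell \leq \mathbf{v}'^a_\ell$ for every common vertex, followed by an application of the monotonically increasing property of $f$ to conclude. If anything, you are slightly more explicit than the paper in isolating the two separate effects on the aggregation multiset (pointwise increase of shared neighbours' labels versus adjunction of new non-negative neighbours) and in tracking the non-negativity needed to invoke the definition of a monotonically increasing scoring function, whereas the paper simply asserts the max-$k_\ell$-sum inequality from $E^c \subseteq E'^c$ together with the inductive hypothesis.
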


\begin{proof}
Let $D, D'$ be datasets such that $D \subseteq D'$ and $E, E'$ their respective edges in the canonical encoding.
For $v \in \texttt{enc}(D)$, let $\mathbf{v}_\ell$ be the vector labelling of $v$ after $\ell$ layers of $\mathcal{N}$ applied to $D$.
Similarly define $\mathbf{v}_\ell'$ to be the labelling after $\ell$ layers of $\mathcal{N}$ applied to $D'$.

We prove by induction on $0 \leq \ell \leq L$ that $\mathbf{v}_\ell[i] \leq \mathbf{v}_\ell'[i]$ for each vertex $v \in \texttt{enc}(D)$ and $1 \leq i \leq \delta_\ell$.
The base case follows from the canonical encoding and the fact that $D \subseteq D'$.

For the inductive step, consider a layer $\ell$ and $1 \leq i \leq \delta_\ell$.
The canonical encoding and $D \subseteq D'$ implies that $E^c \subseteq E'^c$.
For every $u \in \texttt{enc}(D)$ and $1 \leq j \leq \delta_{\ell - 1}$, the inductive hypothesis gives us that $\mathbf{u}_{\ell - 1}[j] \leq \mathbf{u}_{\ell - 1}'[j]$.
Together, these imply that for every $c \in \text{Col}$

\begin{align*}
&\text{max-$k_\ell$-sum}(\multisetopen \mathbf{u}_{\ell - 1}[j] ~|~ (v, u) \in E^c \multisetclose) \\
\leq~ &\text{max-$k_\ell$-sum}(\multisetopen \mathbf{u}_{\ell - 1}'[j] ~|~ (v, u) \in E'^c \multisetclose) .
\end{align*}

Also, since all elements of $\mathbf{A}_\ell$ and each $\mathbf{B}_\ell^c$ are non-negative and $\sigma_\ell$ is monotonically increasing, we have $\mathbf{v}_\ell[i] \leq \mathbf{v}_\ell'[i]$.

Thus, as proven by induction, $\mathbf{v}_L[i] \leq \mathbf{v}_L'[i]$ for each vertex $v \in \texttt{enc}(D)$ and $1 \leq i \leq \delta_\ell$, and so $\mathbf{v}_L[i] \leq \mathbf{v}_L'[i]$ for all $i \in \{ 1, ..., \delta_L \}$.

Now let $R(s, t) \in T_{\mathcal{N}, f}(D)$ and let $u, v \in \texttt{enc}(D)$ be the corresponding nodes for $s, t$.
Then $f(R, \mathbf{u}_L, \mathbf{v}_L) \geq t_f$.
But as proven above, we then have $\mathbf{u}_L[i] \leq \mathbf{u}_L'[i]$ and $\mathbf{v}_L[i] \leq \mathbf{v}_L'[i]$ for all $i \in \{ 1, ..., \delta_L \}$.

Now since $f$ is monotonically increasing, we have $f(R, \mathbf{u}_L, \mathbf{v}_L) \leq f(R, \mathbf{u}_L', \mathbf{v}_L')$, and so $f(R, \mathbf{u}_L', \mathbf{v}_L') \geq t_f$.
So $R(s, t) \in T_{\mathcal{N}, f}(D')$, as required.
\end{proof}

\subsection{Proposition \ref{prop:sound_rule_checking}} \label{app:proof:prop:sound_rule_checking}
\begin{proposition*}
Let $\mathcal{N}$ be a monotonic max-sum GNN and $f$ a monotonically increasing scoring function.
Let $r$ be a Datalog rule with a binary head atom $H$, a (possibly empty) set $A$ of body atoms, and a (possibly empty) set $I$ of body inequalities.
For each variable $x$ in $r$, let $a_x, b_x$ be distinct constants uniquely associated with $x$.
Then, $r$ is sound for $(\mathcal{N}, f)$ if and only \changemarker{if} $H \mu \in T_{\mathcal{N}, f}(D_\mu)$ for each substitution $\mu$ mapping the variables of $r$ to constants in the set $\{ a_x ~|~ \text{$x$ is a variable of $r$} \}$ such that $\mu(x) \not= \mu(y)$ for each inequality $x \not\approx y \in I$, and
$D_\mu = A\mu \cup \{ R(b_x, \mu(x)) ~|~ \text{$x$ is a variable occurring in $H$ but not in $A$} \}$ where $R$ is an arbitrary but fixed binary predicate.
\end{proposition*}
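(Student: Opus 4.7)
The plan is to establish both directions by leveraging that $T_{\mathcal{N}, f}$ is monotonic under injective homomorphisms: if $h$ injectively maps $\texttt{con}(E)$ into $\texttt{con}(F)$ with $h(E) \subseteq F$, then $h(T_{\mathcal{N}, f}(E)) \subseteq T_{\mathcal{N}, f}(F)$. This follows by composing Lemma \ref{lemma:isomorphism} (applied to the isomorphism $h: E \to h(E)$) with Lemma \ref{lemma:monotonic_extension} (applied to $h(E) \subseteq F$). Through this reduction, soundness of $r$ on arbitrary datasets becomes equivalent to the finite check on the family $\{D_\mu\}_\mu$, each $D_\mu$ being a minimal ``shape'' that a body-satisfying dataset can take up to isomorphism.

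For the forward direction, assume $r$ is sound. For any valid $\mu$, the body atoms $A\mu$ sit inside $D_\mu$ by construction, and each inequality $x \not\approx y \in I$ holds under $\mu$ because the quantification in the statement restricts to substitutions with $\mu(x) \neq \mu(y)$. Hence $D_\mu \models \text{body}(r)\mu$, which gives $H\mu \in T_r(D_\mu) \subseteq T_{\mathcal{N}, f}(D_\mu)$.

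For the backward direction, let $D'$ and $\nu$ satisfy $D' \models \text{body}(r)\nu$, and aim at $H\nu \in T_{\mathcal{N}, f}(D')$. I would define $\mu$ to mirror the equality pattern of $\nu$: declare $x \sim y$ iff $\nu(x) = \nu(y)$, pick a representative $z$ per equivalence class, and set $\mu(x) = a_z$ whenever $x \sim z$. Then $\mu$ respects $I$ because $\nu$ does, so the hypothesis yields $H\mu \in T_{\mathcal{N}, f}(D_\mu)$. Next, I would extend $D'$ to $D'' = D' \cup \{R(b_x', \nu(x)) : x \text{ is a variable in } H \text{ but not in } A\}$ with each $b_x'$ fresh, and define $h$ on $\texttt{con}(D_\mu)$ by $h(a_z) = \nu(z)$ for each representative $z$ and $h(b_x) = b_x'$. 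A direct check shows $h$ is injective with $h(D_\mu) \subseteq D''$, so monotonicity under injective homomorphisms delivers $H\nu = h(H\mu) \in T_{\mathcal{N}, f}(D'')$.

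The main obstacle, and the crux of the argument for unsafe rules, is descending from $D''$ back to $D'$. Here I would exploit the aggregation convention of Section \ref{sec:background}: a vertex $v$ aggregates only over edges $(v, u) \in E^c$, so each fresh vertex $v^{b_x'}$, attached to $\texttt{enc}(D'')$ through a single outgoing edge $(v^{b_x'}, v^{\nu(x)})$, leaves the layerwise embedding of every vertex of $\texttt{enc}(D')$ unchanged (a quick induction on $\ell$). Consequently $T_{\mathcal{N}, f}(D'')$ and $T_{\mathcal{N}, f}(D')$ coincide on all facts whose constants lie in $\texttt{con}(D')$, and in particular on $H\nu$, yielding the desired $H\nu \in T_{\mathcal{N}, f}(D')$. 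For safe rules this final step is vacuous: $D_\mu = A\mu$, one may take $h(D_\mu) = A\nu \subseteq D'$ directly, and the backward direction collapses to a single application of monotonicity under injective homomorphisms.
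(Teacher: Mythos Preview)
Your proposal is correct and follows essentially the same route as the paper's proof: the forward direction is immediate from soundness, and the backward direction proceeds by constructing $\mu$ to mirror the equality pattern of $\nu$, transferring $H\mu \in T_{\mathcal{N},f}(D_\mu)$ to the target via the combination of isomorphism invariance (Lemma~\ref{lemma:isomorphism}) and monotonic extension (Lemma~\ref{lemma:monotonic_extension}), and finally descending from the augmented dataset to the original one using the direction of GNN aggregation. The only cosmetic differences are that you package the two lemmas into a single ``monotonicity under injective homomorphisms'' step (the paper introduces an intermediate $D_\nu$ and applies them separately), and that you introduce fresh auxiliary constants $b_x'$ rather than reusing the $b_x$ from the statement, which makes the injectivity of $h$ slightly cleaner to verify.
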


\begin{proof}
Let $\mathcal{N}$, $r$, $H$, $A$, and $I$ be as specified in the proposition, and likewise $a_x, b_x$ for each variable $x$ in $r$.
We prove the claim in both directions.

\begin{center}
\textbf{Forward Direction ($\implies$)}
\end{center}

Assume $r$ is sound for $(\mathcal{N}, f)$ and consider arbitrary $\mu, D_\mu$ as specified in the proposition.
Note that $\mu$ is defined on the variables of $r$.
Whichever constants $\mu$ maps to from variables in the body of $r$ appear in $D_\mu$ by virtue of them being in $A \mu$.
For any variable $x$ in $H$ but not in $A$, $\mu(x)$ will also appear in $D_\mu$ by its definition.

Thus, $\mu$ is a substitution mapping from the variables of $r$ to the constants in $D_\mu$.
So by the properties of Datalog, we have $H \mu \in T_r(D_\mu)$.
Then since $r$ is sound, we obtain $H \mu \in T_\mathcal{N}(D_\mu)$, as required.

\begin{center}
\textbf{Backward Direction ($\impliedby$)}
\end{center}

Assume that $H \mu \in T_\mathcal{N}(D_\mu)$ for each $\mu$ as defined in the proposition.
To prove the soundness of $r$, consider an arbitrary dataset $D$: we prove that $T_r(D) \subseteq T_\mathcal{N}(D)$.
Consider an arbitrary fact in $T_r(D)$.
It will have the form $H \nu$ for some substitution $\nu$ from the variables of $r$ to the constants in $D$ such that $D \models B_i \nu$ for each literal $B_i$ in the body of $r$ (which may be an atom from $A$ or inequality from $I$).

We construct a mapping $h$ from constants in the range of $\nu$ to the set $\{ a_x ~|~ x \text{ is a variable of } r \}$.
For each constant $d$ in the range of $\nu$, define $h(d) = a_x$, where $x$ is a variable of $r$ such that $\nu(x) = d$ (if there are multiple such variables $x$, simply choose one).
Notice that $h$ is an injective mapping, since if $h(d_1) = h(d_2) = a_x$, then $d_1 = \nu(x) = d_2$, so $d_1 = d_2$.

We now let $\mu$ be the substitution $\mu(x) = h(\nu(x))$ for each variable $x$ of $r$, and let $D_\mu$ be defined as in the proposition.
Note that for all inequalities $B_i \in I$, $D \models B_i \nu$ and $h$ being injective mean that $D_\mu \models B_i\mu$, so $\mu$ satisfies the constraints on its definition given in the proposition.
Thus, $H \mu \in T_{\mathcal{N}, f}(D_\mu)$.

Define $D_\nu = A\nu \cup \{ R(b_x, \nu(x)) ~|~ x$ is a variable occurring in $H$ but not in $A \}$ and $D' = D \cup \{ R(b_x, \nu(x)) ~|~ \text{$x$ is a variable occurring in $H$ but not in $A$} \}$, where $R$ is as defined in the proposition.
Then $h(D_\nu) = D_\mu$, so $h$ is an isomorphism from $D_\nu$ to $D_\mu$.
By \Cref{lemma:isomorphism}, $h$ is thus an isomorphism from $T_{\mathcal{N}, f}(D_\nu)$ to $T_{\mathcal{N}, f}(D_\mu)$, and $h(T_{\mathcal{N}, f}(D_\nu)) = T_{\mathcal{N}, f}(D_\mu)$.

Since we have $H \mu \in T_{\mathcal{N}, f}(D_\mu)$, we must then have $h^{-1}(H \mu) \in T_{\mathcal{N}, f}(D_\nu)$, so $H \nu \in T_{\mathcal{N}, f}(D_\nu)$.
But $A\nu \subseteq D$, so $D_\nu \subseteq D'$.
Then by \Cref{lemma:monotonic_extension}, $T_{\mathcal{N}, f}(D_\nu) \subseteq T_{\mathcal{N}, f}(D')$, so $H \nu \in T_{\mathcal{N}, f}(D')$.

Finally, note that the possible extra (binary) facts in $D'$ that are not in $D$ connect constants not in $D$ to constants in $D$, but not the other way round; thus, the direction of GNN message propagation (reverse of the edge direction) ensures that $T_{\mathcal{N}, f}(D')$ and $T_{\mathcal{N}, f}(D)$ coincide on facts derived over the constants of $D$. Since all constants in the range of $\nu$ are in $D$, we have $H \nu \in T_{\mathcal{N}, f}(D)$, as required.

\end{proof}

\subsection{Proposition \ref{prop:rule_captured_effect}} \label{app:proof:prop:rule_captured_effect}
\begin{proposition*}
Let $f$ be a scoring function and $B \rightarrow H$ a safe rule such that for all constants $C$ and $e: C \to \mathbb{R}^{d_f}$, $(f, C, e)$ captures $B \rightarrow H$.
Let $\mathcal{N}$ be a GNN.
Assume that for each atom $B_i$ of $B$, there is an inequality-free rule $A_i \rightarrow B_i$ that is sound for $(\mathcal{N}, f)$.
Then the rule $\bigwedge_i A_i \rightarrow H$ is sound for $(\mathcal{N}, f)$.
\end{proposition*}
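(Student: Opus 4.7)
The plan is to show, for an arbitrary dataset $D$, that $T_r(D) \subseteq T_{\mathcal{N},f}(D)$ where $r = \bigwedge_i A_i \rightarrow H$. Set $C = \texttt{con}(D)$ and let $e : C \to \mathbb{R}^{d_f}$ be the embedding function induced by applying $\mathcal{N}$ to $D$, so that by the definitions of $T_{\mathcal{N},f}$ and $\mathcal{F}_f$ we have $T_{\mathcal{N},f}(D) = \mathcal{F}_f(C,e)$. Take an arbitrary $\alpha \in T_r(D)$ and write $\alpha = H\nu$ for some substitution $\nu$ defined on $\text{var}(r)$ with $D \models A_i\nu$ for each $i$. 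The key observation is that, by the capture hypothesis applied to the pair $(f, C, e)$, it is enough to produce a substitution $\nu^\ast$ agreeing with $\nu$ on $\text{var}(H)$ such that $\mathcal{F}_f(C,e) \models B_i \nu^\ast$ for every $i$.

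To build $\nu^\ast$, extend $\nu$ to include every variable of $B$: each variable in $\text{var}(B) \setminus \text{var}(r)$ is mapped to an arbitrary constant of $C$ (if $C = \emptyset$ then $T_r(D) = \emptyset$ and the claim is trivial). Now fix $i$. Since $\nu^\ast \supseteq \nu$ on $\text{var}(A_i)$ we still have $D \models A_i \nu^\ast$, and because $A_i \rightarrow B_i$ is inequality-free there are no further constraints on $\nu^\ast$ coming from the body. The Datalog semantics then give $B_i \nu^\ast \in T_{A_i \rightarrow B_i}(D)$, and soundness of $A_i \rightarrow B_i$ for $(\mathcal{N},f)$ yields $B_i \nu^\ast \in T_{\mathcal{N},f}(D) = \mathcal{F}_f(C,e)$. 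Doing this for every $i$ produces $\mathcal{F}_f(C,e) \models B \nu^\ast$.

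It remains to invoke the hypothesis. Since $(f, C, e)$ captures $B \rightarrow H$, the Definition of capturing applied to the substitution $\nu^\ast$ yields $\mathcal{F}_f(C,e) \models H \nu^\ast$. Because $\nu^\ast$ agrees with $\nu$ on $\text{var}(H) \subseteq \text{var}(r)$, we have $H\nu^\ast = H\nu = \alpha$, so $\alpha \in \mathcal{F}_f(C,e) = T_{\mathcal{N},f}(D)$, which finishes the proof.

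The main obstacle is nothing deep but rather the bookkeeping around variables: $B_i$ may contain variables not occurring in $A_i$ (the sound rules $A_i \rightarrow B_i$ are not required to be safe) and may contain variables not occurring in $r$ at all. The inequality-freeness of $A_i \rightarrow B_i$ is exactly what makes the arbitrary extension of $\nu$ harmless, since otherwise an extension could violate a body inequality and void the application of Datalog semantics. Safety of $B \rightarrow H$ is what guarantees $\text{var}(H) \subseteq \text{var}(B)$, so the value of $H\nu^\ast$ depends only on the portion of $\nu^\ast$ that coincides with $\nu$, allowing the final identification $H\nu^\ast = \alpha$.
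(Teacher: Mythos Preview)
Your argument is correct and follows essentially the same route as the paper's proof: pick a dataset $D$, let $C=\texttt{con}(D)$ and $e$ be the embedding induced by $\mathcal{N}$, push each $B_i$ into $\mathcal{F}_f(C,e)$ via soundness of $A_i\rightarrow B_i$, and then invoke the capture hypothesis. The one point where you are more careful than the paper is the explicit extension $\nu\mapsto\nu^\ast$ to cover variables of $B$ that lie outside $\text{var}(r)$; the paper simply writes $B_i\nu$ without addressing whether $\nu$ is defined on all of $\text{var}(B_i)$, so your version is in fact slightly cleaner on this technicality.
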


\begin{proof}

Assume that for all atoms $B_i$ in $B$, rule $r_2^i = A_i \rightarrow B_i$ is sound for $(\mathcal{N}, f)$ and let $r_3 = \bigwedge_i A_i \rightarrow H$.
We prove that for all datasets $D$, $T_{r_3}(D) \subseteq T_{\mathcal{N}, f}(D)$.
To this end, consider a fact in $T_{r_3}(D)$.
It will have the form $H \nu$ for some substitution $\nu$ from the variables of ${r_3}$ to the constants of $D$ such that $D \models \bigwedge_i A_i \nu$.
But then for each $i$, we have $D \models A_i \nu$, and thus $B_i \nu \in T_{r_2^i}(D)$.

We now define $C := \texttt{con}(D)$ and $e: C \to \mathbb{R}^{d_f}$ by $e(c) = \mathbf{v}^c_L$, the final embedding of the vertex corresponding to the constant $c$ after $\mathcal{N}$ is applied to $D$.
We likewise define $\mathbf{v}^a_L, \mathbf{v}^b_L$ for constants $a, b$.

Now consider an arbitrary atom $B_i$ in $B$.
Note that $B_i\nu = R_1(a,b)$, for some constants $a,b$ in the range of $\nu$ and binary predicate $R_1$.
From the soundness of $r_2^i$ and $B_i \nu \in T_{r_2^i}(D)$, we thus obtain $R_1(a,b) \in T_{\mathcal{N}, f}(D)$.

Then since $R_1(a,b) \in T_{\mathcal{N}, f}(D)$, we have $f(R_1, \mathbf{v}^a_L, \mathbf{v}^b_L) \geq t_f$, which implies $f(R_1, e(a), e(b)) \geq t_f$, and thus $R_1(a,b) \in \mathcal{F}_f(C,e)$.

Thus for every fact $B_i \nu$ in $B \nu$, we have $\mathcal{F}_f(C,e) \models B_i \nu$.
So by definition, $\mathcal{F}_f(C,e) \models B \nu$.
But as assumed in the lemma, $(f, C, e)$ captures $B \rightarrow H$.
So we obtain that $\mathcal{F}_f(C,e) \models H \nu$.

Note that $H\nu = R_2(c,d)$, for some constants $c,d$ in the range of $\nu$ and binary predicate $R_2$.
Going through the computation again, this means that $\mathcal{F}_f(C,e) \models R_2(c,d)$, so $f(R_2, e(c), e(d)) \geq t_f$, thus $f(R_2, \mathbf{v}^c_L, \mathbf{v}^d_L) \geq t_f$, which implies $R_2(c, d) \in T_{\mathcal{N}, f}(D)$.
\end{proof}

\subsection{Corollary \ref{cor:rule_pattern_conjunction}} \label{app:proof:cor:rule_pattern_conjunction}
\begin{corollary*}
Let $F$ be a scoring function family that universally captures a rule pattern $\rho$, $B \rightarrow H$ a rule conforming to $\rho$, $\mathcal{N}$ a GNN, and $f \in F$.
Assume that for each atom $B_i$ of $B$, there is an inequality-free rule $A_i \rightarrow B_i$ sound for $(\mathcal{N}, f)$.
Then the rule $\bigwedge_i A_i \rightarrow H$ is sound for $(\mathcal{N}, f)$.
\end{corollary*}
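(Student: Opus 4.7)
The plan is to derive this corollary as a direct application of Proposition~\ref{prop:rule_captured_effect}. The only thing to verify is that the hypotheses of that proposition are satisfied in the setting of the corollary, after which the conclusion is immediate.

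First I would unfold the definition of universal capture (Definition~\ref{def:scoring_func_capture}). Since $F$ universally captures $\rho$ and $f \in F$, and $B \rightarrow H$ is a rule conforming to $\rho$, the definition gives that for every set of constants $C$ and every embedding function $e : C \to \mathbb{R}^{d_f}$, the triple $(f, C, e)$ captures the rule $B \rightarrow H$. This is exactly the hypothesis on $f$ required by Proposition~\ref{prop:rule_captured_effect}.

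Next I would observe that the remaining hypotheses transfer verbatim: the GNN $\mathcal{N}$ is arbitrary, and for each atom $B_i$ of $B$ we are given an inequality-free rule $A_i \rightarrow B_i$ that is sound for $(\mathcal{N}, f)$. Applying Proposition~\ref{prop:rule_captured_effect} with this data yields that the rule $\bigwedge_i A_i \rightarrow H$ is sound for $(\mathcal{N}, f)$, which is precisely the conclusion of the corollary.

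There is essentially no obstacle here, since the corollary is a specialisation of Proposition~\ref{prop:rule_captured_effect} obtained by replacing the per-triple capture assumption with the stronger family-level universal capture assumption. The only subtlety worth flagging in the written proof is that universal capture of a pattern $\rho$ entails capture of every rule conforming to $\rho$ by every triple $(f, C, e)$ with $f \in F$, so the existential over scoring functions that one might expect in an ``exact capture'' style argument does not arise: the given $f$ already does the job, so no extra choice of scoring function is needed.
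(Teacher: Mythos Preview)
Your proposal is correct and follows essentially the same approach as the paper's own proof: unfold the definition of universal capture to obtain that $(f,C,e)$ captures $B\rightarrow H$ for all $C$ and $e$, then apply Proposition~\ref{prop:rule_captured_effect} directly.
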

\begin{proof}

$F$ universally captures $\rho$.
So for all constants $C$ and $e: C \to \mathbb{R}^{d_f}$, we have $(f, C, e)$ captures $B \rightarrow H$, which is a rule conforming to $\rho$.
Then from \Cref{prop:rule_captured_effect}, we have that $\bigwedge_i A_i \rightarrow H$ is sound for $(\mathcal{N}, f)$.
\end{proof}

\subsection{Corollary \ref{cor:rule_pattern_exactly}} \label{app:proof:cor:rule_pattern_exactly}
\begin{corollary*}
Let $F$ be a scoring function family that exactly captures a rule pattern $\rho$, and $B \rightarrow H$ a rule conforming to $\rho$.
Then there exists $f \in F$ such that: for each GNN $\mathcal{N}$ and inequality-free rules $A_i \rightarrow B_i$ sound for $(\mathcal{N}, f)$ for   each $B_i$ in $B$, the rule $\bigwedge_i A_i \rightarrow H$ is sound for $(\mathcal{N}, f)$.
\end{corollary*}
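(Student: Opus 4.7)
The plan is to reduce the statement directly to Proposition~\ref{prop:rule_captured_effect}, mirroring the argument already used for Corollary~\ref{cor:rule_pattern_conjunction} but taking advantage of the weaker ``exactly captures'' quantifier structure to pick a witness scoring function up front.

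First, since $F$ exactly captures $\rho$ and $B \rightarrow H$ conforms to $\rho$, the definition of exact capture supplies an $f \in F$ such that for \emph{every} set of constants $C$ and every embedding function $e : C \to \mathbb{R}^{d_f}$, the tuple $(f, C, e)$ captures $B \rightarrow H$. This $f$ is chosen once and fixed; the remainder of the argument does not depend on the particular GNN.

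Next, I would fix an arbitrary GNN $\mathcal{N}$ and assume that for each body atom $B_i$ of $B$ there is an inequality-free rule $A_i \rightarrow B_i$ that is sound for $(\mathcal{N}, f)$. With $f$ already chosen as in the previous step, the hypothesis of Proposition~\ref{prop:rule_captured_effect} is satisfied: we have a scoring function and a safe rule $B \rightarrow H$ that is captured uniformly across all constants and embeddings, a GNN, and sound inequality-free rules deriving each body atom. Applying Proposition~\ref{prop:rule_captured_effect} directly yields the soundness of $\bigwedge_i A_i \rightarrow H$ for $(\mathcal{N}, f)$, which is the desired conclusion.

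There is no substantial obstacle here; the only subtlety is the order of the quantifiers, namely that the witness $f$ must be produced before quantifying over $\mathcal{N}$, which is exactly what the definition of ``exactly captures'' permits. Since $\rho$ and the conforming rule $B \rightarrow H$ are supplied before any GNN is considered, the witness provided by exact capture works uniformly for every subsequent choice of $\mathcal{N}$ and every family of sound rules $\{A_i \rightarrow B_i\}$, which is precisely the quantifier structure demanded by the corollary.
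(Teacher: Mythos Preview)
Your proposal is correct and follows essentially the same approach as the paper's proof: use the definition of ``exactly captures'' to obtain the witness $f$, then apply Proposition~\ref{prop:rule_captured_effect} for an arbitrary GNN $\mathcal{N}$ and the assumed sound rules. Your explicit remark about the quantifier order is the only addition, and it is accurate.
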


\begin{proof}
$F$ exactly captures $\rho$, so there exists $f \in F$ such that for all constants $C$ and $e: C \to \mathbb{R}^{d_f}$, $(f, C, e)$ captures $B \rightarrow H$, a rule conforming to $\rho$.
So let $\mathcal{N}$ be a GNN with $A_i \rightarrow B_i$ sound for $(\mathcal{N}, f)$ for all atoms $B_i$ of $B$ (where each $A_i$ is a conjunction of atoms).
Then from \Cref{prop:rule_captured_effect}, we have that $\bigwedge_i A_i \rightarrow H$ is sound for $(\mathcal{N}, f)$.
\end{proof}

\subsection{Theorem \ref{thm:max_rule_shape}} \label{app:proof:thm:max_rule_shape}
\begin{theorem*}
Let $\mathcal{N}$ be a monotonic max GNN and $f$ a monotonically increasing scoring function.
Let $\mathcal{P}_\mathcal{N}$ be the Datalog program containing, up to variable renaming, each $(L, |\text{Col}| \cdot \delta_\mathcal{N})$-tree-like rule without inequalities that is sound for $(\mathcal{N}, f)$, where $\delta_\mathcal{N} = \text{max}(\delta_0, ..., \delta_L)$. Then $T_{\mathcal{N}, f}$ and $\mathcal{P}_\mathcal{N}$ are equivalent.
\end{theorem*}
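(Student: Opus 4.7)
The plan is to prove the two inclusions $T_{\mathcal{P}_\mathcal{N}}(D) \subseteq T_{\mathcal{N},f}(D)$ and $T_{\mathcal{N},f}(D) \subseteq T_{\mathcal{P}_\mathcal{N}}(D)$ separately, for every dataset $D$. The first inclusion is immediate from the definition of $\mathcal{P}_\mathcal{N}$, since every rule it contains is sound for $(\mathcal{N},f)$ by construction. All of the work is in the second inclusion: given an arbitrary binary fact $\alpha = R(s,t) \in T_{\mathcal{N},f}(D)$, I need to exhibit a single $(L, |\text{Col}|\cdot\delta_\mathcal{N})$-tree-like rule $r$ without inequalities such that $\alpha \in T_r(D)$ and $r$ is sound for $(\mathcal{N},f)$. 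Since $\mathcal{P}_\mathcal{N}$ collects every such rule (up to renaming), this would place $r \in \mathcal{P}_\mathcal{N}$ and yield $\alpha \in T_{\mathcal{P}_\mathcal{N}}(D)$.

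To build $r$, I would follow the GNN computation on $\texttt{enc}(D)$ backwards from the final-layer embeddings of $v^s$ and $v^t$. I introduce two fresh root variables $x,y$, mapped by an auxiliary substitution $\nu$ to $s,t$, and decorate them with the unary atoms $U_p(x)$ (respectively $U_p(y)$) for every $p$ such that $U_p(s) \in D$ (respectively $U_p(t) \in D$). I then expand each current variable inductively down to level $0$: at layer $\ell$ and vertex $v^{\nu(z)}$, for each colour $c \in \text{Col}$ and each feature dimension $j \in \{1,\dots,\delta_{\ell-1}\}$ whose contribution to layer $\ell$ uses a nonempty $c$-neighbourhood, I pick a single $c$-neighbour $v^{s'}$ that realises the componentwise maximum in coordinate $j$, introduce a fresh variable $z'$ with $\nu(z') = s'$, and append $R^c(z, z')$ together with the unary atoms reflecting $s'$'s initial features. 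This single-neighbour choice per $(c,j)$ is where the max aggregation assumption is used; it bounds fan-out by $|\text{Col}|\cdot\delta_{\ell-1} \leq |\text{Col}|\cdot\delta_\mathcal{N}$, while the inductive depth $L$ bounds variable depth by $L$. Fresh variables at every step ensure the tree-like conditions (no shared variables between siblings or between the $x$- and $y$-subtrees) hold. The head $R(x,y)$ and $\nu$'s construction jointly guarantee $\alpha \in T_r(D)$.

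Soundness of $r$ is then shown via a monotonicity argument parallel to Lemma \ref{lemma:monotonic_extension}. For any dataset $D'$ and substitution $\nu'$ such that $D' \models \Gamma\nu'$, I prove by induction on the layer $\ell$ that, for every variable $z$ in $\Gamma$ of depth at most $L-\ell$, the embedding of $v^{\nu'(z)}$ in $\mathcal{N}(D')$ dominates componentwise that of $v^{\nu(z)}$ in $\mathcal{N}(D)$. The base case follows because every unary atom forced by $\Gamma$ corresponds to a $1$ in the initial feature of $v^{\nu'(z)}$. The inductive step uses (i) that the chosen $c$-neighbour in $D$ is exactly matched by a neighbour in $D'$ forced by $\Gamma$, so the max-$1$-sum aggregation in $D'$ is at least as large coordinatewise, and (ii) that non-negative weights and monotonically increasing activations preserve the inequality. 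Applied at $\ell=L$ to the two root variables, monotonicity of $f$ then gives $f(R,\mathbf{v}^{\nu'(x)}_L,\mathbf{v}^{\nu'(y)}_L) \geq f(R,\mathbf{v}^s_L,\mathbf{v}^t_L) \geq t_f$, hence $R(\nu'(x),\nu'(y)) \in T_{\mathcal{N},f}(D')$, which is the soundness of $r$.

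The main obstacles are bookkeeping rather than conceptual. First, the case $s=t$ must be treated separately: the two "root" variables collapse into one, and the construction must start from a single root with its decorations supplied only once; otherwise an analogous argument runs through. Second, one has to be careful that the chosen neighbour at layer $\ell$ is itself handled at the correct layer $\ell-1$ of the tree so that the inductive soundness proof and the fan-out accounting both apply uniformly; introducing truly fresh variables at each step and tracking levels explicitly sidesteps accidental variable sharing. Finally, the argument crucially uses both monotonicity ingredients simultaneously, namely monotonicity of $\mathcal{N}$ under injective homomorphisms and monotonicity of $f$; combining them at the final layer is what lets tree-like rules, which only assert lower bounds on the structure around $x$ and $y$, still certify that a score above $t_f$ is preserved.
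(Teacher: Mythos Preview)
Your proposal is essentially correct and follows the same approach as the paper: construct the rule by unrolling the max computation into two disjoint trees rooted at fresh variables $x,y$, then prove soundness by an inductive monotonicity comparison layer by layer, finishing with the monotonicity of $f$.

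The one step that would fail as written is your handling of the case $s=t$. Collapsing the two root variables into a single root yields a rule with head $R(x,x)$, which is \emph{not} of the required tree-like form $\varphi_x \land \varphi_y \rightarrow R(x,y)$ (the definition needs two distinct root variables with variable-disjoint subformulas), so the rule would not belong to $\mathcal{P}_\mathcal{N}$. The paper does \emph{not} collapse: it keeps two distinct fresh variables $x,y$ with $\nu(x)=\nu(y)=s$ and builds two independent (in fact identical) subtrees, one under each. The resulting rule has head $R(x,y)$ and is tree-like; its soundness must then be argued for arbitrary $\nu'$, including substitutions with $\nu'(x)\neq\nu'(y)$. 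This still goes through because both $v^{\nu'(x)}$ and $v^{\nu'(y)}$ inherit, via their respective subtrees, structure dominating that of $v^s$, so the layer-$L$ embeddings of both dominate $\mathbf{v}^s_L$ and the monotonicity of $f$ lifts $f(R,\mathbf{v}^s_L,\mathbf{v}^s_L)\geq t_f$ to $f(R,\mathbf{v}^{\nu'(x)}_L,\mathbf{v}^{\nu'(y)}_L)\geq t_f$.
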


\begin{proof}
We show that $T_{\mathcal{N}, f}(D) = T_{P_\mathcal{N}}(D)$ holds for every dataset $D$.
By definition of $P_\mathcal{N}$, each rule in $P_\mathcal{N}$ is sound for $(\mathcal{N}, f)$, so we have $T_r(D) \subseteq T_{\mathcal{N}, f}(D)$ for every $r \in P_\mathcal{N}$, and thus $T_{P_\mathcal{N}}(D) \subseteq T_{\mathcal{N}, f}(D)$.
So we prove $T_{\mathcal{N}, f}(D) \subseteq T_{P_\mathcal{N}}(D)$.

Let $\alpha$ be an arbitrary binary fact in $T_{\mathcal{N}, f}(D)$.
Observe that $\alpha$ can be of the form $R(s, t)$, for distinct constants $s, t$, or of the form $R(s, s)$, for a constant $s$.
We will denote these two cases as \textbf{(Case 1)} and \textbf{(Case 2)}, respectively. We will prove both cases simultaneously, making a separate argument when each case needs to be treated differently.

We construct a $(L, |\text{Col}| \cdot \delta_\mathcal{N})$-tree-like rule without inequalities $r$, such that $\alpha \in T_r(D)$ and $r$ is sound for $(\mathcal{N}, f)$.
Together, these will imply that $\alpha \in T_{P_\mathcal{N}}(D)$, as required for the proof.

Let $G = (V, E, \lambda)$ be the canonical encoding of $D$ and let $\lambda_0, ..., \lambda_L$ the vertex labeling functions arising from the application of $\mathcal{N}$ to $G$.

\begin{center}
\textbf{Setup of Construction}
\end{center}

We now construct a binary atom $H$, a conjunction $\Gamma$, a substitution $\nu$ from the variables of $\Gamma$ to the set of constants in $D$, and a graph $U$ (without vertex labels) where each vertex in $U$ is of the form $u^x$ for $x$ a variable and each edge has a colour in Col.
We also construct mappings $M_{c,\ell,j}: U \to V$ for each $c \in \text{Col}$, $1 \leq \ell \leq L$, and $1 \leq j \leq \delta_{\ell - 1}$.
Intuitively, $M_{c,\ell,j}(u^x)$ represents the $c$-coloured neighbour of $v_t$ that contributes to the result of the max aggregation in vector index $j$ at layer $\ell$ (where $v_t$ is the vertex corresponding to the constant $\nu(x)$).
We initialise $\Gamma$ as the empty conjunction, and we initialise $\nu$ and each $M_{c,\ell,j}$ as empty mappings.

We will assign to each vertex in $U$ a \emph{level} between $0$ and $L$. We will identify a single vertex from $U$ as the \emph{first root} vertex. Furthermore, if $\alpha$ is of the form $R(s, t)$, then we identify another distinct vertex from $U$ as the \emph{second root} vertex.

In the remainder of the proof, we use letters $s, t$ for constants in $D$, letters $x, y$ for variables, letters $v, w$ for the vertices in $V$, and (possibly indexed) letter $u$ for the vertices in $U$.
Our construction is by induction from level $L$ down to level $1$. The base case defines one or two vertices of level $L$. Then, for each
$1 \leq \ell \leq L$, the induction step considers the vertices of level $\ell$ and defines new vertices of level $\ell - 1$.

\begin{center}
\textbf{Construction: Base Case}
\end{center}

\textbf{(Case 1)} If $\alpha$ is of the form $R(s, t)$ for a binary predicate $R$, then $V$ contains vertices $v_s, v_t$.
We introduce fresh variables $x, y$, and define $\nu(x) = s, \nu(y) = t$;
we define $H = R(x, y)$;
we introduce vertices $u^x, u^y$ of level $L$;
and we make $u^x$ the first root vertex and $u^y$ the second root vertex.
Finally, we extend $\Gamma$ with atom $U(x)$ for each $U(s) \in D$, and also extend $\Gamma$ with atom $U(y)$ for each $U(t) \in D$.

\textbf{(Case 2)} If on the other hand, $\alpha$ is of the form $R(s, s)$, then $V$ contains a vertex $v^s$.
We introduce fresh variables $x, y$, and define $\nu(x) = s, \nu(y) = s$;
we define $H = R(x, y)$;
we introduce vertices $u^x, u^y$ of level $L$;
and we make $u^x$ the first root vertex and $u^y$ the second root vertex.
Finally, we extend $\Gamma$ with atoms $U(x)$ and $U(y)$ for each $U(s) \in D$.

\begin{center}
\textbf{Construction: Induction Step}
\end{center}

For the induction step, consider $1 \leq \ell \leq L$ and assume that all vertices of level greater than or equal to $\ell$ have been
already defined.
We then consider each vertex of the form $u^x$ of level $\ell$. Let $t = \nu(x)$.
For each colour $c \in \text{Col}$, each layer $1 \leq \ell' < \ell$, and each dimension $j \in \{ 1, ..., \delta_{\ell' - 1} \}$ for which there exists some $(v^t, w) \in E^c$, let

\begin{align*}
&M_{c, \ell', j}(u^x) = w, \text{for } (v^t, w) \in E^c \text{ and } \mathbf{w}_{\lambda_{\ell'}}[j] =\\
&\text{max}(\multisetopen \mathbf{w}_{\lambda_{\ell'}}[j] ~|~ (v^t, w) \in E^c \multisetclose)
\end{align*}

Intuitively, $w$ picks out the maximum $c$-coloured neighbour of $v^t$ in feature vector index $j$ at layer $\ell'$.
Note that there may be multiple such $w$ that satisfy the above equation: any of them may be chosen.
Note that such a $w$ must exist, since we assumed that there exists some $(v^t, w) \in E^c$.
Also, $w$ must be of the form $v^s$ for some constant $s$ in $D$.

We then introduce a fresh variable $y$ and define $\nu(y) = s$.
We introduce a vertex $u^y$ of level $\ell - 1$ and an edge $E^c(u^x, u^y)$ to $U$.
Finally, we append to $\Gamma$ the conjunction

\begin{align}
E^c(x, y) ~\land~ ~\bigwedge_{U(s) \in D} U(y) ,
\end{align}

where $v^s = w = M_{c, \ell', j}(u^x)$ and $U$ stands in for arbitrary unary predicates.
This step adds at most $|\text{Col}| \cdot \delta_{\ell' - 1} \cdot \ell$ new successors of $u^x$.
This completes the inductive construction.

\begin{center}
\textbf{Rule Shape and Immediate Consequences}
\end{center}

At this point in the proof, $\Gamma$ consists of a conjunction of two formulas, one of which is $(L, |\text{Col}| \cdot \delta_\mathcal{N})$-tree-like for $x$ and the other of which is $(L, |\text{Col}| \cdot \delta_\mathcal{N})$-tree-like for $y$ (where $x, y$ were defined in the base case).
We also have $H = R(x, y)$, so $\Gamma \rightarrow H$ is a $(L, |\text{Col}| \cdot \delta_\mathcal{N})$-tree-like rule.
We define this rule to be $r$.

Furthermore, the construction of $\nu$ means $D \models \Gamma \nu$.
So we have $H \nu \in T_r(D)$, with $H \nu = \alpha$, so $\alpha \in T_r(D)$, as required.

Note that if $s, t$ are such that they are not mentioned in any fact in $D$, then $\Gamma$ is empty, and $r$ will then have the form $\top \rightarrow R(x, y)$.
Under our definitions, $r$ is still a $(L, |\text{Col}| \cdot \delta_\mathcal{N})$-tree-like rule in this case, since $\top$ is tree-like for $x$ and $y$.
This applies similarly if only one of $s, t$ is not mentioned in any fact in $D$.

\begin{center}
\textbf{Rule Soundness}
\end{center}

To complete the proof, we show that $r$ is sound for $(\mathcal{N}, f)$.
For this purpose, let $D'$ be an arbitrary dataset and $\alpha'$ an arbitrary ground atom such that $\alpha' \in T_r(D')$.
Then there exists a substitution $\nu'$ such that $D' \models \Gamma \nu'$.

Let the graph $G = (V', E', \lambda')$ be the canonical encoding of $D'$ and $\lambda_0', ..., \lambda_L'$ the functions labelling the vertices of $G'$ when $\mathcal{N}$ is applied to it.
We will use letters $p, q, q'$ to denote vertices of $V'$.

We prove the following statement by induction, which relies on the monotonicity of $\mathcal{N}$: for each $0 \leq \ell \leq L$ and each vertex $u^x$ of $U$ whose level is at least $\ell$, we have $\mathbf{v}_{\lambda_\ell}[i] \leq \mathbf{p}_{\lambda_\ell'}[i]$ for each $i \in \{ 1, ..., \delta_\ell \}$, where $v = v^{\nu(x)}$ and $p = v^{\nu'(x)}$.

\begin{center}
\textbf{Monotonicity: Base Case}
\end{center}

For the base case, $\ell = 0$, consider an arbitrary $1 \leq i \leq \delta_0$ and $u^x \in U$.
Let $v = v^{\nu(x)}$ and $p = v^{\nu'(x)}$.
Since $\ell = 0$, values $\mathbf{v}_{\lambda_0}[i], \mathbf{p}_{\lambda_0'}[i]$ come from the canonical encoding, so $\mathbf{v}_{\lambda_0}[i] \in \{ 0, 1 \}$ and $\mathbf{p}_{\lambda_0'}[i] \in \{ 0, 1 \}$.

Thus, we prove that $\mathbf{v}_{\lambda_0}[i] = 1$ implies $\mathbf{p}_{\lambda_0'}[i] = 1$.
By definition of the canonical encoding, $\mathbf{v}_{\lambda_0}[i] = 1$ implies $U_i(x \nu) \in D$.
From the construction of $\Gamma$, we have $U_i(x) \in \Gamma$.
Also, by definition of $\nu'$, we have $D' \models \Gamma \nu'$.
So $U_i(x \nu') \in D'$ and $\mathbf{p}_{\lambda_0'}[i] = 1$, as required.

\begin{center}
\textbf{Monotonicity: Induction Step}
\end{center}

For the induction step, assume that the property holds for some $\ell - 1$.
Consider an arbitrary vertex $u^x \in U$ with level at least $\ell$, $c \in \text{Col}$, and $i \in \{ 1, ..., \delta_i \}$.
Let $v = v^{\nu(x)}$ and $p = v^{\nu'(x)}$.
Then $\mathbf{v}_{\lambda_\ell}[i], \mathbf{p}_{\lambda_\ell'}[i]$ are computed as follows:

\begin{align} \label{eq:v_max_update}
&\mathbf{v}_{\lambda_\ell}[i] = \sigma_\ell \biggl(
\mathbf{b}_\ell[i] +
\sum_{j=1}^{\delta_{\ell-1}}
\mathbf{A}_\ell[i, j] \mathbf{v}_{\lambda_{\ell - 1}}[j] + \\
&\sum_{c \in \text{Col}} \sum_{j=1}^{\delta_{\ell - 1}}
\mathbf{B}_\ell[i, j]
\text{max} \multisetopen
\textbf{w}_{\lambda_{\ell - 1}}[j] ~|~ (v, w) \in E^c
\multisetclose
\biggr)
\end{align}

\begin{align} \label{eq:p_max_update}
&\mathbf{p}_{\lambda_\ell'}[i] = \sigma_\ell \biggl(
\mathbf{b}_\ell[i] +
\sum_{j=1}^{\delta_{\ell-1}}
\mathbf{A}_\ell[i, j] \mathbf{p}_{\lambda_{\ell - 1}'}[j] + \\
&\sum_{c \in \text{Col}} \sum_{j=1}^{\delta_{\ell - 1}}
\mathbf{B}_\ell[i, j]
\text{max} \multisetopen
\textbf{q}'_{\lambda'_{\ell - 1}}[j] ~|~ (p, q') \in E'^c
\multisetclose
\biggr)
\end{align}

The induction assumption ensures $\mathbf{v}_{\lambda_{\ell - 1}}[j] \leq \mathbf{p}_{\lambda_{\ell - 1}'}[j]$ for each $1 \leq j \leq \delta_{\ell-1}$.
Also for each $c \in \text{Col}$ and $1 \leq j \leq \delta_{\ell - 1}$, we have $\text{max} \multisetopen \textbf{w}_{\lambda_{\ell - 1}}[j] ~|~ (v, w) \in E^c \multisetclose = \mathbf{w}_{\lambda_{\ell - 1}}[j]$, where $w = M_{c, \ell - 1, j}(u^x)$.

If $w = M_{c, \ell - 1, j}(u^x)$ is undefined, then there does not exists a $(v^t, w) \in E^c$, and so $\text{max} \multisetopen \textbf{w}_{\lambda_{\ell - 1}}[j] ~|~ (v, w) \in E^c \multisetclose = 0$.
Then $\mathbf{w}_{\lambda_{\ell - 1}}[j] \leq \text{max} \multisetopen \textbf{q}'_{\lambda'_{\ell - 1}}[j] ~|~ (p, q') \in E'^c \multisetclose$ holds trivially, since node values are non-negative.
So instead assume that $w = M_{c, \ell - 1, j}(u^x)$ is defined.

Recall that $w$ has the form $v^s$, where $s$ is a constant in $D$.
Furthermore, by construction of $U$, there is a vertex $u^y$ in $U$ of level $\ell - 1$ such that $\nu(y) = s$ and $E^c(u^x, u^y)$ is in $U$.

Furthermore, $\Gamma$ contains the atom $E^c(x, y)$.
We then have $E^c(\nu'(x), \nu'(y)) \in D'$.
Thus, $q = v^{\nu'(y)}$ is a $c$-neighbour of $v^{\nu'(x)}$ in $G'$.
The induction assumption means that for $w = v^{\nu(y)}$ and $q = v^{\nu'(y)}$, we have $\mathbf{w}_{\lambda_{\ell - 1}}[j] \leq \mathbf{q}_{\lambda'_{\ell - 1}}[j]$.
So $\text{max} \multisetopen \textbf{w}_{\lambda_{\ell - 1}}[j] ~|~ (v, w) \in E^c \multisetclose \leq \text{max} \multisetopen \textbf{q}'_{\lambda'_{\ell - 1}}[j] ~|~ (p, q') \in E'^c \multisetclose$.

Thus, by the computations in \Cref{eq:v_max_update} and \Cref{eq:p_max_update}, and since all elements of $\mathbf{A}_\ell$ and each $\mathbf{B}_\ell^c$ are non-negative and $\sigma_\ell$ is monotonically increasing, we have $\mathbf{v}_{\lambda_\ell}[i] \leq \mathbf{p}_{\lambda_\ell'}[i]$, as required.

\begin{center}
\textbf{Conclusion}
\end{center}

Recall that $\alpha' \in T_r(D')$, so $\alpha'$ has the form $R(s', t')$ with $s' = \nu'(x)$ and $t' = \nu'(y)$.

\textbf{(Case 1)} if $\alpha$ has the form $R(s, t)$, then $R(s, t) \in T_{\mathcal{N}, f}(D)$ for $s = \nu(x)$ and $t = \nu(y)$, since $\alpha = R(s, t)$ is the original binary fact we assumed that the model produces from $D$.

Now let $v = v^s, w = v^t, p = v^{s'}, q = v^{t'}$.
Then $R(s, t) \in T_{\mathcal{N}, f}(D)$ implies $f(R, \mathbf{v}_{\lambda_L}, \mathbf{w}_{\lambda_L}) \geq t_f$.

The above property ensures that $\mathbf{v}_{\lambda_L}[i] \leq \mathbf{p}_{\lambda_L'}[i]$ and $\mathbf{w}_{\lambda_L}[i] \leq \mathbf{q}_{\lambda_L'}[i]$ for all $i \in \{ 1, ..., \delta_L \}$.
Thus, since $f$ is monotonically increasing, we have $f(R, \mathbf{v}_{\lambda_L}, \mathbf{w}_{\lambda_L}) \leq f(R, \mathbf{p}_{\lambda_L'}, \mathbf{q}_{\lambda_L'})$.

So $f(R, \mathbf{p}_{\lambda_L'}, \mathbf{q}_{\lambda_L'}) \geq t_f$, which implies that $R(s', t') \in T_{\mathcal{N}, f}(D')$, as required to show the soundness of the rule.

\textbf{(Case 2)} if on the other hand, $\alpha$ has the form $R(s, s)$, then $R(s, s) \in T_{\mathcal{N}, f}(D)$ for $s = \nu(x)$, since $\alpha = R(s, s)$ is the original binary fact we assumed that the model produces from $D$.

Now let $v = v^s, p = v^{s'}, q = v^{t'}$.
Then $R(s, s) \in T_{\mathcal{N}, f}(D)$ implies that $f(R, \mathbf{v}_{\lambda_L}, \mathbf{v}_{\lambda_L}) \geq t_f$.
Recall that $\nu(x) = \nu(y) = s$, so we have $v = v^{\nu(x)} = v^{\nu(y)}$, $p = v^{\nu'(x)}$, and $q = v^{\nu'(y)}$.

Thus, the above property ensures that $\mathbf{v}_{\lambda_L}[i] \leq \mathbf{p}_{\lambda_L'}[i]$ and $\mathbf{v}_{\lambda_L}[i] \leq \mathbf{q}_{\lambda_L'}[i]$ for all $i \in \{ 1, ..., \delta_L \}$.
Thus, since $f$ is monotonically increasing, we have $f(R, \mathbf{v}_{\lambda_L}, \mathbf{v}_{\lambda_L}) \leq f(R, \mathbf{p}_{\lambda_L'}, \mathbf{q}_{\lambda_L'})$.

So $f(R, \mathbf{p}_{\lambda_L'}, \mathbf{q}_{\lambda_L'}) \geq t_f$, which implies that $R(s', t') \in T_{\mathcal{N}, f}(D')$, as required to show the soundness of the rule.

\end{proof}

\subsection{Capacity Setup} \label{app:capacity_setup}
For the following proofs, we cite from the paper ``From Monotonic Graph Neural Networks to Datalog and Back: Expressive Power and Practical Applications'' \cite{tena2025expressive}, which has not yet been published.

To start, we adopt the definition of the sets $\mathcal{X}_{\ell, i}$ as given in \changemarker{\cite[Definition 13]{tena2025expressive}}.
We compute ``capacity'' in \Cref{alg:gnn_capacity}, which is adapted from \changemarker{\cite[Algorithm 3]{tena2025expressive}} by changing only the first line.

\begin{algorithm}
\caption{Computing Capacity} \label{alg:gnn_capacity}

\Input{A monotonic max-sum GNN $\mathcal{N}$ and a non-negative bilinear scoring function $f$}
Let $\alpha_L := \alpha$, as defined in \Cref{def:bilinear_capacity} \;
\For{$\ell$ from $L$ down to $1$}
{
\If{all elements of $\mathbf{A}_\ell$ and all $\mathbf{B}_\ell^c$ are $0$ or $\bigcup_i \mathcal{X}_{\ell-1,i} = \{ 0 \}$}
{
$\mathcal{C}_\ell := \mathcal{C}_{\ell-1} := \ldots := \mathcal{C}_1 := 0$ \;
\Return
}

$w_\ell := $ the least non-zero element of $\mathbf{A}_\ell$ and all $\mathbf{B}_\ell^c$ \;
$\epsilon_\ell := $ the least non-zero element of $\bigcup_i \mathcal{X}_{\ell-1,i}$ \;
$\beta_\ell := $ the least natural number such that $\sigma_\ell(\beta_\ell) \geq \alpha_\ell$ \;
$b_\ell := $ the least element of $\mathbf{b}_\ell$ \;
$\mathcal{C}_\ell := \text{min}(k_\ell, \text{max}(0, \lceil \frac{\beta_\ell - b_\ell}{w_\ell \cdot \epsilon_\ell} \rceil)$ \;
$\alpha_{\ell - 1} := \frac{\beta_\ell - b_\ell}{w_\ell}$ \;

}
\end{algorithm}
We then define the capacity of a max-sum GNN $\mathcal{N}$ and bilinear scoring function $f$.
\begin{definition}
The \emph{capacity} $\mathcal{C}_\ell$ of each layer $\ell$ of $\mathcal{N}$ is defined in \Cref{alg:gnn_capacity}. The \emph{capacity} of $(\mathcal{N}, f)$ is defined as $\mathcal{C}_{\mathcal{N}, f} = \text{max}\{\mathcal{C}_1, ..., \mathcal{C}_L\}$.
\end{definition}

\Cref{alg:gnn_capacity} makes reference to a new definition we provide, \Cref{def:bilinear_capacity}, which yields a value $\alpha$ and is presented again below.

\begin{definition*}
Consider a monotonic max-sum GNN $\mathcal{N}$ and a non-negative bilinear scoring function $f$.
For each binary relation $R$ in the signature, where $\mathbf{M}_R$ is its relation matrix in $f$, we define: $\alpha_R := 1$ if all elements of $\mathbf{M}_R$ are $0$ or $\bigcup_i \mathcal{X}_{L,i} = \{ 0 \}$.
Otherwise, we define $\alpha_R := $ the least natural number such that $\alpha_R \cdot w \cdot \epsilon \geq t_f$, where $w$ is the least non-zero element of $\mathbf{M}_R$ and $\epsilon$ is the least non-zero element of $\bigcup_i \mathcal{X}_{L,i}$.
Then, we define $\alpha := \text{max}\{ \alpha_R ~|~ \text{for each binary relation $R$ in the signature} \}$.
\end{definition*}

Intuitively, $\alpha$ represents a value such that once a vertex feature (output of a GNN) is already greater than it, increasing it further (by adding facts to the input dataset) cannot change the output of $T_{\mathcal{N}, f}$.

Note that \changemarker{\cite[Algorithm 4]{tena2025expressive}} can be used as defined to compute the smallest elements of each $\mathcal{X}_{\ell,i}$, which justifies the use of the term ``algorithm'' for \Cref{alg:gnn_capacity} and shows that $\alpha$ from \Cref{def:bilinear_capacity} can be computed.
\changemarker{\cite[Theorem 18]{tena2025expressive}} proves that \changemarker{\cite[Algorithm 4]{tena2025expressive}} is correct and terminates on all inputs.

During the proof of \changemarker{\cite[Theorem 17]{tena2025expressive}}, the following claim is proved, pertaining to max-sum GNNs whose aggregation has been restricted based on their capacity.

\begin{proposition} \label{prop:max_sum_lower_bound}
Let $D$ be a dataset, $\mathcal{N}$ a max-sum GNN with unbounded activation functions, and $\mathcal{N}'$ the GNN obtained from $\mathcal{N}$ by replacing $k_\ell$ with the capacity $\mathcal{C}_\ell$ for each $\ell \in \{1, ..., L\}$.
For vertex $v \in \texttt{enc}(D)$ and layer $\ell$, we define $\mathbf{v}_\ell, \mathbf{v}_\ell'$ to be the labelling of the vertices after layer $\ell$ by applying $\mathcal{N}, \mathcal{N'}$ to $\texttt{enc}(D)$, respectively.
Let $\ell_{\text{st}}$ be the largest $\ell \in \{ 1, ..., L \}$ such that either all elements of $\mathbf{A}_\ell$ and all $\mathbf{B}_\ell^c$ are $0$, or $\bigcup_i \mathcal{X}_{\ell-1,i} = \{ 0 \}$.
If such an $\ell$ does not exist,let  $\ell_{\text{st}} = 0$.

Then for each vertex $v \in \texttt{enc}(D)$, layer $\ell \in \{ \ell_{\text{st}}, ..., L \}$, and position $i \in \{ 1, ..., \delta_\ell \}$, either $\mathbf{v}_\ell[i] = \mathbf{v}_\ell'[i]$ or $\mathbf{v}_\ell[i] > \mathbf{v}_\ell'[i] \geq \alpha_\ell$.
\end{proposition}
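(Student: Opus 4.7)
The plan is to prove the statement by induction on $\ell$, running from $\ell = \ell_{\text{st}}$ up to $\ell = L$. As a preliminary observation, a simple auxiliary induction on $\ell$ (using $\mathcal{C}_\ell \leq k_\ell$, non-negativity of all weights, and monotonicity of each $\sigma_\ell$) establishes $\mathbf{v}_\ell'[i] \leq \mathbf{v}_\ell[i]$ for every vertex $v$, index $i$, and layer $\ell \geq 0$. This handles the base case $\ell = \ell_{\text{st}}$ directly: if $\ell_{\text{st}}=0$, then $\mathbf{v}_0 = \mathbf{v}_0'$ comes from the canonical encoding; if $\ell_{\text{st}} > 0$, then the defining condition of $\ell_{\text{st}}$ ensures that at this layer either the self-loop and aggregation coefficients vanish, or all relevant inputs (in $\mathcal{N}$ by definition of $\mathcal{X}_{\ell_{\text{st}}-1,i}$ and thus also in $\mathcal{N}'$ by the preliminary bound together with non-negativity) are zero, so $\mathbf{v}_{\ell_{\text{st}}} = \mathbf{v}_{\ell_{\text{st}}}' = \sigma_{\ell_{\text{st}}}(\mathbf{b}_{\ell_{\text{st}}})$.

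For the inductive step, I fix $\ell > \ell_{\text{st}}$ and assume the property at layer $\ell-1$; in particular, for every vertex $u$ and index $j$, either $\mathbf{u}_{\ell-1}[j] = \mathbf{u}_{\ell-1}'[j]$ or $\mathbf{u}_{\ell-1}[j] > \mathbf{u}_{\ell-1}'[j] \geq \alpha_{\ell-1}$. If $\mathbf{v}_\ell[i] = \mathbf{v}_\ell'[i]$, we are done; otherwise, I must show $\mathbf{v}_\ell'[i] \geq \alpha_\ell$. Since $\sigma_\ell$ is monotonically increasing and $\sigma_\ell(\beta_\ell) \geq \alpha_\ell$ by the definition of $\beta_\ell$ in \Cref{alg:gnn_capacity}, it suffices to exhibit a single summand in the pre-activation of $\mathbf{v}_\ell'[i]$ of magnitude at least $\beta_\ell - b_\ell = w_\ell \cdot \alpha_{\ell-1}$ (using $\mathbf{b}_\ell[i] \geq b_\ell$ and that all remaining summands are non-negative).

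To find such a summand, I trace the origin of the strict inequality into cases. Case (a): some $j$ has $\mathbf{A}_\ell[i,j] \geq w_\ell$ and $\mathbf{v}_{\ell-1}[j] > \mathbf{v}_{\ell-1}'[j]$; the inductive hypothesis yields $\mathbf{v}_{\ell-1}'[j] \geq \alpha_{\ell-1}$, so the self-loop term contributes $\mathbf{A}_\ell[i,j] \cdot \mathbf{v}_{\ell-1}'[j] \geq w_\ell \cdot \alpha_{\ell-1}$, as required. Case (b): for some colour $c$ and index $j$ with $\mathbf{B}_\ell^c[i,j] \geq w_\ell$, the max-$k_\ell$-sum of $c$-neighbour values in $\mathcal{N}$ strictly exceeds the max-$\mathcal{C}_\ell$-sum in $\mathcal{N}'$. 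This splits into (b1) strictly more than $\mathcal{C}_\ell$ non-zero $c$-neighbour values of $v$ appear in $\mathcal{N}$ at position $j$, and (b2) for at least one matched top neighbour, the $\mathcal{N}$-value strictly exceeds the corresponding $\mathcal{N}'$-value. In (b2) the inductive hypothesis forces that matched $\mathcal{N}'$-value to be $\geq \alpha_{\ell-1}$, and that single summand already yields the required bound $w_\ell \cdot \alpha_{\ell-1}$. In (b1), the top $\mathcal{C}_\ell+1$ $\mathcal{N}$-entries are each $\geq \epsilon_\ell$, and by the inductive hypothesis their $\mathcal{C}_\ell$ matched $\mathcal{N}'$-counterparts are each $\geq \min(\epsilon_\ell, \alpha_{\ell-1})$; the arithmetic identity $\mathcal{C}_\ell \cdot \epsilon_\ell \geq \alpha_{\ell-1}$ built into \Cref{alg:gnn_capacity} via $\mathcal{C}_\ell = \lceil \alpha_{\ell-1}/\epsilon_\ell \rceil$ (together with $\mathcal{C}_\ell \geq 1$ whenever $\alpha_{\ell-1} > 0$) then forces $\mathcal{C}_\ell \cdot \min(\epsilon_\ell, \alpha_{\ell-1}) \geq \alpha_{\ell-1}$, and multiplying by $\mathbf{B}_\ell^c[i,j] \geq w_\ell$ delivers the required summand.

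The main obstacle will be sub-case (b1): correctly pairing up top neighbours between $\mathcal{N}$ and $\mathcal{N}'$ through the inductive hypothesis and verifying the arithmetic $\mathcal{C}_\ell \cdot \min(\epsilon_\ell, \alpha_{\ell-1}) \geq \alpha_{\ell-1}$ in both regimes $\alpha_{\ell-1} \leq \epsilon_\ell$ and $\alpha_{\ell-1} > \epsilon_\ell$. The unbounded-activation hypothesis is essential, as it guarantees that $\beta_\ell$—the least natural number with $\sigma_\ell(\beta_\ell) \geq \alpha_\ell$—actually exists, so that \Cref{alg:gnn_capacity} is well-defined and the chain of inequalities closes to $\mathbf{v}_\ell'[i] \geq \sigma_\ell(\beta_\ell) \geq \alpha_\ell$.
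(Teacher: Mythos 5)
The paper does not actually prove \Cref{prop:max_sum_lower_bound}: it imports the claim from the proof of Theorem~17 in \cite{tena2025expressive} and only argues that substituting the new definition of $\alpha_L$ (\Cref{def:bilinear_capacity}) for the classifier-based one is harmless, since the cited proof never inspects how $\alpha_L$ is obtained. Your proposal is therefore a genuinely different contribution: a self-contained layer-by-layer induction in place of a citation. The skeleton is the natural one and, as far as I can check it, correct. The preliminary bound $\mathbf{v}'_\ell[i]\leq\mathbf{v}_\ell[i]$ is right (it needs non-negativity of the aggregated values, which holds because activations have non-negative range); the base case correctly covers both $\ell_{\text{st}}=0$ and $\ell_{\text{st}}>0$, including the observation that $0\leq\mathbf{v}'_{\ell_{\text{st}}-1}[j]\leq\mathbf{v}_{\ell_{\text{st}}-1}[j]=0$ transfers the vanishing of inputs from $\mathcal{N}$ to $\mathcal{N}'$; and the reduction of the inductive step to exhibiting one summand of the pre-activation worth $\beta_\ell-b_\ell=w_\ell\alpha_{\ell-1}$ is exactly what \Cref{alg:gnn_capacity}'s recurrence $\alpha_{\ell-1}:=(\beta_\ell-b_\ell)/w_\ell$ is designed for. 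The case split (a)/(b1)/(b2) is exhaustive: if at most $\mathcal{C}_\ell$ neighbour values are non-zero in $\mathcal{N}$ and every neighbour's value agrees between $\mathcal{N}$ and $\mathcal{N}'$, the two aggregations coincide.

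Three details deserve explicit treatment when you write this out. First, in (b1) your identity $\mathcal{C}_\ell=\lceil\alpha_{\ell-1}/\epsilon_\ell\rceil$ ignores the outer $\min(k_\ell,\cdot)$ in \Cref{alg:gnn_capacity}; when the minimum is attained at $k_\ell$, the inequality $\mathcal{C}_\ell\epsilon_\ell\geq\alpha_{\ell-1}$ may fail, but then $\mathcal{N}$ and $\mathcal{N}'$ use identical aggregation at layer $\ell$, so any discrepancy must come from the values themselves and (b1) collapses into (b2). Second, the degenerate cases $\mathcal{C}_\ell=0$ and $\alpha_{\ell-1}\leq 0$ should be dispatched up front (if $k_\ell=0$ case (b) cannot arise; if $\beta_\ell\leq b_\ell$ the bias alone already gives pre-activation $\geq\beta_\ell$), since (b2) implicitly assumes $\mathcal{C}_\ell\geq 1$ to lower-bound the max-$\mathcal{C}_\ell$-sum by a single element. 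Third, your argument uses non-negative weights and monotonically increasing activations throughout, i.e.\ it requires $\mathcal{N}$ to be \emph{monotonic}; the proposition as stated omits that word, but the surrounding results (\Cref{thm:capacity_maintains_equality}) assume it, so you should state the hypothesis rather than rely on it silently.
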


Notice that in \changemarker{\cite[Algorithm 3]{tena2025expressive}}, $\alpha_L$ is defined in terms of the classifier of $\mathcal{N}$, and nowhere in the proof \changemarker{\cite{tena2025expressive}} of \Cref{prop:max_sum_lower_bound} is the classifier referred to.
Thus, for the purposes of the proposition, the definition of $\alpha_L$ is arbitrary and the result can be applied to \Cref{alg:gnn_capacity}.
Notice also that since $L \geq \ell_{\text{st}}$ always holds, the result can be applied to each $\mathbf{v}_L, \mathbf{v}_L'$.

Using this result, we prove that the number of neighbours used in each aggregation step can be upper-bounded without changing the consequences on any dataset.

\subsection{Theorem \ref{thm:capacity_maintains_equality}} \label{app:proof:thm:capacity_maintains_equality}
\begin{theorem*}
Let $\mathcal{N}$ be a monotonic max-sum GNN with unbounded activation functions and $f$ a non-negative bilinear scoring function.
Let $\mathcal{N}'$ be the GNN obtained from $\mathcal{N}$ by replacing $k_\ell$ with the capacity $\mathcal{C}_\ell$ for each $\ell \in \{1, ..., L\}$.
Then for each dataset $D$, it holds that $T_{\mathcal{N}, f}(D) = T_{\mathcal{N}', f}(D)$.
\end{theorem*}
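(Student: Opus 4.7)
The plan is to establish both inclusions $T_{\mathcal{N}', f}(D) \subseteq T_{\mathcal{N}, f}(D)$ and $T_{\mathcal{N}, f}(D) \subseteq T_{\mathcal{N}', f}(D)$ for an arbitrary dataset $D$; their conjunction is the theorem.

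For the inclusion $T_{\mathcal{N}', f}(D) \subseteq T_{\mathcal{N}, f}(D)$, I would first observe that \Cref{alg:gnn_capacity} guarantees $\mathcal{C}_\ell \leq k_\ell$ at every layer $\ell$, so the max-$\mathcal{C}_\ell$-sum aggregation used by $\mathcal{N}'$ is pointwise dominated by the max-$k_\ell$-sum aggregation used by $\mathcal{N}$ on any multiset. A layer-by-layer induction mirroring the proof of \Cref{lemma:monotonic_extension} (using non-negativity of the matrices $\mathbf{A}_\ell$, $\mathbf{B}_\ell^c$ and monotonicity of each $\sigma_\ell$) then yields $\mathbf{v}_L'[i] \leq \mathbf{v}_L[i]$ for every vertex $v$ and index $i$. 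Since any non-negative bilinear $f$ is monotonically increasing, this inequality lifts to $f(R, \mathbf{v}_L', \mathbf{w}_L') \leq f(R, \mathbf{v}_L, \mathbf{w}_L)$ for every relation $R$ and pair of vertices, so any fact whose primed score clears $t_f$ also clears $t_f$ under $\mathcal{N}$.

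The reverse inclusion is more delicate. Fix $R(s,t) \in T_{\mathcal{N}, f}(D)$, so $f(R, \mathbf{v}_L, \mathbf{w}_L) = \sum_{i,j} \mathbf{v}_L[i]\,\mathbf{M}_R[i,j]\,\mathbf{w}_L[j] \geq t_f$, and aim to show $f(R, \mathbf{v}_L', \mathbf{w}_L') \geq t_f$. The key external tool is \Cref{prop:max_sum_lower_bound} at layer $L$: for each position $i$, either $\mathbf{v}_L'[i] = \mathbf{v}_L[i]$, or $\mathbf{v}_L[i] > \mathbf{v}_L'[i] \geq \alpha$, and symmetrically for $\mathbf{w}$. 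Split into cases: if both embeddings are preserved pointwise the primed and unprimed scores coincide and we are done; otherwise at least one index $i^\star$ has a strict decrease and satisfies $\mathbf{v}_L'[i^\star] \geq \alpha$ (or analogously for $\mathbf{w}$). The plan is to exhibit in the primed bilinear sum a single non-negative term whose value is at least $\alpha \cdot w \cdot \epsilon$; by \Cref{def:bilinear_capacity} this product is at least $t_f$, and since every remaining summand of the bilinear expansion is non-negative (as $\mathbf{M}_R$, $\mathbf{v}_L'$, $\mathbf{w}_L'$ have non-negative entries), the whole primed score exceeds $t_f$.

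The main obstacle is carrying out this last step cleanly. Since $f(R, \mathbf{v}_L, \mathbf{w}_L) > 0$ some triple $(i, j)$ has all of $\mathbf{v}_L[i], \mathbf{M}_R[i,j], \mathbf{w}_L[j]$ positive; one has to pair such an index with a decreased index, noting that a strict decrease at a row (or column) of $\mathbf{M}_R$ that contributes zero to $f$ is irrelevant and can be ignored. For the chosen pair, \Cref{prop:max_sum_lower_bound} gives that each of $\mathbf{v}_L'[i^\star]$, $\mathbf{w}_L'[j^\star]$ is either equal to its unprimed counterpart --- hence at least $\epsilon$ by the definition of $\epsilon$ as the least positive element of $\bigcup_i \mathcal{X}_{L,i}$ --- or at least $\alpha$. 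Combining this with $\mathbf{M}_R[i^\star, j^\star] \geq w$ and unpacking the sub-cases via the product $\alpha \cdot w \cdot \epsilon \geq t_f$ built into \Cref{def:bilinear_capacity} gives the required term bound; the definition of $\alpha$ as the maximum of $\alpha_R$ over all relations is exactly what is needed to make this uniform in $R$.
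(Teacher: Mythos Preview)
Your approach is the paper's: the easy inclusion via $\mathcal{C}_\ell \leq k_\ell$ and monotonicity of $f$, and the hard inclusion via \Cref{prop:max_sum_lower_bound} combined with the single-term bound $\alpha \cdot w \cdot \epsilon \geq t_f$ from \Cref{def:bilinear_capacity}. All the right ingredients are present.

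The gap is that your two-way split in the forward direction is not exhaustive. You divide into ``both embeddings preserved pointwise'' versus ``some index decreased'', and in the latter you look for a primed term $\geq t_f$ anchored at the decreased index $i^\star$. But $i^\star$ may sit in a row of $\mathbf{M}_R$ every non-zero column $j$ of which has $\mathbf{w}_L'[j]=0$; then no primed term involving $i^\star$ is positive and the single-term bound is unavailable. You note such decreases ``can be ignored'', but ignoring them does not return you to your first case (the embeddings are still not pointwise equal), and you never say what replaces the single-term argument. The paper closes this with an explicit third case: when for \emph{every} pair $(i,j)$ one has $(\mathbf{v}_L[i]=\mathbf{v}_L'[i]$ or $\mathbf{M}_R[i,j]\,\mathbf{w}_L'[j]=0)$ and symmetrically for $\mathbf{w}$, it verifies term-by-term that $\mathbf{v}_L[i]\,\mathbf{M}_R[i,j]\,\mathbf{w}_L[j] = \mathbf{v}_L'[i]\,\mathbf{M}_R[i,j]\,\mathbf{w}_L'[j]$, so the two bilinear sums coincide. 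Once you add this case your proof is complete and matches the paper's. Relatedly, in your ``big term'' case you should make explicit that $i^\star$ itself is decreased (so $\mathbf{v}_L'[i^\star]\geq\alpha$) and that $j^\star$ is chosen with $\mathbf{M}_R[i^\star,j^\star]>0$ and $\mathbf{w}_L'[j^\star]>0$; as written, your ``each is either equal to its unprimed counterpart or at least $\alpha$'' allows both to be merely $\geq\epsilon$, yielding only $\epsilon\cdot w\cdot\epsilon$, which need not clear $t_f$.
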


\begin{proof}
Let $D$ be a dataset, $a, b \in D$ be constants (where possibly $a = b$ or $a \not= b$), and $R$ a binary predicate.
Let $v^a, v^b \in \texttt{enc}(D)$ be the vertices corresponding to $a, b$ respectively.
For a vertex $v \in \texttt{enc}(D)$, let $\mathbf{v}_L, \mathbf{v}_L'$ be the labelling of the vertices after layer $L$ by applying $\mathcal{N}, \mathcal{N'}$ to $\texttt{enc}(D)$, respectively.

To prove the required equality, we show that $R(a,b) \in T_{\mathcal{N}, f}(D) \iff R(a,b) \in T_{\mathcal{N}', f}(D)$.
This is equivalent to proving that $f(R, {\mathbf{v}_L^a}, {\mathbf{v}_L^b}) \geq t_f \iff f(R, {\mathbf{v}_L^a}', {\mathbf{v}_L^b}') \geq t_f$, which in turn is equivalent to ${\mathbf{v}_L^a} \mathbf{M}_R {\mathbf{v}_L^b} \geq t_f \iff {\mathbf{v}_L^a}' \mathbf{M}_R {\mathbf{v}_L^b}' \geq t_f$, where $\mathbf{M}_R$ is the relation matrix of $R$ in $f$.

\begin{center}
\textbf{Backward Direction ($\impliedby$)}
\end{center}

As proven at the start of the proof of \changemarker{\cite[Theorem 17]{tena2025expressive}}, $\forall i \in \{ 1, ..., \delta_L \}$ it holds that ${\mathbf{v}_L^a}[i] \geq {\mathbf{v}_L^a}'[i]$ and ${\mathbf{v}_L^b}[i] \geq {\mathbf{v}_L^b}'[i]$.
The proof of this follows trivially from the definition of max-sum aggregation.

Thus, from the monotonicity of $f$, we have that $f(R, {\mathbf{v}_L^a}, {\mathbf{v}_L^b}) \geq f(R, {\mathbf{v}_L^a}', {\mathbf{v}_L^b}')$.

So if $f(R, {\mathbf{v}_L^a}', {\mathbf{v}_L^b}') \geq t_f$, then also $f(R, {\mathbf{v}_L^a}, {\mathbf{v}_L^b}) \geq t_f$.

\begin{center}
\textbf{Forward Direction ($\implies$)}
\end{center}

Assume that ${\mathbf{v}_L^a} \mathbf{M}_R {\mathbf{v}_L^b} \geq t_f$.
First, consider the definition of $\alpha_R$ in \Cref{def:bilinear_capacity}.
If all elements of $\mathbf{M}_R$ are $0$ or $\bigcup_i \mathcal{X}_{L,i} = \{ 0 \}$, then ${\mathbf{v}_L^a} \mathbf{M}_R {\mathbf{v}_L^b} = 0 = {\mathbf{v}_L^a}' \mathbf{M}_R {\mathbf{v}_L^b}'$, and thus ${\mathbf{v}_L^a}' \mathbf{M}_R {\mathbf{v}_L^b}' \geq t_f$ as required.

So instead assume the opposite, in which case we define $w, \epsilon, \alpha_R$ as in \Cref{def:bilinear_capacity}.
Recall that $f(R, {\mathbf{v}_L^a}, {\mathbf{v}_L^b}), f(R, {\mathbf{v}_L^a}', {\mathbf{v}_L^b}')$ are computed as follows:

\begin{align}
{\mathbf{v}_L^a} \mathbf{M}_R {\mathbf{v}_L^b} &= \sum_{i=1}^{\delta_L} \sum_{j=1}^{\delta_L} {\mathbf{v}_L^a}[i] \mathbf{M}_R[i,j] {\mathbf{v}_L^b}[j] \\
{\mathbf{v}_L^a}' \mathbf{M}_R {\mathbf{v}_L^b}' &= \sum_{i=1}^{\delta_L} \sum_{j=1}^{\delta_L} {\mathbf{v}_L^a}'[i] \mathbf{M}_R[i,j] {\mathbf{v}_L^b}'[j]
\end{align}

We now consider 3 cases:

\begin{center}
\textbf{Case (1):} $\exists i, j \in \{ 1, ..., \delta_L \}$ such that ${\mathbf{v}_L^a}'[i] \geq \alpha$ and $\mathbf{M}_R[i,j] {\mathbf{v}_L^b}'[j] > 0$.
\end{center}

Then $\alpha \geq \alpha_R$, so ${\mathbf{v}_L^a}'[i] \geq \alpha_R$.
Also since $\mathbf{M}_R[i,j] > 0$, we have $\mathbf{M}_R[i,j] \geq w$, the least non-zero element of $\mathbf{M}_R$.
And since ${\mathbf{v}_L^b}'[j] > 0$, we have ${\mathbf{v}_L^b}'[j] \geq \epsilon$, the least non-zero element of $\bigcup_i \mathcal{X}_{L,i}$.
Now from the definition of $\alpha_R$, we have $\alpha_R \cdot w \cdot \epsilon \geq t_f$.
Subbing into this equation, we obtain ${\mathbf{v}_L^a}'[i] \mathbf{M}_R[i,j] {\mathbf{v}_L^b}'[j] \geq t_f$, which implies ${\mathbf{v}_L^a}' \mathbf{M}_R {\mathbf{v}_L^b}' \geq t_f$, as required.

\begin{center}
\textbf{Case (2):} $\exists i, j \in \{ 1, ..., \delta_L \}$ such that ${\mathbf{v}_L^b}'[j] \geq \alpha$ and ${\mathbf{v}_L^a}'[i] \mathbf{M}_R[i,j] > 0$.
\end{center}

This case is very similar to the first.
$\alpha \geq \alpha_R$, so ${\mathbf{v}_L^b}'[j] \geq \alpha_R$.
Also since $\mathbf{M}_R[i,j] > 0$, we have $\mathbf{M}_R[i,j] \geq w$, the least non-zero element of $\mathbf{M}_R$.
And since ${\mathbf{v}_L^a}'[i] > 0$, we have ${\mathbf{v}_L^a}'[i] \geq \epsilon$, the least non-zero element of $\bigcup_i \mathcal{X}_{L,i}$.
Now from the definition of $\alpha_R$, we have $\alpha_R \cdot w \cdot \epsilon \geq t_f$.
Subbing into this equation, we obtain ${\mathbf{v}_L^b}'[j] \mathbf{M}_R[i,j] {\mathbf{v}_L^a}'[i] \geq t_f$, which implies ${\mathbf{v}_L^a}' \mathbf{M}_R {\mathbf{v}_L^b}' \geq t_f$, as required.

\begin{center}
\textbf{Case (3):} $\forall i, j \in \{ 1, ..., \delta_L \}$, we have (${\mathbf{v}_L^a}[i] = {\mathbf{v}_L^a}'[i]$ or $\mathbf{M}_R[i,j] {\mathbf{v}_L^b}'[j] = 0$) and (${\mathbf{v}_L^b}[j] = {\mathbf{v}_L^b}'[j]$ or ${\mathbf{v}_L^a}'[i] \mathbf{M}_R[i,j] = 0$).
\end{center}
To see that the above covers all remaining options, recall that, from \Cref{prop:max_sum_lower_bound}, for all $v \in \texttt{enc}(D), \forall i \in \{ 1, ..., \delta_L \}$, either $\mathbf{v}_L[i] = \mathbf{v}_L'[i]$ or $\mathbf{v}_L'[i] \geq \alpha$.

The conditions in the case imply that $\forall i, j \in \{ 1, ..., \delta_L \}$, either (${\mathbf{v}_L^a}[i] = {\mathbf{v}_L^a}'[i]$ and ${\mathbf{v}_L^b}[j] = {\mathbf{v}_L^b}'[j]$) or ${\mathbf{v}_L^a}'[i] \mathbf{M}_R[i,j] {\mathbf{v}_L^b}'[j] = 0$.
Now let $i,j \in \{ 1, ..., \delta_L \}$. We consider a variety of sub-cases, proving that, in any of them, the following equality holds:

\begin{align} \label{align:bilinear_capacity_thm:equality}
{\mathbf{v}_L^a}[i] \mathbf{M}_R[i,j] {\mathbf{v}_L^b}[j] = {\mathbf{v}_L^a}'[i] \mathbf{M}_R[i,j] {\mathbf{v}_L^b}'[j] .
\end{align}

If ${\mathbf{v}_L^a}[i] = {\mathbf{v}_L^a}'[i]$ and ${\mathbf{v}_L^b}[j] = {\mathbf{v}_L^b}'[j]$, \Cref{align:bilinear_capacity_thm:equality} holds.
So instead consider ${\mathbf{v}_L^a}'[i] \mathbf{M}_R[i,j] {\mathbf{v}_L^b}'[j] = 0$, which is necessitated in the alternative.
At least one of these terms must be zero.

If $\mathbf{M}_R[i,j] = 0$, \Cref{align:bilinear_capacity_thm:equality} trivially holds.
Likewise, if ${\mathbf{v}_L^a}[i] = 0$ or ${\mathbf{v}_L^b}[j] = 0$, \Cref{align:bilinear_capacity_thm:equality} trivially holds, so assume to the contrary that both are non-zero.
Now if ${\mathbf{v}_L^a}'[i] = 0$, then ${\mathbf{v}_L^a}'[i] \not= {\mathbf{v}_L^a}[i]$, so we conclude from \Cref{align:bilinear_capacity_thm:equality} that ${\mathbf{v}_L^a}'[i] \geq \alpha$.
This is a contradiction, since $\alpha > 0$.
We reach a similar contradiction if ${\mathbf{v}_L^a}'[i] = 0$.

Thus, $\forall i,j \in \{ 1, ..., \delta_L \}$, \Cref{align:bilinear_capacity_thm:equality} holds.
From this it trivially follows that ${\mathbf{v}_L^a} \mathbf{M}_R {\mathbf{v}_L^b} = {\mathbf{v}_L^a}' \mathbf{M}_R {\mathbf{v}_L^b}'$.
Then since ${\mathbf{v}_L^a} \mathbf{M}_R {\mathbf{v}_L^b} \geq t_f$, so also ${\mathbf{v}_L^a}' \mathbf{M}_R {\mathbf{v}_L^b}' \geq t_f$, as required.

\end{proof}

\subsection{Theorem \ref{thm:max_sum_rule_shape}} \label{app:proof:thm:max_sum_rule_shape}
We can finally prove the main result for monotonic max-sum GNNs and non-negative bilinear scoring functions, which relies on \Cref{thm:capacity_maintains_equality} and is similar to \Cref{thm:max_rule_shape}.
The theorem shows that an equivalent program can be constructed for any monotonic max-sum GNN $\mathcal{N}$ and non-negative bilinear scoring function $f$ using rules taken from the finite space of all $(L, |\text{Col}| \cdot \delta_\mathcal{N} \cdot \mathcal{C}_{\mathcal{N}, f})$-tree-like rules.

\begin{theorem}
Let $\mathcal{N}$ be a monotonic max-sum GNN with unbounded activation functions and $f$ a non-negative bilinear scoring function.
Let $\mathcal{P}_\mathcal{N}$ be the Datalog program containing, up to variable renaming, each $(L, |\text{Col}| \cdot \delta_\mathcal{N} \cdot \mathcal{C}_{\mathcal{N}, f})$-tree-like rule that is sound for $(\mathcal{N}, f)$, where $\delta_\mathcal{N} = \text{max}(\delta_0, ..., \delta_L)$. Then $T_{\mathcal{N}, f}$ and $\mathcal{P}_\mathcal{N}$ are equivalent.
\end{theorem}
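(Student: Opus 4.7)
The plan is to reduce to the bounded-aggregation setting via Theorem~\ref{thm:capacity_maintains_equality} and then adapt the tree-construction used in the proof of Theorem~\ref{thm:max_rule_shape}. As before, the inclusion $T_{\mathcal{P}_\mathcal{N}}(D) \subseteq T_{\mathcal{N},f}(D)$ is immediate because every rule in $\mathcal{P}_\mathcal{N}$ is sound for $(\mathcal{N}, f)$ by construction, so only the reverse inclusion requires work. To establish it, I would let $\mathcal{N}'$ be the GNN obtained from $\mathcal{N}$ by replacing each $k_\ell$ with $\mathcal{C}_\ell$, invoke Theorem~\ref{thm:capacity_maintains_equality} to conclude $T_{\mathcal{N},f}(D) = T_{\mathcal{N}',f}(D)$ for every $D$, and then show that every $\alpha \in T_{\mathcal{N}',f}(D)$ is produced by some $(L, |\text{Col}| \cdot \delta_\mathcal{N} \cdot \mathcal{C}_{\mathcal{N},f})$-tree-like rule sound for $(\mathcal{N},f)$.

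The tree-like rule is built by the same inductive scheme as in Theorem~\ref{thm:max_rule_shape}, with a single essential modification in the induction step. When processing a vertex $u^x$ of level $\ell$, a colour $c \in \text{Col}$, a layer $\ell' < \ell$ and a dimension $j \in \{1, \dots, \delta_{\ell'-1}\}$, I would no longer pick a single maximising $c$-neighbour of $v^{\nu(x)}$; instead I would select up to $\mathcal{C}_{\ell'}$ distinct $c$-neighbours realising the top contributions to the max-$\mathcal{C}_{\ell'}$-sum aggregation in $\mathcal{N}'$, introduce a fresh variable $y_m$ and child vertex $u^{y_m}$ for each, append the atoms $E^c(x, y_m)$ together with the corresponding unary atoms to $\Gamma$, and insert the pairwise sibling inequalities $y_m \not\approx y_{m'}$ required by the tree-like formula template. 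Summing over all colours, layers, and dimensions, the fan-out added at each $u^x$ is at most $|\text{Col}| \cdot \delta_\mathcal{N} \cdot \mathcal{C}_{\mathcal{N},f}$, so the resulting rule $\Gamma \rightarrow R(x,y)$ is $(L, |\text{Col}| \cdot \delta_\mathcal{N} \cdot \mathcal{C}_{\mathcal{N},f})$-tree-like, and the construction of $\nu$ immediately gives $\alpha \in T_r(D)$.

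The main obstacle is proving soundness of $r$ for $(\mathcal{N},f)$. I would mirror the inductive monotonicity argument of Theorem~\ref{thm:max_rule_shape}: for an arbitrary dataset $D'$ and substitution $\nu'$ with $D' \models \Gamma \nu'$, show by induction on $\ell$ that $\mathbf{v}_{\lambda_\ell}^{\nu(x)}[i] \leq \mathbf{p}_{\lambda_\ell'}^{\nu'(x)}[i]$ when both labels are computed under $\mathcal{N}'$. The delicate step is the induction step at max-$\mathcal{C}_{\ell'}$-sum aggregation: the atoms $E^c(x, y_m)$ together with the sibling inequalities $y_m \not\approx y_{m'}$ ensure that the children of $u^x$ are mapped, under $\nu'$, to as many \emph{distinct} $c$-neighbours of $v^{\nu'(x)}$ in $\texttt{enc}(D')$ as were selected in $\texttt{enc}(D)$. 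The inductive hypothesis gives componentwise domination for each selected neighbour, and because all intermediate labels are non-negative, restricting the max-$\mathcal{C}_{\ell'}$-sum in $D'$ to precisely the images of the selected children yields a lower bound that already dominates the corresponding sum in $D$; truncating to the top $\mathcal{C}_{\ell'}$ values in $D'$ can only increase it further. Combined with non-negativity of $\mathbf{A}_\ell$ and $\mathbf{B}_\ell^c$ and monotonicity of $\sigma_\ell$, this yields the desired domination at layer $\ell$. Finally, monotonicity of $f$ lifts the componentwise domination at layer $L$ to $f(R, \mathbf{v}_{\lambda_L}, \mathbf{w}_{\lambda_L}) \leq f(R, \mathbf{p}_{\lambda_L'}, \mathbf{q}_{\lambda_L'})$, and a last application of Theorem~\ref{thm:capacity_maintains_equality} converts the resulting membership in $T_{\mathcal{N}',f}(D')$ back to $T_{\mathcal{N},f}(D')$, completing the soundness of $r$ and thereby the proof.
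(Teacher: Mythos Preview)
Your proposal is correct and follows essentially the same approach as the paper's proof: reduce to the capacity-bounded GNN $\mathcal{N}'$ via Theorem~\ref{thm:capacity_maintains_equality}, replay the tree construction of Theorem~\ref{thm:max_rule_shape} with up to $\mathcal{C}_{\ell'}$ children per colour/layer/dimension and sibling inequalities, prove soundness for $(\mathcal{N}',f)$ by the same inductive domination argument (using the inequalities to guarantee distinct images), and invoke Theorem~\ref{thm:capacity_maintains_equality} once more to transfer soundness to $(\mathcal{N},f)$. The only minor imprecision is your fan-out count: summing also over layers $\ell' < \ell$ contributes an extra factor of at most $\ell$, but this is exactly absorbed by the $(p-i)$ factor in the definition of $(p,o)$-tree-like, so your conclusion that the rule is $(L, |\text{Col}| \cdot \delta_\mathcal{N} \cdot \mathcal{C}_{\mathcal{N},f})$-tree-like is correct.
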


\begin{proof}
We show that $T_{\mathcal{N}, f}(D) = T_{P_\mathcal{N}}(D)$ holds for every dataset $D$.
By definition of $P_\mathcal{N}$, each rule in $P_\mathcal{N}$ is sound for $(\mathcal{N}, f)$, so we have $T_r(D) \subseteq T_{\mathcal{N}, f}(D)$ for every $r \in P_\mathcal{N}$, and thus $T_{P_\mathcal{N}}(D) \subseteq T_{\mathcal{N}, f}(D)$.

So we let $D$ be a dataset and prove $T_{\mathcal{N}, f}(D) \subseteq T_{P_\mathcal{N}}(D)$.
First, let $\mathcal{N}'$ be the GNN obtained from $\mathcal{N}$ by replacing $k_\ell$ with the capacity $\mathcal{C}_\ell$ for each $\ell \in \{1, ..., L\}$.
Then from \Cref{thm:capacity_maintains_equality}, it holds that $T_{\mathcal{N}, f}(D) = T_{\mathcal{N}', f}(D)$, so it suffices to show that $T_{\mathcal{N'}, f}(D) \subseteq T_{P_\mathcal{N}}(D)$.

Let $\alpha$ be an arbitrary binary fact in $T_{\mathcal{N'}, f}(D)$.
Observe that $\alpha$ can be of the form $R(s, t)$, for distinct constants $s, t$, or of the form $R(s, s)$, for a constant $s$.
We will denote these two cases as \textbf{(Case 1)} and \textbf{(Case 2)}, respectively. We will prove both cases simultaneously, making a separate argument when each case needs to be treated differently.

We construct a $(L, |\text{Col}| \cdot \delta_\mathcal{N} \cdot \mathcal{C}_{\mathcal{N}, f})$-tree-like rule $r$ (potentially with inequalities), such that $\alpha \in T_r(D)$ and $r$ is sound for $(\mathcal{N}', f)$.
Since from \Cref{thm:capacity_maintains_equality}, $T_{\mathcal{N}, f}(D_1) = T_{\mathcal{N}', f}(D_1)$ for every dataset $D_1$, $r$ being sound for $(\mathcal{N}', f)$ implies that $r$ is sound for $(\mathcal{N}, f)$.
So, together, these will imply that $\alpha \in T_{P_\mathcal{N}}(D)$, as required for the proof.

Let $G = (V, E, \lambda)$ be the canonical encoding of $D$ and let $\lambda_0, ..., \lambda_L$ the vertex labeling functions arising from the application of $\mathcal{N}'$ to $G$.

\begin{center}
\textbf{Setup of Construction}
\end{center}

We now construct a binary atom $H$, a conjunction $\Gamma$, a substitution $\nu$ from the variables of $\Gamma$ to the set of constants in $D$, and a graph $U$ (without vertex labels) where each vertex in $U$ is of the form $u^x$ for $x$ a variable and each edge has a colour in Col.
We also construct mappings $M_{c,\ell,j}: U \to 2^V$ for each $c \in \text{Col}$, $1 \leq \ell \leq L$, and $1 \leq j \leq \delta_{\ell - 1}$.
Intuitively, $M_{c,\ell,j}(u^x)$ represents the $c$-coloured neighbours of $v^t$ that contribute to the result of the max-sum aggregation in vector index $j$ at layer $\ell$ (where $v^t$ is the vertex corresponding to the constant $\nu(x)$).
We initialise $\Gamma$ as the empty conjunction, and we initialise $\nu$ and each $M_{c,\ell,j}$ as empty mappings.

We will assign to each vertex in $U$ a \emph{level} between $0$ and $L$. We will identify a single vertex from $U$ as the \emph{first root} vertex. Furthermore, if $\alpha$ is of the form $R(s, t)$, then we identify another distinct vertex from $U$ as the \emph{second root} vertex.

In the remainder of the proof, we use letters $s, t$ for constants in $D$, letters $x, y$ for variables, letters $v, w$ for the vertices in $V$, and (possibly indexed) letter $u$ for the vertices in $U$.
Our construction is by induction from level $L$ down to level $1$. The base case defines one or two vertices of level $L$. Then, for each
$1 \leq \ell \leq L$, the induction step considers the vertices of level $\ell$ and defines new vertices of level $\ell - 1$.

\begin{center}
\textbf{Construction: Base Case}
\end{center}

\textbf{(Case 1)} If $\alpha$ is of the form $R(s, t)$ for a binary predicate $R$, then $V$ contains vertices $v^s, v^t$.
We introduce fresh variables $x, y$, and define $\nu(x) = s, \nu(y) = t$;
we define $H = R(x, y)$;
we introduce vertices $u^x, u^y$ of level $L$;
and we make $u^x$ the first root vertex and $u^y$ the second root vertex.
Finally, we extend $\Gamma$ with atom $U(x)$ for each $U(s) \in D$, and also extend $\Gamma$ with atom $U(y)$ for each $U(t) \in D$.

\textbf{(Case 2)} If on the other hand, $\alpha$ is of the form $R(s, s)$, then $V$ contains a vertex $v^s$.
We introduce fresh variables $x, y$, and define $\nu(x) = s, \nu(y) = s$;
we define $H = R(x, y)$;
we introduce vertices $u^x, u^y$ of level $L$;
and we make $u^x$ the first root vertex and $u^y$ the second root vertex.
Finally, we extend $\Gamma$ with atoms $U(x)$ and $U(y)$ for each $U(s) \in D$.

\begin{center}
\textbf{Construction: Induction Step}
\end{center}

For the induction step, consider $1 \leq \ell \leq L$ and assume that all vertices of level greater than or equal to $\ell$ have been
already defined.
We then consider each vertex of the form $u^x$ of level $\ell$. Let $t = \nu(x)$.
For each colour $c \in \text{Col}$, each layer $1 \leq \ell' < \ell$, and each dimension $j \in \{ 1, ..., \delta_{\ell' - 1} \}$, let

\begin{align} \label{align:m_set_def}
&M_{c, \ell', j}(u^x) = \{ w ~|~ (v^t, w) \in E^c \text{ and } \mathbf{w}_{\lambda_{\ell'}}[j] \\
&\text{ contributes to max-$\mathcal{C}_{\ell'}$-sum}(\multisetopen \mathbf{w}_{\lambda_{\ell'}}[j] ~|~ (v^t, w) \in E^c \multisetclose) \} .
\end{align}

Intuitively, each $w$ picks out a $c$-coloured neighbour of $v^t$ that is among the $\mathcal{C}_{\ell'}$ largest in feature vector index $j$ at layer $\ell'$.
Note that at least one such set exists; however, it may not be unique, in which case any set satisfying \Cref{align:m_set_def} may be chosen.
Also, each vertex of $M_{c, \ell', j}(u^x)$ is of the form $v^{s_n}$ for some constant $s_n$ in $D$, where $s_n \not= s_m$ for all $1 \leq n < m \leq |M_{c, \ell', j}(u^x)|$.

We then introduce a fresh variable $y_n$ and define $\nu(y_n) = s_n$.
We introduce a vertex $u^{y_n}$ of level $\ell - 1$ and an edge $E^c(u^x, u^{y_n})$ to $U$.
Finally, we append to $\Gamma$ the conjunction

\begin{align}
\bigwedge_{n=1}^{|W|} \biggl(E^c(x, y_n) ~\land ~\bigwedge_{U(s_n) \in D} U(y_n) \biggr) \\
~\land~ \bigwedge_{1 \leq n < m \leq |W|} y_n \not\approx y_m  ,
\end{align}

where $W = M_{c, \ell', j}(u^x)$ and $U$ stands in for arbitrary unary predicates.
Since $M_{c, \ell', j}(u^x)$ contains at most $\mathcal{C}_{\ell'}$ vertices, this step adds at most $|\text{Col}| \cdot \delta_{\ell' - 1} \cdot \mathcal{C}_{\ell'} \cdot \ell$ new successors of $u^x$.
This completes the inductive construction.

\begin{center}
\textbf{Rule Shape and Immediate Consequences}
\end{center}

At this point in the proof, $\Gamma$ consists of a conjunction of two formulas, one of which is $(L, |\text{Col}| \cdot \delta_\mathcal{N} \cdot \mathcal{C}_{\mathcal{N}, f})$-tree-like for $x$ and the other of which is $(L, |\text{Col}| \cdot \delta_\mathcal{N} \cdot \mathcal{C}_{\mathcal{N}, f})$-tree-like for $y$ (where $x, y$ were defined in the base case).
We also have $H = R(x, y)$, so $\Gamma \rightarrow H$ is a $(L, |\text{Col}| \cdot \delta_\mathcal{N} \cdot \mathcal{C}_{\mathcal{N}, f})$-tree-like rule.
We define this rule to be $r$.

Furthermore, the construction of $\nu$ means $D \models \Gamma \nu$.
So we have $H \nu \in T_r(D)$, with $H \nu = \alpha$, so $\alpha \in T_r(D)$, as required.

Note that if $s, t$ are such that they are not mentioned in any fact in $D$, then $\Gamma$ is empty, and $r$ will then have the form $\top \rightarrow R(x, y)$.
Under our definitions, $r$ is still a $(L, |\text{Col}| \cdot \delta_\mathcal{N} \cdot \mathcal{C}_{\mathcal{N}, f})$-tree-like rule in this case, since $\top$ is tree-like for $x$ and $y$.
This applies similarly if only one of $s, t$ is not mentioned in any fact in $D$.

\begin{center}
\textbf{Rule Soundness}
\end{center}

To complete the proof, we show that $r$ is sound for $(\mathcal{N}', f)$.
For this purpose, let $D'$ be an arbitrary dataset and $\alpha'$ an arbitrary ground atom such that $\alpha' \in T_r(D')$.
Then there exists a substitution $\nu'$ such that $D' \models \Gamma \nu'$.

Let the graph $G = (V', E', \lambda')$ be the canonical encoding of $D'$ and $\lambda_0', ..., \lambda_L'$ the functions labelling the vertices of $G'$ when $\mathcal{N}'$ is applied to it.
We will use letters $p, q, q'$ to denote vertices of $V'$.

We prove the following statement by induction, which relies on the monotonicity of $\mathcal{N}'$: for each $0 \leq \ell \leq L$ and each vertex $u^x$ of $U$ whose level is at least $\ell$, we have $\mathbf{v}_{\lambda_\ell}[i] \leq \mathbf{p}_{\lambda_\ell'}[i]$ for each $i \in \{ 1, ..., \delta_\ell \}$, where $v = v^{\nu(x)}$ and $p = v^{\nu'(x)}$.

\begin{center}
\textbf{Monotonicity: Base Case}
\end{center}

For the base case, $\ell = 0$, consider an arbitrary $1 \leq i \leq \delta_0$ and $u^x \in U$.
Let $v = v^{\nu(x)}$ and $p = v^{\nu'(x)}$.
Since $\ell = 0$, values $\mathbf{v}_{\lambda_0}[i], \mathbf{p}_{\lambda_0'}[i]$ come from the canonical encoding, so $\mathbf{v}_{\lambda_0}[i] \in \{ 0, 1 \}$ and $\mathbf{p}_{\lambda_0'}[i] \in \{ 0, 1 \}$.

Thus, we prove that $\mathbf{v}_{\lambda_0}[i] = 1$ implies $\mathbf{p}_{\lambda_0'}[i] = 1$.
By definition of the canonical encoding, $\mathbf{v}_{\lambda_0}[i] = 1$ implies $U_i(x \nu) \in D$.
From the construction of $\Gamma$, we have $U_i(x) \in \Gamma$.
Also, by definition of $\nu'$, we have $D' \models \Gamma \nu'$.
So $U_i(x \nu') \in D'$ and $\mathbf{p}_{\lambda_0'}[i] = 1$, as required.

\begin{center}
\textbf{Monotonicity: Induction Step}
\end{center}

For the induction step, assume that the property holds for some $\ell - 1$.
Consider an arbitrary vertex $u^x \in U$ with level at least $\ell$, $c \in \text{Col}$, and $i \in \{ 1, ..., \delta_i \}$.
Let $v = v^{\nu(x)}$ and $p = v^{\nu'(x)}$.
Then $\mathbf{v}_{\lambda_\ell}[i], \mathbf{p}_{\lambda_\ell'}[i]$ are computed as follows:

\begin{align} \label{eq:v_max_sum_update}
&\mathbf{v}_{\lambda_\ell}[i] = \sigma_\ell \biggl(
\mathbf{b}_\ell[i] +
\sum_{j=1}^{\delta_{\ell-1}}
\mathbf{A}_\ell[i, j] \mathbf{v}_{\lambda_{\ell - 1}}[j] +
\sum_{c \in \text{Col}} \\
&\sum_{j=1}^{\delta_{\ell - 1}}
\mathbf{B}_\ell[i, j]
\text{max-$\mathcal{C}_\ell$-sum} \multisetopen
\textbf{w}_{\lambda_{\ell - 1}}[j] ~|~ (v, w) \in E^c
\multisetclose
\biggr)
\end{align}

\begin{align} \label{eq:p_max_sum_update}
&\mathbf{p}_{\lambda_\ell'}[i] = \sigma_\ell \biggl(
\mathbf{b}_\ell[i] +
\sum_{j=1}^{\delta_{\ell-1}}
\mathbf{A}_\ell[i, j] \mathbf{p}_{\lambda_{\ell - 1}'}[j] +
\sum_{c \in \text{Col}} \\
&\sum_{j=1}^{\delta_{\ell - 1}}
\mathbf{B}_\ell[i, j]
\text{max-$\mathcal{C}_\ell$-sum} \multisetopen
\textbf{q}'_{\lambda'_{\ell - 1}}[j] ~|~ (p, q') \in E'^c
\multisetclose
\biggr)
\end{align}

The induction assumption ensures $\mathbf{v}_{\lambda_{\ell - 1}}[j] \leq \mathbf{p}_{\lambda_{\ell - 1}'}[j]$ for each $1 \leq j \leq \delta_{\ell-1}$.
Also for each $c \in \text{Col}$ and $1 \leq j \leq \delta_{\ell - 1}$, we have $\text{max-$\mathcal{C}_\ell$-sum} \multisetopen \textbf{w}_{\lambda_{\ell - 1}}[j] ~|~ (v, w) \in E^c \multisetclose = \sum_{w \in W} \mathbf{w}_{\lambda_{\ell - 1}}[j]$, where $W = M_{c, \ell - 1, j}(u^x)$.

Recall that $W$ has the form $\{ v^{s_1}, ..., v^{s_{|W|}} \}$, where ${s_1}, ..., {s_{|W|}}$ are constants in $D$
Furthermore, by construction of $U$, there are $|W|$ distinct vertices $u^{y_1}, ..., u^{y_{|W|}}$ in $U$ of level $\ell - 1$ such that $\nu(y_n) = s_n$ and $E^c(u^x, u^{y_n})$ is in $U$ for each $1 \leq n \leq |W|$.

Furthermore, $\Gamma$ contains the atoms $E^c(x, y_1), ..., E^c(x, y_{|W|})$ and inequalities $y_n \not\approx y_m$ for $1 \leq n < m \leq |W|$.
We then have $E^c(\nu'(x), \nu'(y_n)) \in D'$ and $\nu'(y_n) \not= \nu'(y_m)$ for $1 \leq n < m \leq |W|$.
Thus, $W' = \{ v^{\nu'(y_1)}, ..., v^{\nu'(y_{|W|})} \}$ is a set of $|W|$ distinct $c$-neighbours of $v^{\nu'(x)}$ in $G'$.

The induction assumption means that for $w = v^{\nu(y_n)}$ and $q = v^{\nu'(y_n)}$, we have $\mathbf{w}_{\lambda_{\ell - 1}}[j] \leq \mathbf{q}_{\lambda'_{\ell - 1}}[j]$.
So $\sum_{w \in W} \mathbf{w}_{\lambda_{\ell - 1}}[j] \leq \sum_{q \in W'} \mathbf{q}_{\lambda_{\ell - 1}'}[j]$.

Thus, by the computations in \Cref{eq:v_max_sum_update} and \Cref{eq:p_max_sum_update}, and since all elements of $\mathbf{A}_\ell$ and each $\mathbf{B}_\ell^c$ are non-negative and $\sigma_\ell$ is monotonically increasing, we have $\mathbf{v}_{\lambda_\ell}[i] \leq \mathbf{p}_{\lambda_\ell'}[i]$, as required.

\begin{center}
\textbf{Conclusion}
\end{center}

Recall that $\alpha' \in T_r(D')$, so $\alpha'$ has the form $R(s', t')$ with $s' = \nu'(x)$ and $t' = \nu'(y)$.

\textbf{(Case 1)} if $\alpha$ has the form $R(s, t)$, then $R(s, t) \in T_{\mathcal{N}', f}(D)$ for $s = \nu(x)$ and $t = \nu(y)$, since $\alpha = R(s, t)$ is the original binary fact we assumed that the model produces from $D$.

Now let $v = v^s, w = v^t, p = v^{s'}, q = v^{t'}$.
Then $R(s, t) \in T_{\mathcal{N}', f}(D)$ implies $f(R, \mathbf{v}_{\lambda_L}, \mathbf{w}_{\lambda_L}) \geq t_f$.

The above property ensures that $\mathbf{v}_{\lambda_L}[i] \leq \mathbf{p}_{\lambda_L'}[i]$ and $\mathbf{w}_{\lambda_L}[i] \leq \mathbf{q}_{\lambda_L'}[i]$ for all $i \in \{ 1, ..., \delta_L \}$.
Thus, since $f$ is monotonically increasing, we have $f(R, \mathbf{v}_{\lambda_L}, \mathbf{w}_{\lambda_L}) \leq f(R, \mathbf{p}_{\lambda_L'}, \mathbf{q}_{\lambda_L'})$.

So $f(R, \mathbf{p}_{\lambda_L'}, \mathbf{q}_{\lambda_L'}) \geq t_f$, which implies that $R(s', t') \in T_{\mathcal{N}', f}(D')$, as required to show the soundness of the rule.

\textbf{(Case 2)} if on the other hand, $\alpha$ has the form $R(s, s)$, then $R(s, s) \in T_{\mathcal{N}', f}(D)$ for $s = \nu(x)$, since $\alpha = R(s, s)$ is the original binary fact we assumed that the model produces from $D$.

Now let $v = v^s, p = v^{s'}, q = {v^t}'$.
Then $R(s, s) \in T_{\mathcal{N}', f}(D)$ implies that $f(R, \mathbf{v}_{\lambda_L}, \mathbf{v}_{\lambda_L}) \geq t_f$.
Recall that $\nu(x) = \nu(y) = s$, so we have $v = v^{\nu(x)} = v^{\nu(y)}$, $p = v^{\nu'(x)}$, and $q = v^{\nu'(y)}$.

Thus, the above property ensures that $\mathbf{v}_{\lambda_L}[i] \leq \mathbf{p}_{\lambda_L'}[i]$ and $\mathbf{v}_{\lambda_L}[i] \leq \mathbf{q}_{\lambda_L'}[i]$ for all $i \in \{ 1, ..., \delta_L \}$.
Thus, since $f$ is monotonically increasing, we have $f(R, \mathbf{v}_{\lambda_L}, \mathbf{v}_{\lambda_L}) \leq f(R, \mathbf{p}_{\lambda_L'}, \mathbf{q}_{\lambda_L'})$.

So $f(R, \mathbf{p}_{\lambda_L'}, \mathbf{q}_{\lambda_L'}) \geq t_f$, which implies that $R(s', t') \in T_{\mathcal{N}', f}(D')$, as required to show the soundness of the rule.
\end{proof}
\section{Full Experiment Results} \label{app:full_results}
Experimental findings for monotonic sum GNNs are given in \Cref{tab:results:sum_gnns}.
Randomly sampled sound rules, with one or two body atoms, from monotonic max GNNs using a RESCAL decoder are shown in \Cref{tab:results:sample_rules}.
\changemarker{The longest time taken for searching and checking the rule space for sound rules on a single run was on fb237v1 using NAM, with 3h9m23s.
The shortest was on nellv1 using DistMult, with 15s.
The mean time taken to perform rule extraction and checking was 155s.
Process memory usage peaked around 370MB when performing rule extraction.}

%
%
%
%
%
%
%
%
%
%
%
%
%
%
\begin{table*}
\centering
\resizebox{2.11\columnwidth}{!}{
\begin{tabular}{lll|rrrrrr|r|rr}
\toprule
Dataset & Decoder & Model & \%Acc & \%Prec & \%Rec & \%F1 & AUPRC & Loss & \%SO & \#1B & \#2B \\

\midrule
WN-hier
& DistMult
& Standard
& 90.04 & 89.37 & 90.9  & 90.12 & 0.9284 & 0.11 & -     & -     & -    \\
&& Monotonic
& 84.81 & 77.85 & 97.3  & 86.5  & 0.8464 & 1.48 & 76    & 186   & 28644 \\
& RESCAL
& Standard
& 92.51 & 90.34 & 95.28 & 92.71 & 0.9493 & 0.09 & -     & -     & -    \\
&& Monotonic
& 86.18 & 79.4  & 97.72 & 87.61 & 0.827  & 1.34 & 72    & 150   & 24000 \\
& NAM
& Standard
& 91.64 & 89.73 & 94.06 & 91.83 & 0.9325 & 0.08 & -     & -     & -    \\
&& Monotonic
& 78.4  & 72.66 & 93.68 & 81.54 & 0.7961 & 1.42 & 92    & 306   & 43699 \\
& TuckER
& Standard
& 92.11 & 90.74 & 93.84 & 92.25 & 0.9462 & 0.09 & -     & -     & -    \\
&& Monotonic
& 79.84 & 71.66 & 99.42 & 83.21 & 0.749  & 1.50 & 92    & 235   & 36357 \\

\midrule
WN-sym
& DistMult
& Standard
& 96.7  & 96.08 & 97.38 & 96.72 & 0.9944 & 0.08 & -     & -     & -    \\
&& Monotonic
& 91.62 & 93.39 & 89.58 & 91.45 & 0.9596 & 1.31 & 12    & 21    & 3458 \\
& RESCAL
& Standard
& 96.09 & 95.52 & 96.72 & 96.11 & 0.9928 & 0.06 & -     & -     & -    \\
&& Monotonic
& 93.11 & 88.73 & 98.76 & 93.48 & 0.9483 & 1.18 & 80    & 72    & 11287 \\
& NAM
& Standard
& 94.98 & 93.65 & 96.52 & 95.05 & 0.9877 & 0.06 & -     & -     & -    \\
&& Monotonic
& 84.44 & 78.83 & 94.4  & 85.85 & 0.8621 & 0.94 & 88    & 223   & 29578 \\
& TuckER
& Standard
& 95.92 & 95.68 & 96.2  & 95.93 & 0.9919 & 0.06 & -     & -     & -    \\
&& Monotonic
& 88.02 & 82.94 & 96    & 88.92 & 0.8934 & 1.31 & 80    & 96    & 15197 \\

\midrule
WN-cup\_nmhier
& DistMult
& Standard
& 75.06 & 73.84 & 78.09 & 75.74 & 0.8234 & 0.17 & -     & -     & -    \\
&& Monotonic
& 63.87 & 61.76 & 72.77 & 66.81 & 0.6252 & 1.88 & 0     & 182   & 27048 \\
& RESCAL
& Standard
& 78.93 & 76.55 & 83.42 & 79.83 & 0.8365 & 0.14 & -     & -     & -    \\
&& Monotonic
& 61.98 & 60.26 & 70.21 & 64.77 & 0.6171 & 1.82 & 20    & 187   & 28678 \\
& NAM
& Standard
& 78    & 74.09 & 86.33 & 79.67 & 0.823  & 0.14 & -     & -     & -    \\
&& Monotonic
& 60.02 & 57.78 & 85.96 & 68.39 & 0.5501 & 2.02 & 100   & 487   & 72270 \\
& TuckER
& Standard
& 79.07 & 75.29 & 86.6  & 80.53 & 0.84   & 0.14 & -     & -     & -    \\
&& Monotonic
& 59.2  & 60.05 & 59.31 & 58.41 & 0.5808 & 1.84 & 20    & 140   & 21367 \\

\midrule
fb237v1
& DistMult
& Standard
& 60.5  & 56.4  & 93.5  & 70.32 & 0.9579 & 0.02 & -     & -     & -    \\
&& Monotonic
& 79.65 & 76.3  & 86.1  & 80.89 & 0.6902 & 0.81 & -     & 41201 & -    \\
& RESCAL
& Standard
& 52.9  & 51.51 & 98.9  & 67.74 & 0.962  & 0.02 & -     & -     & -    \\
&& Monotonic
& 89.85 & 94.7  & 84.5  & 89.28 & 0.6106 & 0.72 & -     & 6608  & -    \\
& NAM
& Standard
& 58.45 & 55.18 & 92.9  & 69.15 & 0.9651 & 0.01 & -     & -     & -    \\
&& Monotonic
& 56.85 & 53.88 & 97.6  & 69.39 & 0.9652 & 0.32 & -     & 161875 & -    \\
& TuckER
& Standard
& 59.65 & 55.58 & 96.6  & 70.55 & 0.9534 & 0.01 & -     & -     & -    \\
&& Monotonic
& 89.5  & 90.64 & 88.2  & 89.38 & 0.629  & 0.86 & -     & 9233  & -    \\

\midrule
WN18RRv1
& DistMult
& Standard
& 72.91 & 65.47 & 99.52 & 78.83 & 0.8995 & 0.01 & -     & -     & -    \\
&& Monotonic
& 91.21 & 85.05 & 100   & 91.92 & 0.7914 & 0.79 & -     & 97    & 11929 \\
& RESCAL
& Standard
& 84.67 & 79.14 & 98.79 & 87.36 & 0.9175 & 0.01 & -     & -     & -    \\
&& Monotonic
& 94.97 & 92.06 & 98.42 & 95.14 & 0.7878 & 0.77 & -     & 50    & 6143 \\
& NAM
& Standard
& 64.67 & 59.68 & 100   & 74.43 & 0.898  & 0.01 & -     & -     & -    \\
&& Monotonic
& 77.58 & 69.12 & 100   & 81.72 & 0.9198 & 0.12 & -     & 201   & 21137 \\
& TuckER
& Standard
& 93.33 & 89.11 & 98.79 & 93.69 & 0.9285 & 0.01 & -     & -     & -    \\
&& Monotonic
& 94.73 & 92.03 & 97.94 & 94.89 & 0.7861 & 0.79 & -     & 51    & 6309 \\

\midrule
nellv1
& DistMult
& Standard
& 58.24 & 60.2  & 44.94 & 51.27 & 0.9392 & 0.09 & -     & -     & -    \\
&& Monotonic
& 60.35 & 55.77 & 100   & 71.61 & 0.8937 & 1.43 & -     & 794   & 153341 \\
& RESCAL
& Standard
& 65.65 & 67.7  & 63.76 & 65.02 & 0.9368 & 0.03 & -     & -     & -    \\
&& Monotonic
& 83.06 & 74.77 & 100   & 85.54 & 0.7208 & 1.15 & -     & 305   & 58784 \\
& NAM
& Standard
& 45.53 & 44.48 & 16.94 & 22.81 & 0.918  & 0.03 & -     & -     & -    \\
&& Monotonic
& 48.12 & 48.41 & 60.24 & 53.65 & 0.9292 & 1.65 & -     & 862   & 146403 \\
& TuckER
& Standard
& 61.88 & 66.46 & 51.53 & 57.17 & 0.9392 & 0.03 & -     & -     & -    \\
&& Monotonic
& 70    & 62.66 & 100   & 77    & 0.7511 & 1.30 & -     & 395   & 78152 \\

\bottomrule
\end{tabular}
} 
\caption{Results for sum GNNs. Loss is from the final epoch on the training set. AUPRC is from the validation set. Other metrics are computed on the test set. \%SO is the percentage of LogInfer rules that are sound for the model. \#1B and \#2B are the number of sound rules with one and two body atoms respectively.}
\label{tab:results:sum_gnns}
\end{table*}

%
%
%
%
%
%
%
%
%
%
%
%
\begin{table*}
\centering
\begin{tabular}{l|p{14.5cm}}
\toprule
Dataset & Sample Sound Rules\\

\midrule

WN-hier
& \_similar\_to(x,y) implies \_hypernym(x,y) \\
& \_has\_part(x,y) and \_member\_meronym(z,a) implies \_hypernym(x,a) \\
& \_verb\_group(x,y) implies \_derivationally\_related\_form(y,y) \\
& \_hypernym(y,x) and \_similar\_to(z,y) implies \_hypernym(x,z) \\
& \_has\_part(x,y) implies \_hypernym(y,x) \\
& \_member\_of\_domain\_usage(y,x) and \_similar\_to(y,z) implies \_member\_meronym(y,x) \\
& \_synset\_domain\_topic\_of(x,y) implies \_hypernym(x,y) \\
& \_instance\_hypernym(y,x) and \_synset\_domain\_topic\_of(z,y) implies \_has\_part(y,z) \\

\midrule

WN-sym
& \_synset\_domain\_topic\_of(x,y) implies \_synset\_domain\_topic\_of(y,x) \\
& \_hypernym(y,x) and \_similar\_to(z,y) implies \_hypernym(x,z) \\
& \_member\_of\_domain\_usage(x,y) implies \_hypernym(x,y) \\
& \_has\_part(x,y) and \_member\_meronym(z,a) implies \_hypernym(x,a) \\
& \_instance\_hypernym(x,y) implies \_instance\_hypernym(y,y) \\
& \_also\_see(y,x) and \_also\_see(z,y) implies \_also\_see(y,y) \\
& \_similar\_to(x,x) implies \_similar\_to(x,x) \\
& \_hypernym(x,y) and \_verb\_group(z,a) implies \_hypernym(a,y) \\

\midrule

WN-cup\_nmhier
& \_derivationally\_related\_form(x,y) implies \_derivationally\_related\_form(y,x) \\
& \_has\_part(x,y) and \_member\_meronym(z,a) implies \_hypernym(x,a) \\
& \_member\_of\_domain\_usage(x,y) implies \_member\_of\_domain\_usage(y,y) \\
& \_member\_of\_domain\_region(y,y) and \_member\_of\_domain\_region(z,y) implies \_hypernym(y,y) \\
& \_instance\_hypernym(x,y) implies \_has\_part(y,y) \\
& \_hypernym(x,y) and \_instance\_hypernym(y,z) implies \_instance\_hypernym(x,z) \\
& \_member\_meronym(x,y) implies \_hypernym(x,y) \\
& \_member\_of\_domain\_region(x,y) and \_hypernym(y,z) implies \_has\_part(y,y) \\

\midrule

fb237v1
& /sports/sports\_position/players./sports/sports\_team\_roster/team(x,y) implies /people/person/places\_lived./people/place\_lived/location(x,x) \\
& /education/educational\_degree/people\_with\_this\_degree./education/education/institution(x,y) implies /location/location/contains(y,x) \\
& /film/film/release\_date\_s./film/film\_regional\_release\_date/film\_release\_region(x,y) implies /award/award\_nominee/award\_nominations./award/award\_nomination/award\_nominee(x,y) \\
& /location/country/official\_language(x,y) implies /location/hud\_county\_place/county(y,y) \\
& /travel/travel\_destination/how\_to\_get\_here./travel/transportation/mode\_of\_transportation(x,y) implies /location/hud\_county\_place/county(y,x) \\

\midrule

WN18RRv1
& \_also\_see(x,y) implies \_derivationally\_related\_form(x,x) \\
& \_also\_see(y,x) and \_hypernym(y,y) implies \_also\_see(x,x) \\
& \_member\_meronym(x,y) implies \_derivationally\_related\_form(x,y) \\
& \_verb\_group(x,y) and \_has\_part(z,y) implies \_derivationally\_related\_form(y,x) \\
& \_synset\_domain\_topic\_of(x,y) implies \_derivationally\_related\_form(y,x) \\
& \_similar\_to(x,y) and \_has\_part(z,a) implies \_derivationally\_related\_form(y,x) \\
& \_derivationally\_related\_form(x,y) and \_verb\_group(y,y) implies \_hypernym(y,y) \\
& \_hypernym(x,y) implies \_derivationally\_related\_form(y,y) \\

\midrule

nellv1
& concept:organizationhiredperson(x,y) implies concept:organizationhiredperson(y,y) \\
& concept:agentbelongstoorganization(y,x) and concept:organizationterminatedperson(z,y) implies concept:worksfor(x,z) \\
& concept:subpartoforganization(x,x) implies concept:acquired(x,x) \\
& concept:televisionstationaffiliatedwith(y,x) and concept:organizationhiredperson(z,y) implies concept:organizationhiredperson(x,y) \\
& concept:organizationterminatedperson(x,y) implies concept:topmemberoforganization(y,y) \\
& concept:headquarteredin(x,y) and concept:organizationhiredperson(y,z) implies concept:headquarteredin(z,y) \\
& concept:subpartoforganization(x,y) and concept:subpartoforganization(z,y) implies concept:worksfor(x,z) \\

\bottomrule
\end{tabular}
\caption{Randomly sampled sound rules from monotonic max GNNs using a RESCAL decoder.}
\label{tab:results:sample_rules}
\end{table*}

\end{document}